\title{Stein Variational Gradient Descent:\\
many-particle and long-time asymptotics
}
\author[1]{Nikolas N\"usken}
\author[2]{D.R. Michiel Renger}
\date{\today}
\affil[1]{Institute of Mathematics, Universit\"at Potsdam, 14476 Potsdam, Germany, \href{mailto:nuesken@uni-potsdam.de}{nuesken@uni-potsdam.de}}
\affil[2]{WIAS Berlin, Mohrenstrasse 39, 10117 Berlin, Germany,
\href{mailto:renger@wias-berlin.de}{renger@wias-berlin.de}}
\DeclareMathOperator\Grad{grad}
\newcommand{\wsconv}[1]{\stackrel[{#1}]{\ast\,\pi}{\rightharpoonup}}
\newcommand{\super}[1]{^{\scriptscriptstyle{(#1)}}}
\DeclareMathOperator*{\argmin}{arg\,min}
\newcommand{\dd}{\mathrm{d}}
\newcommand{\I}{\mathcal{I}}
\newcommand{\KK}{\mathbb{K}}
\newcommand{\Hcal}{\mathcal{H}}
\newcommand*{\Nb}{\mathbb{N}}
\newcommand{\T}{\mathcal{T}}
\newcommand{\Qcal}{\mathcal{Q}}
\newcommand{\PP}{\mathbb{P}}
\newcommand{\Pcal}{\mathcal{P}}
\newcommand{\Stein}{\mathrm{Stein}}
\newcommand{\A}{\mathcal{A}}
\newcommand{\Bcal}{\mathcal{B}}
\DeclareMathOperator*{\F}{\mathcal{F}}
\DeclareMathOperator*{\R}{\mathbb{R}}
\DeclareMathOperator*{\KL}{\operatorname{KL}}
\DeclareMathOperator{\Ham}{\mathcal{H}}
\theoremstyle{plain}
\newtheorem{theorem}{Theorem}[section]
\newtheorem{lemma}[theorem]{Lemma}
\newtheorem{corollary}[theorem]{Corollary}
\newtheorem{proposition}[theorem]{Proposition}
\newtheorem{assumption}{Assumption}
\newtheorem{result}[theorem]{Informal Result}
\theoremstyle{definition}
\newtheorem{definition}[theorem]{Definition}
\newtheorem{example}[theorem]{Example}
\theoremstyle{remark}
\newtheorem{remark}[theorem]{Remark}
\begin{document}
    
\maketitle
    
\begin{abstract}

Stein variational gradient descent (SVGD) refers to a class of methods for Bayesian inference based on interacting particle systems. In this paper, we consider the originally proposed  deterministic dynamics as well as a stochastic variant, each of which represent one of the two main paradigms in Bayesian computational statistics: \emph{variational inference} and \emph{Markov chain Monte Carlo}. As it turns out, these are tightly linked through a correspondence between gradient flow structures and large-deviation principles rooted in statistical physics. To expose this relationship, we develop the cotangent space construction for the Stein geometry, prove its basic properties, and determine the large-deviation functional governing the many-particle limit for the empirical measure. Moreover, we identify the \emph{Stein-Fisher information} (or \emph{kernelised Stein discrepancy}) as its leading order contribution in the long-time and many-particle regime in the sense of $\Gamma$-convergence, shedding some light on the  finite-particle properties of SVGD. Finally, we establish a comparison principle between the Stein-Fisher information and RKHS-norms that might be of independent interest.
\\

\noindent \textbf{Keywords:} Stein variational gradient descent, gradient flows, large deviations.
\end{abstract}

\section{Introduction}
\label{sec:intro}

Approximating high-dimensional probability distributions is a key challenge in many applications such as Bayesian inference or computational statistical physics. The \emph{target measure} of interest is typically given in the form
\begin{equation}
\label{eq:target}
  \pi = \frac{1}{Z} e^{-V} \, \mathrm{d}x
\end{equation}
in a high dimensional state space $\mathbb{R}^d$, where $Z = \int_{\mathbb{R}^d} e^{-V} \, \mathrm{d}x$ is a numerically intractable normalisation constant, and $V \in C^1(\mathbb{R}^d;\mathbb{R})$ is referred to as the \emph{potential}. Common algorithmic approaches can broadly be classified according to the following two paradigms:
\\\\
\emph{Variational inference} (VI) \cite{bishop2006pattern,blei2017variational,zhang2018advances} relies on a (parameterised) family of distributions $\mathcal{D} = \{\rho_{\phi}: \, \phi \in \Phi\}$, attempting to find an approximation $\rho^* \approx \pi$ by minimising the Kullback-Leibler divergence towards the target:
\begin{equation}
\label{eq:KL optimisation}
    \rho^* = \argmin_{\rho \in \mathcal{D}} \KL(\rho|\pi).
\end{equation}
While the accuracy of VI is limited by the expressivity of $\mathcal{D}$, the optimisation problem \eqref{eq:KL optimisation} can often be solved efficiently and at scale using modern (stochastic) gradient descent type algorithms \cite[Chapter 8]{Goodfellow-et-al-2016}.
\\\\
\emph{Markov Chain Monte Carlo} (MCMC) \cite{brooks2011handbook,robert2013monte} techniques, on the other hand, are asymptotically exact, being based on judiciously designed ergodic Markov processes $(X_t)_{t \ge 0}$ that admit $\pi$ as their invariant measure. The target is obtained as an appropriate limit of a long-time ergodic average:
\begin{equation}
\label{eq:ergodic limit}
    \pi = \lim_{T \rightarrow \infty} \frac{1}{T} \int_0^T \delta_{X_t} \, \mathrm{d}t.
\end{equation}
Accompanying convergence guarantees typically make inferences resting on MCMC more reliable than those based on VI. However, MCMC is challenging to parallelise and, furthermore, in high-dimensional settings it is often frustrated by slow convergence in \eqref{eq:ergodic limit} due to time correlations in $(X_t)_{t \ge 0}$.    
\\\\
Recently, there has been a growing interest in developing hybrid approaches that hold the promise of combining the advantages of MCMC and VI, see, for instance \cite{hoffman2017learning,maddison2017filtering,naesseth2018variational,ruiz2019contrastive,salimans2015markov}. 
Various attempts in this direction can be grouped into the so-called \emph{particle optimisation techniques} \cite{ambrogioni2018wasserstein,chen2017particle,chen2018unified,liu2019understanding} that posit carefully designed  dynamical schemes for an ensemble of particles $\bar{X} = (X^1,\ldots,X^N) \in (\mathbb{R}^d)^N$. From the VI-perspective, the variational family is then given by the empirical measures associated to the particles, $\mathcal{D} = \{\tfrac{1}{N}\sum^N_{i=1} \delta_{X^i}\}$, with the parameter set $\Phi$ corresponding to the positions of these particles. In terms of MCMC, the dynamics of $(\bar{X}_t)_{t \ge 0}$ can often be at least approximately thought of as a Markov process approaching an extended target $\bar{\pi}$ on $(\mathbb{R}^d)^N$ whose marginals coincide with $\pi$. 
\\

An appealing theoretical framework for analysing and constructing these particle-based methods is provided by the theory of gradient flows on probability distributions \cite{ambrosio2008gradient,otto2001geometry,otto2005eulerian}, connecting diffusions with $\mathrm{KL}$-optimisation problems of the form \eqref{eq:KL optimisation} on the grounds of differential geometric ideas. In this regard, the prime example (and also historically the first one where these concepts were layed out, see \cite{jordan1998variational}) is given by the overdamped Langevin dynamics \cite[Section 4.5]{pavliotis2014stochastic}, the associated Fokker-Planck equation of which takes the form of a gradient flow evolution driven by the $\mathrm{KL}$-divergence in the geometry induced by the quadratic Wasserstein distance. Recently, similar ideas have been pursued, replacing either the driving functional or the underlying geometry, see, for instance, \cite{arbel2019maximum,duncan2019geometry,garbuno2019interacting,garbuno2019affine,liu2017stein,reich2013ensemble,trillos2020bayesian}. 
\\

In statistical physics, gradient flow structures have been shown to play a major role in understanding the fluctuations of associated (stochastic) interacting particle systems \cite{mielke2016generalization,Onsager1931I, Onsager1953MachlupI} as described by the theory of large deviations. In this paper paper we utilise the correspondence between gradient flow structures and large-deviation functionals to shed some light on the connection between VI and MCMC in the context of a particular particle optimisation scheme, namely Stein variational gradient descent.  

\subsection{Stein Variational Gradient Descent}
\label{subsec:intro Stein}

Following the VI-paradigm, \emph{Stein variational gradient descent} (SVGD) was first derived in 
\cite{liu2016stein} from a minimising movement scheme for an ensemble of particles, seeking to iteratively solve the problem \eqref{eq:KL optimisation} for the corresponding empirical measure, while at the same time constraining the driving vector field to be chosen from within the unit ball of a reproducing kernel Hilbert space (RKHS)\footnote{Even though the $\mathrm{KL}$-divergence between the empirical measure and $\pi$ is not defined (or infinite), this statement can be made precise using the closely related \emph{kernelised Stein discrepancy} \cite{liu2016kernelized}.}. 
The method can be described by the following coupled system of ODEs, where $k: \mathbb{R}^d \times \mathbb{R}^d \rightarrow \mathbb{R}$ is a positive definite kernel of sufficient regularity\footnote{We refer to Section \ref{sec:notation} for precise assumptions.} and $\bar{X}_t = (X_t^1, \ldots X_t^N) \in (\mathbb{R}^d)^N$ denotes the ensemble of particles:
\begin{equation}
      \frac{\mathrm{d}X_t^i}{\mathrm{d}t} = \mfrac{1}{N} \sum_{j=1}^N \left( - k(X_t^i, X_t^j) \nabla V(X_t^j) + \nabla_{X_t^j} k(X_t^i, X_t^j) \right), \qquad i = 1,\ldots,N.
\label{eq:ODE}
\end{equation}

Crucial to this approach is the observation that the corresponding empirical measure 
\begin{equation}
\label{eq:empirical measure}
  \rho\super{N}_t := \frac{1}{N} \sum_{i=1}^N \delta_{X_t^i}
\end{equation}
converges to the target $\pi$ in an appropriate sense as both $N\rightarrow \infty$ and $t \rightarrow \infty$, see \cite{lu2019scaling} for rigorous statements.

In \cite{gallego2018stochastic}, the authors proposed to augment \eqref{eq:ODE} and obtained the interacting system of stochastic differential equations (SDEs) 
\begin{equation}
  \mathrm{d}X_t^i = \frac{1}{N} \sum_{j=1}^N \left[ - k(X_t^i, X_t^j) \nabla V(X_t^j) + \nabla_{X_t^j} k(X_t^i, X_t^j) \right] \mathrm{d}t + \sum_{j=1}^N \sqrt{2\mathcal{K}(\bar{X}_t)}_{ij} \, \mathrm{d}W_t^j,  \qquad i =1, \ldots, N,
  \label{eq:SDE}
\end{equation}
where the matrix-valued function $\mathcal{K}: (\mathbb{R}^{d})^N\rightarrow \mathbb{R}^{dN \times dN}$ consists of $N^2$ blocks of size $d \times d$, given 
by
$
  \mathcal{K}_{ij}(\bar{x}) = \frac{1}{N}k(x_i,x_j)I_{d\times d}
$, for $i,j \in \{1, \ldots, d\}$ and $\bar{x} = (x_1,\ldots,x_N)$.
Here, $(W^j_t)_{t \ge 0}$, $j=1,\ldots,N$ denotes a collection of $d$-dimensional standard Brownian motions, and $\sqrt{\mathcal{K}(\bar{X}_t)}$ refers to the matrix square root. The noise contribution $\sum_{j=1}^N \sqrt{2\mathcal{K}(\bar{X}_t)}_{ij} \, \mathrm{d}W_t^j$ has been designed so as to make the product measure $\bar{\pi}$ on $(\mathbb{R}^{d})^N$ with Lebesgue density
\begin{equation}
  \bar{\pi}(\bar{x}) = \pi(x_1) \cdot \ldots \cdot \pi(x_N), \qquad \qquad \bar{x} = (x_1,\ldots,x_N),
\end{equation}
invariant for the dynamics \eqref{eq:SDE}. In fact, under reasonable assumptions, \cite[Proposition 3]{duncan2019geometry} shows that \eqref{eq:SDE} is indeed ergodic with respect to $\bar{\pi}$, meaning that the associated empirical measure converges to $\pi$ as $t \rightarrow \infty$ (for instance, in total variation distance).
These observations show that the process $(\bar{X}_t)_{t \ge 0}$ solving \eqref{eq:SDE} can indeed be considered of MCMC-type, targeting $\bar{\pi}$. Indeed, \eqref{eq:SDE} can be cast in the framework of \cite{ma2015complete} as pointed out in \cite{gallego2018stochastic}. 

\subsection{A connection between MCMC and VI rooted in statistical physics}
\label{subsec:statphys}

One of the main topics in this article is the connection between the ODE \eqref{eq:ODE} and the SDE \eqref{eq:SDE}. First of all, the empirical measures associated to the solutions of  \eqref{eq:ODE} and \eqref{eq:SDE} become indistinguishable in the limit as $N \rightarrow \infty$, that is, the noise term $\sum_{j=1}^N \sqrt{2\mathcal{K}(\bar{X}_t)}_{ij} \, \mathrm{d}W_t^j$ becomes negligible. This claim can be substantiated in the sense that the \emph{Stein PDE} \cite{liu2017stein,lu2019scaling} 
\begin{equation}
\label{eq:Stein pde}
  \partial_t \rho_t(x) = \nabla_x \cdot \left( \rho_t(x) \int_{\mathbb{R}^d} \left[ k(x,y) \nabla V(y) - \nabla_y k(x,y)  \right]\,\rho_t(\mathrm{d}y) \right)
\end{equation}
describes the evolution of the empirical measure $\rho_t^{(N)}$ for both \eqref{eq:ODE} and \eqref{eq:SDE} in the mean field regime, that is, when $N \rightarrow \infty$.
To be more precise, for any fixed $N\in\mathbb{N}$, the empirical measure $\rho\super{N}_t$ associated to the ODE~\eqref{eq:ODE} satisfies \eqref{eq:Stein pde} in a weak sense, see \cite[Prop.~ 2.5]{lu2019scaling}, and, moreover, stability arguments show that this statement can be extended to the limit $N \rightarrow \infty$, see \cite[Theorem 2.7]{lu2019scaling}. Concerning the SDE \eqref{eq:SDE}, the additional noise term has been shown to be of order $\mathcal{O}(\tfrac{1}{N})$ in \cite[Proposition 3]{duncan2019geometry} and thus the corresponding empirical measure $\rho_t^{(N)}$ formally satisfies \eqref{eq:Stein pde} in the limit as $N\rightarrow \infty$. In this paper the latter convergence will be made more quantitative in terms of a corresponding large-deviation functional. 
\\

The Stein PDE~\eqref{eq:Stein pde} admits a gradient flow structure, described in~\cite{liu2017stein} and further analysed in \cite{duncan2019geometry}, that is, it can be written in the form $\partial_t \rho_t=-\Grad_{k}\KL(\rho_t)$, where $\KL$ is the Kullback-Leibner divergence or relative entropy towards $\pi$, and the gradient is with respect to a particular geometry determined by the kernel $k$; we shall make these terms more precise in Section~\ref{sec:GF}.
The gradient flow structure referred to above is not uniquely determined by \eqref{eq:Stein pde}; in fact the existence of one particular gradient flow structure implies that the PDE \eqref{eq:Stein pde} admits infinitely many gradient flow structures \cite{dietert2015characterisation}. For a particular example  see~ \cite{chewi2020svgd}, replacing the $\KL$- by the $\chi^2$-divergence. Our first main result shows that the $\KL$-gradient flow structure is naturally connected to the noise contribution in \eqref{eq:SDE}, bridging between the MCMC and VI viewpoints:
\begin{result}The gradient flow structure 
\begin{equation}
\partial_t \rho_t=-\Grad_{k}\KL(\rho_t)
\label{eq:GF structure}
\end{equation}
for the Stein PDE~\eqref{eq:Stein pde} (see Section~\eqref{sec:GF}) is compatible with the particular form of the noise in the SDE~\eqref{eq:SDE}.
\label{res:ldp and GF}
\end{result}
This statement will be made precise in Section~\ref{sec:connection}, resting on a reformulation of the Stein PDE \eqref{eq:Stein pde} in terms of a variational (in-)equality  (see Proposition \ref{prop:EDE}) and the large-deviation functional for the $N \rightarrow \infty$ limit associated to the SDE \eqref{eq:SDE}, see Theorem \ref{th:large deviations}. 
Intuitively, both the gradient flow scheme \eqref{eq:GF structure} and the large-deviation functional related to the noise structure in \eqref{eq:SDE} encode information that goes beyond what is described by the Stein PDE \eqref{eq:Stein pde}: The formulation \eqref{eq:GF structure} determines a specific non-unique `factorisation' of the right-hand side of \eqref{eq:Stein pde} into the geometric term $\mathrm{grad}_k$ and the driving functional $\mathrm{KL}$, while the SDE \eqref{eq:SDE} determines a non-unique\footnote{However, the noise contribution in \eqref{eq:SDE} is canonical in Bayesian inference as it ensures ergodicity with respect to the extended target $\bar{\pi}$.} stochastic augmentation of \eqref{eq:ODE}. The Informal Result \ref{res:ldp and GF} establishes a correspondence between those extensions of \eqref{eq:Stein pde} rooted in statistical physics; this general principle can be seen as a modern version of Onsager's reciprocity relation~\cite{mielke2014relation,Onsager1931I}.

\subsection{Speed of convergence, kernel choice and Stein-Fisher information}

From the practical perspective of minimising the computational cost, a central question is how to choose $k$ in such a way that the convergence $\rho_t^{(N)} \rightarrow \pi$ as $N\rightarrow \infty$ and $t \rightarrow \infty$ 
occurs `as rapidly as possible', that is, in such a way that $\rho_t\super{N}$ can provide a reasonable approximation of $\pi$ for $t$ and $N$ not too large. In \cite{duncan2019geometry}, the authors used convexity arguments along the geodesics induced by the Stein geometry for studying the $t\rightarrow \infty$ limit of the PDE \eqref{eq:Stein pde}, that is, for the study of the long-time behaviour in the many-particle regime.
\\

In the present paper, we complement those results, quantifying the speed of convergence for the random dynamics \eqref{eq:SDE} as $N\rightarrow \infty$ using the theory of large deviations (see Section \ref{sec:LD}). As a consequence of the Informal Result \ref{res:ldp and GF}, the relevant functional admits an elegant formulation in terms of the Stein geometry (see Theorem \ref{th:large deviations}). Although the methods presented in this paper concern the SDE system \eqref{eq:SDE},
these results allow us to gain some intuition into finite-particle effects for the deterministic system \eqref{eq:ODE} on a heuristic level  (see Section \ref{sec:comparing fisher}).
\\

In order to gain further insight and in particular to derive
practical guidelines for the choice of $k$, we next identify the leading order term in the large-deviation functional when $t$ is large, where limits 
are performed in the sense of Gamma convergence.
In this regime, both the many-particle as well as the long-time asymptotics turn out be closely related to the \emph{Stein-Fisher information}
\begin{subequations}
\label{eq:Fisher}
\begin{align}
\label{eq:Fisher1}
    I^{k}_\Stein(\rho) & = \int_{\mathbb{R}^d} \int_{\mathbb{R}^d} \nabla \frac{\mathrm{d}\rho}{\mathrm{d}\pi} (x) \cdot k(x,y) \nabla \frac{\mathrm{d} \rho}{\mathrm{d}\pi}(y) \, \pi(\mathrm{d}x) \pi(\mathrm{d}y)
    \\
    & =
    \label{eq:Fisher log}
    \int_{\mathbb{R}^d}
    \int_{\mathbb{R}^d}
    \left(\nabla \log \frac{\mathrm{d} \rho}{\mathrm{d}\pi} \right)(x) \cdot  k(x,y) \left(\nabla \log \frac{\mathrm{d} \rho}{\mathrm{d}\pi} \right)(y) \,
    \pi(\mathrm{d}x) \pi(\mathrm{d}y),
\end{align}
\end{subequations}
a quantity that has natural links with the cotangent space construction to be introduced in Section \ref{subsec:cotangent}. Let us also note that $I^k_{\mathrm{Stein}}(\rho)$ is known in other contexts as the \emph{kernelised Stein discrepancy} $\mathrm{KSD}(\rho|\pi)$ and has found various applications in scenarios where $\rho$ needs to be compared to an unnormalised\footnote{Indeed, \eqref{eq:Fisher log} shows that $I^k_{\mathrm{Stein}}(\rho)$ can be computed from $\pi = \tfrac{1}{Z}e^{-V}$ without knowing the potentially intractable normalisation constant $Z$.} distribution $\pi$, see \cite{chwialkowski2016kernel,fisher2020measure,gorham2017measuring}. In fact, the kernelised Stein discrepancy lies as the heart of the original derivation of SVGD, see \cite{liu2016stein}.
We summarise our findings in the following informal statement (to be explained and justified in Section \ref{sec:Fisher}).
\begin{result}
\label{res:informal}
The Stein-Fisher information~$I_{\mathrm{Stein}}^k$ controls the speed of convergence of the empirical measure associated to the SDE~\eqref{eq:SDE} in the regime when $N$ and $t$ are large.
As a consequence, letting $k_1,k_2:\mathbb{R}^d \times \mathbb{R}^d \rightarrow \mathbb{R}$ be two positive definite kernels with corresponding empirical measures $\rho_t^{(N),k_1}$ and $\rho_t^{(N),k_2}$ as defined in \eqref{eq:empirical measure} and \eqref{eq:SDE}, if
\begin{align}
\label{eq:Fisher comparison}
    I^{k_1}_{\Stein}(\rho) \ge I^{k_2}_{\Stein}(\rho), 
\end{align}
for all $\rho$ such that \eqref{eq:Fisher1} is well defined,
then the convergence of $\rho_t^{(N),k_1}$ towards $\pi$ as $N \rightarrow \infty$ and $t \rightarrow \infty$ is expected to be faster than the corresponding convergence of $\rho_t^{(N),k_2}$.
\end{result}

The preceding result applies when $N$ is large, but not infinite, hence taking a step towards understanding the finite-particle properties of SVGD.
Naturally, our two main results are strongly related. Indeed, the fact that the Stein-Fisher information~\eqref{eq:Fisher} controls the speed of convergence in both the $t \rightarrow \infty$ and $N\rightarrow \infty$ limits is ultimately a consequence of the compatibility between the gradient flow and noise structures expressed in the Informal Result \ref{res:ldp and GF}.
Let us state straight away that the comparison \eqref{eq:Fisher comparison} can be made on the basis of the reproducing kernel Hilbert spaces (RKHS) associated to $k_1$ and $k_2$. 
More precisely, we shall prove the following result (see Section \ref{sec:Fisher}).
\begin{proposition}
\label{prop:Fisher comparison}
Let $k_1,k_2:\mathbb{R}^d \times \mathbb{R}^d \rightarrow \mathbb{R}$ be two positive definite kernels satisfying Assumptions \ref{ass:k}, \ref{ass:bounded}, and \ref{ass:ISPD} below, 
and denote by $\mathcal{H}_{k_1}$ and $\mathcal{H}_{k_2}$ the corresponding reproducing kernel Hilbert spaces. Furthermore, assume that $V$ satisfies Assumption \ref{ass:V}.
Then the following are equivalent:
\begin{enumerate}
    \item
    \label{it:Fish comparison}
    \emph{Stein-Fisher comparison:} The inequality \eqref{eq:Fisher comparison} holds for all $\rho$ such that \eqref{eq:Fisher1} is well defined,
    \item
    \emph{Inclusion of RKHS-balls:}
    \label{it:RKHS balls}
    It holds that $\mathcal{H}_{k_2} \subset \mathcal{H}_{k_1}$ and
    \begin{align}
    \Vert \phi \Vert_{\mathcal{H}_{k_2}} \le
    \Vert \phi \Vert_{\mathcal{H}_{k_1}},
    &&\text{for all } \phi \in \mathcal{H}_{k_1}.
    \end{align}
\end{enumerate}
\end{proposition}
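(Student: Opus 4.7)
The plan is to combine a dual variational characterisation of $I^k_{\Stein}$ with Aronszajn's classical domination theorem for reproducing kernel Hilbert spaces. Writing $\iota_k : \mathcal{H}_k^d \hookrightarrow L^2(\pi)^d$ for the canonical inclusion of the vector-valued RKHS (well defined under Assumptions \ref{ass:k} and \ref{ass:bounded}), Cauchy--Schwarz in $\mathcal{H}_k^d$ combined with an integration by parts yields the identity
\[
I^{k}_{\Stein}(\rho) \;=\; \bigl\|\iota_k^\ast \nabla \tfrac{d\rho}{d\pi}\bigr\|_{\mathcal{H}_k^d}^2 \;=\; \sup_{\phi \in \mathcal{H}_k^d\setminus\{0\}} \frac{\bigl|\int \phi(x)\cdot \nabla \tfrac{d\rho}{d\pi}(x)\,\pi(dx)\bigr|^2}{\|\phi\|_{\mathcal{H}_k^d}^2},
\]
with supremum attained at $\phi^\star = \iota_k^\ast \nabla(d\rho/d\pi)$, and where the numerator equals $\E_\rho[\A_\pi \phi]$ for the Langevin Stein operator $\A_\pi\phi := \nabla\cdot\phi - \phi\cdot\nabla V$.

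Given this identity, the direction (\ref{it:RKHS balls})$\Rightarrow$(\ref{it:Fish comparison}) is immediate: the RKHS-ball inclusion stated in item (\ref{it:RKHS balls}) means the closed unit ball of $\mathcal{H}_{k_2}^d$ is contained in that of $\mathcal{H}_{k_1}^d$, so the $k_1$-supremum is computed over a set at least as large as its $k_2$-counterpart, and $I^{k_1}_{\Stein}(\rho) \ge I^{k_2}_{\Stein}(\rho)$ for every admissible $\rho$.

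For the converse (\ref{it:Fish comparison})$\Rightarrow$(\ref{it:RKHS balls}) I would probe the hypothesis with the perturbations $d\rho_\varepsilon/d\pi = 1+\varepsilon m$, $m\in C_c^\infty(\mathbb{R}^d)$ of zero $\pi$-mean. At leading order $\varepsilon^2$ the inequality $I^{k_1}_{\Stein}(\rho_\varepsilon)\ge I^{k_2}_{\Stein}(\rho_\varepsilon)$ collapses to
\[
\iint \nabla m(x)\cdot(k_1-k_2)(x,y)\,\nabla m(y)\,\pi(dx)\pi(dy) \;\ge\; 0,
\]
and by translation-invariance of the gradient the mean-zero constraint on $m$ is superfluous. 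By Aronszajn's domination theorem, the RKHS-ball inclusion of item (\ref{it:RKHS balls}) is equivalent to $k_1 - k_2$ being itself a positive definite kernel on $\mathbb{R}^d$, so it remains to promote the preceding gradient-constrained inequality to genuine positive definiteness of $k_1 - k_2$.

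The main obstacle is precisely this promotion step, because the curl-free vector fields $\nabla m$ do not span $L^2(\pi)^d$. I would bypass the constraint in two stages. First, test functions of the tensorised form $m(x)=\psi(x_1)\chi_L(x_2,\ldots,x_d)$, with $\chi_L$ a slowly-varying cutoff, upon letting $L\to\infty$ suppress the $\partial_\ell m$ contributions for $\ell\ge 2$ and yield $\iint\psi'(x_1)(k_1-k_2)(x,y)\psi'(y_1)\,\pi(dx)\pi(dy)\ge 0$ for every $\psi\in C_c^\infty(\mathbb{R})$, and symmetrically in each other coordinate. Second, under Assumption \ref{ass:V} the set of derivatives $\{\psi':\psi\in C_c^\infty(\mathbb{R})\}$ is dense in $L^2(\pi_1)$ (with $\pi_1$ the relevant marginal), while Assumption \ref{ass:ISPD} supplies the nondegeneracy needed to test against Dirac combinations; a mollification argument approximating $\sum_i c_i\delta_{x_i}/\pi(x_i)$ by smooth densities of the previous form then delivers $\sum_{i,j}c_ic_j(k_1-k_2)(x_i,x_j)\ge 0$ for every finite collection of points and weights, which through Aronszajn closes the argument.
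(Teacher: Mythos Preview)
Your direction (\ref{it:RKHS balls})$\Rightarrow$(\ref{it:Fish comparison}) is correct and coincides with the paper's argument: both express $I^k_{\Stein}(\rho)$ as the supremum of a fixed linear functional over the unit ball of $\mathcal{H}_k^d$, so the ball inclusion gives the inequality immediately. The paper phrases this through the cotangent norm $\|\cdot\|_{T_\rho^*M}$ and its dual characterisation via the Helmholtz decomposition (Proposition~\ref{prop:helmholtz}), but the content is identical.

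For the converse (\ref{it:Fish comparison})$\Rightarrow$(\ref{it:RKHS balls}) your route via Aronszajn's theorem diverges from the paper, and the promotion step has a genuine gap in dimension $d\ge 2$. After your Step~1 you obtain, for each coordinate $\ell$, the inequality $\iint g(x_\ell)(k_1-k_2)(x,y)\,g(y_\ell)\,\pi(dx)\pi(dy)\ge 0$; this is a statement about the \emph{marginalised} difference kernel $(u,v)\mapsto\iint (k_1-k_2)(x,y)\,\pi(dx\,|\,x_\ell=u)\,\pi(dy\,|\,y_\ell=v)$ on $\R$, not about $k_1-k_2$ on $\R^d$. A function of the form $g(x_\ell)$ is constant on each hyperplane $\{x_\ell=c\}$ and cannot localise near a single point of $\R^d$, so no mollification of $\sum_i c_i\delta_{x_i}/\pi(x_i)$ by functions ``of the previous form'' is available. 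Even if you return to the full gradient inequality $\sum_\ell\tilde Q(\partial_\ell m)\ge 0$ with $\tilde Q(f)=\iint f(x)(k_1-k_2)(x,y)f(y)\,\pi(dx)\pi(dy)$, there is no mechanism forcing the individual summand $\tilde Q(\partial_1 m)$ to be nonnegative once $m$ depends on several coordinates, and that is precisely what you would need to reach an arbitrary $f$.

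The paper avoids Aronszajn altogether and instead runs the duality of Section~\ref{subsec:cotangent} in the reverse direction. Since (\ref{it:Fish comparison}) amounts to the cotangent-norm comparison $\|\psi\|_{T^*_\rho M_1}\ge\|\psi\|_{T^*_\rho M_2}$, and since for $v$ in the gradient closure $\overline{\T_{k,\rho}\nabla C_c^\infty}^{\mathcal{H}_k^d}$ one has $\|v\|_{\mathcal{H}_k^d}=\sup_{\psi}\int v\cdot\nabla\psi\,d\rho\big/\|\psi\|_{T_\rho^*M}$, the RKHS-norm comparison is read off by taking duals. This sidesteps the attempt to establish positive definiteness of $k_1-k_2$ directly and is the natural companion to the (\ref{it:RKHS balls})$\Rightarrow$(\ref{it:Fish comparison}) argument you already have.
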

We refer the reader to Section \ref{sec:comparing fisher} for a proof, and to Section \ref{sec:examples} for an illustration of this result. Noting that the Stein-Fisher information coincides with the kernelised Stein discrepancy $\mathrm{KSD}(\rho|\pi)$, Proposition \ref{prop:Fisher comparison} might be of independent interest. 

\subsubsection*{Previous work}

Stein variational gradient descent in its original deterministic form \eqref{eq:ODE} was put forward in the seminal paper \cite{liu2016stein}. The stochastic variant \eqref{eq:SDE} was proposed in \cite{gallego2018stochastic} and shown to be ergodic in \cite{duncan2019geometry}. The fact that the Stein PDE \eqref{eq:Stein pde} admits a gradient flow structure was first observed in \cite{liu2017stein}; the corresponding Stein geometry was further developed in \cite{duncan2019geometry}, focusing on curvature and the long-time convergence properties of \eqref{eq:Stein pde}. This analysis revealed the important role played by the Stein-Fisher information \eqref{eq:Fisher} and the associated Stein log-Sobolev inequality. Based on this, the authors of \cite{korba2020non} developed nonasymptotic bounds in discrete time as well as  propagation of chaos results (the latter of which unfortunately are not uniform in time). We would also like to mention the work \cite{lu2019scaling} that rigorously establishes well-posednedness as well as convergence of the Stein PDE \eqref{eq:Stein pde}, and the work \cite{chewi2020svgd} that establishes an alternative gradient flow structure to the one considered in this paper.

\subsubsection*{Our contributions and outline of the article}

In this article we make the following contributions:

\begin{itemize}
    \item 
    We complement the geometric constructions from \cite{duncan2019geometry}, defining appropriate cotangent spaces and inner products. The Stein-Fisher information \eqref{eq:Fisher} (or kernelised Stein discrepancy) is shown to have a natural interpretation in terms of this framework.
    \item
    We compute the large-deviation functional associated to the mean field limit of the SDE \eqref{eq:SDE} and show that it can be expressed conveniently in terms of the tangent norm in the Stein geometry.
    \item
    On the basis of the obtained large-deviation rate functional, we connect the $\mathrm{KL}$-gradient flow structure in \eqref{eq:GF structure} with the noise structure in \eqref{eq:SDE}, providing a correspondence between the VI-type scheme \eqref{eq:ODE}  and the MCMC-type scheme \eqref{eq:SDE}.
    \item
    We identify the leading order term in the large-deviation functional in the regime where $t$ is large to obtain a direct relation to the Stein-Fisher information \eqref{eq:Fisher}.
    We argue that at a heuristic level, this result provides insight into finite-particle properties of SVGD.  
\end{itemize}

The article is organised as follows. In Section \ref{sec:notation} we introduce essential notation and state our basic assumptions. Furthermore, we provide an overview of the relevant background on reproducing kernel Hilbert spaces. In Section \ref{subsec:formal Riemann}, we review the geometric constructions from \cite{duncan2019geometry}. In Section \ref{subsec:cotangent}, we extend this work by defining the cotangent structure and establish its basic properties. Furthermore, we provide a reformulation of the Stein PDE \eqref{eq:Stein pde} in terms of a variational (in-)equality. In Section \ref{sec:LD} we derive the large-deviation rate functional for the mean field limit, leveraging the framework introduced in Section \ref{sec:GF}. In Section \ref{sec:connection}, we explain the connection between gradient flows and large deviations and make the Informal Result \ref{res:ldp and GF} precise. In Section \ref{sec:Fisher}, we identify the Stein-Fisher information as the leading order term in the large-deviation rate functional, provide a precise statement of the Informal Result \ref{res:informal}, and prove Proposition \ref{prop:Fisher comparison}. Furthermore, we provide a numerical example that illustrates our results. 
Finally, we conclude the paper in Section \ref{sec:examples} and briefly discuss directions for future work.

\section{Preliminaries}
\label{sec:notation}
In this section, we introduce essential notations and assumptions that are used throughout this article. In addition, we briefly point out a few key results in the theory of reproducing kernel Hilbert spaces. For textbook accounts, the reader is referred to \cite{berlinet2011reproducing,saitoh2016theory,scholkopf2018learning,steinwart2008support}.

\subsection{Notation and general assumptions}

In order to ensure that both the target measure $\pi$ in \eqref{eq:target} as well as the dynamics \eqref{eq:ODE} and \eqref{eq:SDE} are well-defined, we assume that the given potential satisfies the following:
\begin{assumption}[Assumptions on $V$]
\label{ass:V}
The potential $V$ is continuously differentiable,
    $V \in C^1(\mathbb{R}^d)$, and $e^{-V}$ is integrable, $\int_{\mathbb{R}^d} e^{-V} \, \mathrm{d}x < \infty$.
\end{assumption}
The set of probability measures on $\mathbb{R}^d$ will be denoted by $\mathcal{P}(\mathbb{R}^d)$. For any $\rho \in \mathcal{P}(\mathbb{R}^d)$, the Hilbert space of $\rho$-square-integrable functions will be denoted by $L^2(\rho)$, with scalar product $\langle \phi,\psi \rangle_{L^2(\rho)} = \int_{\mathbb{R}^d} \phi \psi \, \mathrm{d}\rho$ and associated norm $\Vert \phi \Vert^2_{L^2(\rho)} = \langle \phi,\phi \rangle_{L^2(\rho)}$. Often, we will work with the following subset of probability measures, 
\begin{equation}
\label{eq:definition M}
  M := \left\{ \rho \in \mathcal{P}(\mathbb{R}^d) : \quad \rho \,\,\text{admits a smooth and strictly positive density with respect to the Lebesgue measure} 
\right\}.
\end{equation}
We later formally turn this set into a Riemannian manifold with an extended geodesic distance (allowing the value $\infty$) that depends on the choice of the kernel.

\subsection{Assumptions on kernels}

Throughout this paper, we work with one or more kernels that are always assumed to satisfy the following:

\begin{assumption}
\label{ass:k}
The kernel $k:\mathbb{R}^d \times \mathbb{R}^d \rightarrow \mathbb{R}$ is assumed to be symmetric, continuous, and continuously differentiable off the diagonal, that is, $k \in C^1(\mathbb{R}^d \times \mathbb{R}^d \setminus \{(x,y) \in \mathbb{R}^{2d}: \, x = y  \})$. Furthermore, $k$ is assumed to be positive definite, that is, for all $n \in \mathbb{N}$, $\alpha_1,\ldots,\alpha_n \in \mathbb{R}$ and $x_1, \ldots, x_n \in \mathbb{R}^d$ it holds that $\sum_{i,j \ge 1}^n \alpha_i \alpha_j k(x_i,x_j) \ge 0$.
\end{assumption}

\begin{assumption}
\label{ass:bounded}
The kernel $k$ is bounded.
\end{assumption}

\begin{assumption}
\cite{fukumizu2009kernel,sriperumbudur2010hilbert}
\label{ass:ISPD}
The kernel $k$ is \emph{integrally strictly positive definite} (ISPD), that is,
\begin{equation}
\int_{\mathbb{R}^d} \int_{\mathbb{R}^d} k(x,y) \, \rho(\mathrm{d}x) \, \rho(\mathrm{d}y) >0,
\end{equation}
for all signed Borel measures $\rho$ that are not the zero measure.
\end{assumption}

Let us comment on the foregoing assumptions. While Assumption \ref{ass:k} is fundamental (in that it is required for the construction of associated reproducing kernel Hilbert spaces (RKHS) as well as for defining all the terms in \eqref{eq:ODE} and \eqref{eq:SDE}), Assumptions \ref{ass:bounded} and \ref{ass:ISPD} are made in this paper for technical convenience. Indeed, the set-up in \cite{duncan2019geometry} encompasses unbounded kernels (but does require the weaker integrability condition $\int_{\mathbb{R}^d} k(x,x) \, \mathrm{d}\rho(x) < \infty$ for measures $\rho$ under consideration). Non-ISPD kernels have been considered in \cite{liu2018stein}, for instance, and could be included in our framework with more technical effort. Note that the ISPD Assumption~\ref{ass:ISPD} is a strengthened version of the positive definiteness in Assumption~\ref{ass:k}.

Examples of kernels satisfying Assumptions \ref{ass:k}, \ref{ass:bounded} and \ref{ass:ISPD} are given by the parametric family 
$k_{p,\sigma}:\mathbb{R}^d \times \mathbb{R}^d \rightarrow \mathbb{R}$, defined via
\begin{equation}
	\label{eq:p kernel}
	k_{p,\sigma}(x,y) = \exp\left(-\frac{\vert x - y \vert^p}{\sigma^p}\right),
\end{equation}
where  $p \in (0,2]$ is a smoothness parameter, and $\sigma > 0$ is called the kernel width (see \cite[Lemma 42]{duncan2019geometry}). Further examples are provided by the family of \emph{Mat{\'e}rn kernels} whose reproducing kernel Hilbert spaces coincide with the classical Sobolev spaces $W^{m,2}(\mathbb{R}^d)$ whenever $m$ and $d$ are such that $W^{m,2}(\mathbb{R}^d) \subset C(\mathbb{R}^d)$, see \cite[Section 1.3]{saitoh2016theory}.

\subsection{Reproducing kernel Hilbert spaces}

Given a positive definite kernel $k$,
we denote by $(\mathcal{H}_k, \langle \cdot, \cdot \rangle_{\mathcal{H}_k})$ the corresponding reproducing kernel Hilbert space (RKHS), see \cite[Section 4]{steinwart2008support}, and by $\Vert \cdot \Vert^2_{\mathcal{H}_k} = \langle \cdot, \cdot \rangle_{\mathcal{H}_k}$ the associated norm. This Hilbert space is characterised by the conditions that $k(x,\cdot) \in \mathcal{H}_k$ as well as $\langle f, k(x,\cdot) \rangle_{\mathcal{H}_k} = f(x)$, for all $x \in \mathbb{R}^d$ and $f \in \mathcal{H}_k$. If $\rho \in \mathcal{P}(\mathbb{R}^d)$ is a probability measure with full support, then Assumption \ref{ass:bounded} ensures that $\mathcal{H}_k \subset L^2(\rho)$, where moreover the natural inclusion is continuous, see \cite[Theorem 4.26]{steinwart2008support}, and Assumption \ref{ass:ISPD} guarantees that $\mathcal{H}_k \subset L^2(\rho)$ is dense, see \cite[Theorem 7]{sriperumbudur2010hilbert} and \cite[Theorem 4.26i)]{steinwart2008support}.

In order to characterise the norm $\Vert \cdot \Vert_{\mathcal{H}_k}$ more explicitly, it is helpful to introduce the operators $\T_{k,\rho}:L^2(\rho) \rightarrow L^2(\rho)$
\begin{equation}
\label{eq:T_krho}
  (\T_{k,\rho} \phi)(x) := \int_{\mathbb{R}^d} k(x,y) \phi(y) \rho(\mathrm{d}y), \qquad \phi \in L^2(\rho).
\end{equation}

We gather a number of properties of this operator that will be useful later on.
\begin{proposition} 
\label{prop:H properties}
For all $\rho\in M$,
\begin{enumerate}[label=\emph{(\alph*)}]
    \item $\T_{k,\rho} L^2(\rho)\subset\Hcal_k$, and 
    $\T_{k,\rho}:L^2(\rho) \rightarrow \mathcal{H}_k$ is the adjoint of the inclusion $\mathcal{H}_k \hookrightarrow L^2(\rho)$, that is
    \begin{equation}
    \label{eq:kernel trick}
    \langle \T_{k,\rho} \phi, \psi\rangle_{\mathcal{H}_k} = \langle \phi, \psi\rangle_{L^2(\rho)}, \qquad \phi \in L^2(\rho),\,\,\psi \in \mathcal{H}_k.
    \end{equation}
    \label{propit:T range}
    \item $\T_{k,\rho}$ is compact, self-adjoint and positive semi-definite on $L^2(\rho)$, 
    \label{propit:T compact}
    \item $\T_{k,\rho}$ is injective. \label{propit:T inj}
\end{enumerate}
\label{prop:T properties}
\end{proposition}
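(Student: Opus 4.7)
The three claims all rest on the reproducing property of $\mathcal{H}_k$ together with Assumptions \ref{ass:bounded} and \ref{ass:ISPD}. The plan is to prove them in the stated order, letting each build on the previous.

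For (a), I would first interpret the formal expression $\T_{k,\rho}\phi = \int k(\cdot,y)\,\phi(y)\,\rho(\mathrm{d}y)$ as a Bochner integral taking values in $\mathcal{H}_k$. Since $\|k(\cdot,y)\|_{\mathcal{H}_k} = \sqrt{k(y,y)}$ is uniformly bounded by Assumption \ref{ass:bounded}, Cauchy--Schwarz in $L^2(\rho)$ ensures $\int \|k(\cdot,y)\phi(y)\|_{\mathcal{H}_k}\,\rho(\mathrm{d}y) < \infty$, so the Bochner integral exists in $\mathcal{H}_k$. Evaluating at $x$ via the reproducing property recovers the pointwise definition \eqref{eq:T_krho}, yielding $\T_{k,\rho}L^2(\rho) \subset \mathcal{H}_k$. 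Pairing this Bochner integral with $\psi \in \mathcal{H}_k$ and using that the continuous functional $\langle \cdot,\psi\rangle_{\mathcal{H}_k}$ commutes with Bochner integration then gives $\langle \T_{k,\rho}\phi,\psi\rangle_{\mathcal{H}_k} = \int \psi(y)\phi(y)\,\rho(\mathrm{d}y) = \langle \phi,\psi\rangle_{L^2(\rho)}$, which is \eqref{eq:kernel trick}.

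For (b), self-adjointness on $L^2(\rho)$ follows from Fubini and symmetry of $k$, and compactness is immediate from the Hilbert--Schmidt criterion: $\int\!\int k(x,y)^2\,\rho(\mathrm{d}x)\rho(\mathrm{d}y) \le \|k\|_\infty^2 < \infty$ by Assumption \ref{ass:bounded}. For positive semi-definiteness I would substitute $\psi = \T_{k,\rho}\phi \in \mathcal{H}_k$ into the adjoint identity \eqref{eq:kernel trick} just established, obtaining
\[
  \langle \phi,\T_{k,\rho}\phi\rangle_{L^2(\rho)} \;=\; \|\T_{k,\rho}\phi\|_{\mathcal{H}_k}^2 \;\ge\; 0.
\]

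For (c), suppose $\T_{k,\rho}\phi = 0$ in $L^2(\rho)$ for some $\phi \in L^2(\rho)$. Pairing with $\phi$ and invoking the identity from (b) forces $\|\T_{k,\rho}\phi\|_{\mathcal{H}_k}^2 = 0$; expanding this squared norm via the Bochner-integral representation (or equivalently via symmetrising the adjoint identity) then yields $\int\!\int k(x,y)\phi(x)\phi(y)\,\rho(\mathrm{d}x)\rho(\mathrm{d}y) = 0$. Viewing $\phi\rho$ as a signed Borel measure, Assumption \ref{ass:ISPD} gives $\phi\rho \equiv 0$, and since $\rho \in M$ admits a strictly positive density with respect to Lebesgue measure, this means $\phi = 0$ in $L^2(\rho)$.

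The main technical nuisance I anticipate is the Bochner-integration step in (a): one needs to reconcile the $\rho$-a.e.\ pointwise formula \eqref{eq:T_krho} with a Hilbert-space-valued integral, and to justify the interchange of inner product and integration in the derivation of \eqref{eq:kernel trick}. Once this is in place, parts (b) and (c) reduce to essentially one-line manipulations invoking Assumptions \ref{ass:bounded} and \ref{ass:ISPD}.
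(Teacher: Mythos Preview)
Your proposal is correct and follows the same underlying ideas as the paper, though the paper itself simply defers parts (a) and (b) to \cite[Theorems 4.26 and 4.27]{steinwart2008support} rather than giving the Bochner-integral and Hilbert--Schmidt arguments you sketch. For part (c), your argument is essentially the paper's---pair $\T_{k,\rho}\phi=0$ against $\phi$ and invoke Assumption~\ref{ass:ISPD} on the signed measure $\phi\rho$---but you carry it out for general $\phi\in L^2(\rho)$ (using that $L^2(\rho)\subset L^1(\rho)$ for a probability measure, so $\phi\rho$ is a finite signed measure), whereas the paper's one-line proof is stated only for $\phi\in C_c^\infty(\mathbb{R}^d)$; in this respect your version is in fact more complete.
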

\begin{proof}
For \ref{propit:T range} and \ref{propit:T compact}, see \cite{steinwart2008support}, Theorems 4.26 and 4.27, respectively. For \ref{propit:T inj}, notice that $\T_{k,\rho} \phi = 0$, $\phi \in C_c^{\infty}(\mathbb{R}^d)$ implies $\phi = 0$ by integrating against $\phi \rho$ and using Assumption \ref{ass:ISPD}.
\end{proof}
\begin{remark}
The identity \eqref{eq:kernel trick} is a key calculational tool throughout the proofs in Section \ref{sec:GF} and can formally be viewed as a consequence of the defining identity $\langle k(x,\cdot),f\rangle_{\mathcal{H}_k} = f(x)$ after commuting integration and $\langle \cdot,\cdot \rangle_{\mathcal{H}_k}$.
\end{remark}

The scalar product in $\mathcal{H}_k$ can now be written in the form
\begin{equation}
\label{eq:T isometry}
    \langle f, g \rangle_{\mathcal{H}_k} = \langle \T_{k,\rho}^{-1/2} f, \T_{k,\rho}^{-1/2} g \rangle_{L^2(\rho)}, \qquad f,g \in \mathcal{H}_k, 
\end{equation}
where $\T_{k,\rho}^{-1/2}$ may be defined via the spectral theorem \cite[Chapter VII]{reed2012methods}. For instance, if $(e_i)_{i=1}^\infty \subset L^2(\rho)$ is an orthonormal eigenbasis of $\T_{k,\rho}$ (that is, $\langle e_i, e_j \rangle_{L^2(\rho)} = \delta_{ij}$ and $\T_{k,\rho} e_i = \lambda_i e_i$), then for $f= \sum_i f_i e_i$ and $g = \sum_i g_i e_i$ we have that
\begin{equation*}
  \langle f, g \rangle_{\mathcal{H}_k} = \sum_{i=1}^\infty \frac{1}{\lambda_i} f_i g_i, 
\end{equation*}
see \cite[Section 4.5]{steinwart2008support}.

Derived from $\mathcal{H}_k$ and $L^2(\rho)$, we will frequently make use of the corresponding spaces of vector fields $\mathcal{H}_k^d$ and $(L^2(\rho))^d$, defined through 
\begin{equation*}
\mathcal{H}^d_k = \underbrace{\mathcal{H}_k \otimes \ldots \otimes \mathcal{H}_k}_{d \, \text{times}} \qquad \text{and} \qquad (L^2(\rho))^d = \underbrace{L^2(\rho) \otimes \ldots \otimes L^2(\rho)}_{d \, \text{times}}.
\end{equation*}
In other words, $\mathcal{H}_k^d$ and $(L^2(\rho))^d$ consist of vector fields $v = (v_1,\ldots,v_d)$, with $v_i \in \mathcal{H}_k$ or $v_i \in L^2(\rho)$, respectively, with scalar products given by
\begin{align*}
    \langle v,w \rangle_{\mathcal{H}_k^d} = \sum_{i=1}^d \langle v_i,w_i \rangle_{\mathcal{H}_k}, \qquad v_i, w_i \in \mathcal{H}_k, 
        &&\text{and}&&
    \langle v, w \rangle_{(L^2(\rho))^d} = \sum_{i=1}^d \langle v_i,w_i\rangle_{L^2(\rho)}, \qquad v_i,w_i \in L^2(\rho).
\end{align*}
The operators $\T_{k,\rho}$ defined in \eqref{eq:T_krho} straightforwardly extend to the space $(L^2(\rho))^d$, interpreting \eqref{eq:T_krho} componentwise. Similarly, Proposition \ref{prop:H properties} as well as the identity \eqref{eq:T isometry} remain valid with the obvious modifications.
Finally, we will need the following result in the spirit of the usual Helmholtz-decomposition \cite{schweizer2018friedrichs}. 
\begin{proposition}[Helmholtz decomposition for RKHS]
\label{prop:helmholtz}
Let $\rho \in M$ and define the space of divergence-free vector fields
\begin{equation}
L^2_{\mathrm{div}}(\rho) = \left\{ v \in (L^2(\rho))^d: \quad \langle v, \nabla \phi \rangle_{(L^2(\rho))^d} = 0, \quad \text{for all } \phi \in C_c^{\infty}(\mathbb{R}^d)\right\}.
\end{equation}
Then $\mathcal{H}_k^d$ admits the following $\langle \cdot, \cdot \rangle_{\mathcal{H}^d_k}$-orthogonal decomposition,
\begin{equation}
    \mathcal{H}_k^d = \left( L^2_{\mathrm{div}}(\rho) \cap \mathcal{H}_k^d \right) \oplus \overline{\T_{k,\rho} \nabla C_c^\infty(\mathbb{R}^d)}^{\mathcal{H}_k^d}.
\end{equation}
\end{proposition}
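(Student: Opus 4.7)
The plan is to view this as a standard application of the orthogonal decomposition theorem in Hilbert spaces: letting $V := \overline{\T_{k,\rho} \nabla C_c^\infty(\mathbb{R}^d)}^{\mathcal{H}_k^d}$, I would identify the orthogonal complement $V^{\perp}$ inside $\mathcal{H}_k^d$ and show that it coincides exactly with $L^2_{\mathrm{div}}(\rho) \cap \mathcal{H}_k^d$. Note that $V$ is well defined as a closed subspace of $\mathcal{H}_k^d$: indeed, for $\phi \in C_c^\infty(\mathbb{R}^d)$ the gradient $\nabla \phi$ lies in $(L^2(\rho))^d$ (since $\rho$ is a probability measure), so by Proposition \ref{prop:H properties}\ref{propit:T range} (applied componentwise) we have $\T_{k,\rho}\nabla\phi \in \mathcal{H}_k^d$. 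Moreover, the intersection $L^2_{\mathrm{div}}(\rho) \cap \mathcal{H}_k^d$ is meaningful because the continuous inclusion $\mathcal{H}_k \hookrightarrow L^2(\rho)$ granted by Assumption \ref{ass:bounded} extends componentwise to $\mathcal{H}_k^d \hookrightarrow (L^2(\rho))^d$.

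The central computation is the componentwise version of the kernel trick \eqref{eq:kernel trick}: for any $v \in \mathcal{H}_k^d$ and $\phi \in C_c^{\infty}(\mathbb{R}^d)$,
\begin{equation*}
    \langle v, \T_{k,\rho}\nabla \phi \rangle_{\mathcal{H}_k^d} = \sum_{i=1}^d \langle v_i, \T_{k,\rho} \partial_i \phi \rangle_{\mathcal{H}_k} = \sum_{i=1}^d \langle v_i, \partial_i \phi \rangle_{L^2(\rho)} = \langle v, \nabla \phi \rangle_{(L^2(\rho))^d}.
\end{equation*}
This identity is the whole engine of the argument. Reading it from left to right, any $v \in V^{\perp}$ satisfies $\langle v, \nabla \phi \rangle_{(L^2(\rho))^d}=0$ for every $\phi \in C_c^{\infty}(\mathbb{R}^d)$, hence $v \in L^2_{\mathrm{div}}(\rho) \cap \mathcal{H}_k^d$. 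Conversely, if $v \in L^2_{\mathrm{div}}(\rho)\cap \mathcal{H}_k^d$, the same identity gives $\langle v, \T_{k,\rho}\nabla \phi \rangle_{\mathcal{H}_k^d} = 0$ for all such $\phi$, and since $V$ is the closure of $\T_{k,\rho}\nabla C_c^\infty(\mathbb{R}^d)$ in $\mathcal{H}_k^d$, continuity of the inner product extends this to $v \perp V$. Thus $V^{\perp} = L^2_{\mathrm{div}}(\rho)\cap \mathcal{H}_k^d$, and the orthogonal decomposition theorem yields the claim.

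There is no real obstacle here: the only subtlety is keeping the two ambient inner products straight and invoking the kernel trick \eqref{eq:kernel trick} in vectorial form, which is precisely the content of the remark following Proposition \ref{prop:T properties}. One minor point worth making explicit in the write-up is that $L^2_{\mathrm{div}}(\rho)\cap \mathcal{H}_k^d$ is automatically closed in $\mathcal{H}_k^d$ (being the intersection of $\mathcal{H}_k^d$ with a weakly closed subspace of $(L^2(\rho))^d$, pulled back along a continuous inclusion), so no additional closure is needed on the ``divergence-free'' side of the decomposition. Assumption \ref{ass:ISPD} plays no role in this particular statement; it is only Assumptions \ref{ass:k} and \ref{ass:bounded} that are used, through Proposition \ref{prop:H properties}\ref{propit:T range} and the continuity of $\mathcal{H}_k \hookrightarrow L^2(\rho)$.
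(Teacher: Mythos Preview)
Your argument is correct and is the natural one: the kernel trick \eqref{eq:kernel trick} converts $\mathcal{H}_k^d$-orthogonality to $\T_{k,\rho}\nabla\phi$ into $(L^2(\rho))^d$-orthogonality to $\nabla\phi$, so the orthogonal complement of $\overline{\T_{k,\rho}\nabla C_c^\infty(\mathbb{R}^d)}^{\mathcal{H}_k^d}$ is exactly $L^2_{\mathrm{div}}(\rho)\cap\mathcal{H}_k^d$. The paper itself does not give a proof here but simply cites \cite[Lemma~45]{duncan2019geometry}; your write-up is precisely the argument one would expect to find there, so there is nothing substantive to compare.
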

\begin{proof}
We refer to \cite[Lemma 45]{duncan2019geometry}.
\end{proof}

\section{The Stein PDE as a gradient flow}
\label{sec:GF}
In this section we recall and further analyse the \emph{Stein geometry} that allows us to formally write the Stein PDE \eqref{eq:Stein pde} as a gradient flow on probability distributions, as first observed in \cite{liu2017stein}. Subsection~\ref{subsec:formal Riemann} will mostly be a review of the Stein geometry as developed in \cite{duncan2019geometry}; in Subsection~\ref{subsec:cotangent} we complement the construction from \cite{duncan2019geometry} by defining appropriate cotangent spaces endowed with inner products; those turn out to be closely related to the Stein-Fisher information \eqref{eq:Fisher}. The duality between tangent and cotangent spaces gives rise to a variational reformulation of the Stein PDE \eqref{eq:Stein pde} in Proposition \ref{prop:EDE} that  will be instrumental in linking the large-deviation statement in Section \ref{sec:LD} to the gradient flow structure of the mean field limit. Analysing the Stein PDE \eqref{eq:Stein pde} using the geometric picture outlined in this section is very much inspired by the works of Otto and coworkers on the Fokker-Planck equation and its relation to the quadratic Wasserstein distance (see \cite{jordan1998variational,otto1998dynamics,otto2001geometry,otto2000generalization,otto2005eulerian} as well as the further developments in \cite{ambrosio2008gradient,gigli2012second} and \cite{daneri2008eulerian}). For a direct comparison between the Stein geometry and the Wasserstein geometry we refer the reader to \cite[Appendix A]{duncan2019geometry}.

Anticipating the constructions to follow in the remainder of this section, let us already in intuitive terms lay out the connections between the original idea from \cite{liu2016stein} and the central geometric concepts of the Stein geometry. In \cite{liu2016stein}, the authors construct the ODE \eqref{eq:ODE} as the continuous-time limit of a gradient descent scheme. More precisely, they consider an ensemble of particles, represented by the empirical measure $\rho^{(N)}$, and design a minimising movement scheme that aims at minimising the $\mathrm{KL}$-divergence between $\rho^{(N)}$ and the target $\pi$. The associated velocity field is constrained to be chosen from within the RKHS $\mathcal{H}_k^d$ and obtained from a variational principle that involves the corresponding RKHS-norm. As observed in \cite{liu2017stein} and further developed in \cite{duncan2019geometry}, this construction principle is linked to the observation that \eqref{eq:Stein pde} can be cast in the form
\begin{equation}
\label{eq:gradient flow}
    \partial_t \rho = -\mathbb{K}_\rho \frac{\delta \mathrm{KL}}{\delta \rho} =: -(\mathrm{grad}_k \mathrm{KL})(\rho),
\end{equation}
where $\mathrm{KL}$ denotes the Kullback-Leibler\footnote{For notational convenience later on, we adopt the notation $\mathrm{KL}(\rho) := \mathrm{KL}(\rho | \pi)$, suppressing the dependence on $\pi$.} divergence (or relative entropy) between the current distribution $\rho_t$ and the target $\pi$,
\begin{equation}
\label{eq:KL}
  \KL(\rho) = \int_{\mathbb{R}^d}\!\log \left(\frac{\mathrm{d}\rho}{\mathrm{d}\pi}\right)\mathrm{d}\rho = \int_{\mathbb{R}^d} V \, \mathrm{d}\rho + \int_{\mathbb{R}^d} \log \rho \, \mathrm{d}\rho + \log Z, \qquad \qquad \rho \in M,
\end{equation}
and $\KK_\rho$ is a positive definite `Onsager' operator that we introduce in~\eqref{eq:Onsager operator}. This operator defines the Stein-gradient $\mathrm{grad}_k := \mathbb{K}_\rho \frac{\delta}{\delta \rho}$, formalises the minimising movement scheme from \cite{liu2016stein} and can be seen to be induced by an appropriate definition of the tangent spaces $T_\rho M$ and corresponding (formal) Riemannian metric. The Onsager operators $\KK_\rho$ translate between the tangent and cotangent spaces defined below; indeed we have $\mathrm{grad}_k \mathrm{KL} \in T_\rho M$ and $\frac{\delta \mathrm{KL}}{\delta \rho} \in T_\rho^* M$, at least formally.

From a statistical perspective, the term $\int_{\mathbb{R}^d} V \, \mathrm{d}\rho$ in \eqref{eq:KL} measures the fit to the data, the entropic term ${\int_{\mathbb{R}^d}} \log \rho \, \mathrm{d}\rho$ encodes regularisation, and the normalisation constant $Z$ represents the Bayesian evidence, useful in the context of model selection (see, for instance \cite{liutkus2019sliced} and \cite[Section 6.7]{gelman2013bayesian}). 

\subsection{Formal Riemannian structure and associated gradient}
\label{subsec:formal Riemann}

In what follows, we formally equip the set $M$ defined in \eqref{eq:definition M} with the structure of a Riemannian manifold, following \cite[Section 4]{duncan2019geometry}, where the reader is referred to for further details. To start with, recall the operators $\T_{k,\rho} $ from \eqref{eq:T_krho}, that can be extended to self-adjoint, nonnegative definite, and compact operators on $L^2(\rho)$, see \cite[Section 4.3]{steinwart2008support}.
By abuse of notation, we will often apply $\T_{k,\rho}$ to vector fields in $(L^2(\rho))^d$, in which case \eqref{eq:T_krho} is to be understood componentwise. The operators $\T_{k,\rho}$ are used to define the tangent space construction in the Stein geometry:
\begin{definition}[Tangent spaces and Riemannian metric] 
	\label{def:Riemann geometry}
See \cite[Definition 5]{duncan2019geometry}.
	For $\rho \in M$, we define the \emph{tangent space}
	\footnote{
	    $\mathcal{D}'$ denotes the usual space of Schwartz distributions, as the dual of $\mathcal{D}(\R^d):=C_c^\infty(\R^d)$ equipped with the Schwartz topology, see\cite{duistermaat2010distributions}. Moreover, we say that $\xi + \nabla \cdot (\rho v) = 0$ holds in the sense of distributions if $\langle \xi, \phi \rangle  - \int_{\mathbb{R}^d}  \nabla \phi \cdot v \, \mathrm{d}\rho = 0$ 
        for all $\phi \in C_c^\infty(\mathbb{R}^d)$, where $\langle \cdot, \cdot \rangle$ denotes the standard duality relation between $\mathcal{D}'(\mathbb{R}^d)$ and $C_c^{\infty}(\mathbb{R}^d)$.
	}
\begin{subequations}
\nonumber
	\label{eq:tangent spaces}
	\begin{align}
	T_{\rho} M:= \Bigg\{ \xi \in \mathcal{D}'(\mathbb{R}^d):\,\,    \text{there exists } v \in \overline{\T_{k,\rho} \nabla C_c^\infty(\mathbb{R}^d)}^{\mathcal{H}_k^d} \,\, \text{such that}\, 
	 \xi + \nabla \cdot (\rho v) = 0 \, \text{in the sense of distributions}
	\Bigg\}
	\end{align}
\end{subequations}
and the \emph{Riemannian metric} $\langle \cdot, \cdot \rangle_{T_\rho M}: T_\rho M \times T_\rho M \rightarrow \mathbb{R}$ by
\begin{equation} 
\label{eq:Stein metric}
    \langle \xi, \chi \rangle_{T_\rho M} := \langle u, v \rangle_{\mathcal{H}_k^d}, 
\end{equation}
where $\xi + \nabla \cdot(\rho u ) = 0$ and $\chi + \nabla \cdot(\rho v ) = 0$, as well as $u,v \in \overline{\T_{k,\rho} \nabla C_c^\infty(\mathbb{R}^d)}^{\mathcal{H}_k^d}$.
\end{definition}
A few remarks concerning Definition \ref{def:Riemann geometry} are in order. First of all, the spaces $\overline{\T_{k,\rho} \nabla C_c^\infty(\mathbb{R}^d)}^{\mathcal{H}_k^d}$ mimic the spaces $\overline{ \nabla C_c^\infty(\mathbb{R}^d)}^{L^2(\rho)}$ common in the Wasserstein setting, see for example~\cite[Sec.~8.4]{ambrosio2008gradient}. Similar to that scenario, to each $\xi\in T_\rho M$ there exists a unique $u \in \overline{\T_{k,\rho} \nabla C_c^\infty(\mathbb{R}^d)}^{\mathcal{H}_k^d}$ with $\xi + \nabla \cdot(\rho u ) = 0$, so that~\eqref{eq:Stein metric} is justified. This fact can be traced back to the Helmholtz decomposition in the RKHS setting, see Proposition \ref{prop:helmholtz}. Furthermore, $\overline{\T_{k,\rho} \nabla C_c^\infty(\mathbb{R}^d)}^{\mathcal{H}_k^d}$ may also be recognised as the set of vector fields which are permissible in minimising movement schemes such as those devised in the original paper \cite{liu2017stein}. Therefore, at an intuitive level, $T_\rho M$ is the space of derivatives $\partial_t \rho$, where $\rho$ is a curve obtained by continuous-time limits of these schemes. We refer the reader to \cite[Lemma 7]{duncan2019geometry}, showing that indeed $T_\rho M$ is a well-defined Hilbert space, for all $\rho \in M$. 
\\\\
The following lemma shows that $T_\rho M$ can indeed be considered the tangent space. Before we come to this result, we recall that the functional derivative of a suitable functional $\mathcal{F}:M \rightarrow \mathbb{R}$ is defined via
\begin{equation}
\label{eq:functional derivative}
  \int_{\mathbb{R}^d} \frac{\delta \mathcal{F}}{\delta{\rho}}(\rho)(x) \phi(x) \, \mathrm{d}x := \frac{\mathrm{d}}{\mathrm{d}\varepsilon} \Big\vert_{\varepsilon = 0} \mathcal{F}(\rho + \varepsilon \phi),
\end{equation}
for $\phi \in C_c^\infty(\mathbb{R}^d)$ with $\int_{\mathbb{R}^d} \phi \, \mathrm{d}x = 0$.
The functional derivative of the Kullback-Leibner divergence~\eqref{eq:KL} can be computed for $\rho\in M$:
\begin{equation}
\label{eq:KL func derivative}
	\frac{\delta \mathrm{KL}}{\delta \rho}(\rho)(x) = \log \rho(x) + V(x),
\end{equation}
see, for instance, \cite[Chapter 15]{V2009}.
\\\\
Finally, we are able to connect the geometric construction from Definition \ref{def:Riemann geometry} with the Stein PDE \eqref{eq:Stein pde}:
\begin{lemma}[Stein gradient]
\label{lem:gradient}
See \cite[Lemma 9 and Corollary 11]{duncan2019geometry}.
Let $\rho \in M$ and $\mathcal{F}: M \rightarrow \mathbb{R}$ be such that the functional derivative $\frac{\delta \mathcal{F}}{\delta \rho}(\rho)$ is well-defined and continuously differentiable. Moreover assume that $\T_{k,\rho} \nabla \frac{\delta \mathcal{F}}{\delta \rho}(\rho) \in \overline{\T_{k,\rho} \nabla C_c^\infty(\mathbb{R}^d)}^{\mathcal{H}_k^d}$.  Then the Riemannian gradient associated to $(T_\rho M, \langle \cdot, \cdot \rangle_{T_\rho M})$ is given by
\begin{equation}
\label{eq:gradient}
(\mathrm{grad}_k \mathcal{F})(\rho) = - \nabla \cdot \left( \rho \, \T_{k,\rho} \nabla \mfrac{\delta \mathcal{F}}{\delta \rho} (\rho) \right).
\end{equation}
Using \eqref{eq:KL func derivative}, it follows that the gradient flow formulation \eqref{eq:gradient flow} and the Stein PDE \eqref{eq:Stein pde} coincide.
\end{lemma}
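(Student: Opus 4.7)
The plan is to verify the defining property of a Riemannian gradient, namely that for every tangent vector $\xi \in T_\rho M$ the identity
\begin{equation*}
    \langle (\mathrm{grad}_k \mathcal{F})(\rho), \xi \rangle_{T_\rho M} = \mathrm{d}\mathcal{F}(\rho)[\xi]
\end{equation*}
holds, where the right-hand side denotes the differential of $\mathcal{F}$ at $\rho$ in the direction $\xi$. Given the definition of the functional derivative in \eqref{eq:functional derivative} and the standard chain-rule argument along a curve $(\rho_t)$ with $\partial_t \rho_t\big|_{t=0} = \xi$, the right-hand side can be rewritten as the distributional pairing $\langle \xi, \tfrac{\delta \mathcal{F}}{\delta \rho}(\rho) \rangle$.

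For the left-hand side, I would first observe that by hypothesis $u := \T_{k,\rho} \nabla \tfrac{\delta \mathcal{F}}{\delta \rho}(\rho)$ belongs to $\overline{\T_{k,\rho} \nabla C_c^\infty(\mathbb{R}^d)}^{\mathcal{H}_k^d}$, so that the candidate gradient $-\nabla \cdot (\rho u)$ is an admissible element of $T_\rho M$. Given $\xi \in T_\rho M$, let $v \in \overline{\T_{k,\rho} \nabla C_c^\infty(\mathbb{R}^d)}^{\mathcal{H}_k^d}$ be the unique representative satisfying $\xi + \nabla \cdot (\rho v) = 0$. Then by Definition \ref{def:Riemann geometry},
\begin{equation*}
    \langle (\mathrm{grad}_k \mathcal{F})(\rho), \xi \rangle_{T_\rho M} = \langle u, v \rangle_{\mathcal{H}_k^d} = \langle \T_{k,\rho} \nabla \tfrac{\delta \mathcal{F}}{\delta \rho}(\rho), v \rangle_{\mathcal{H}_k^d}.
\end{equation*}
Applying the adjointness relation \eqref{eq:kernel trick} (extended componentwise to vector fields) converts this into an $(L^2(\rho))^d$-pairing:
\begin{equation*}
    \langle \T_{k,\rho} \nabla \tfrac{\delta \mathcal{F}}{\delta \rho}(\rho), v \rangle_{\mathcal{H}_k^d} = \langle \nabla \tfrac{\delta \mathcal{F}}{\delta \rho}(\rho), v \rangle_{(L^2(\rho))^d} = \int_{\mathbb{R}^d} \nabla \tfrac{\delta \mathcal{F}}{\delta \rho}(\rho) \cdot v \, \rho \, \mathrm{d}x.
\end{equation*}
An integration by parts (justified either by testing against an approximating sequence $\nabla \phi_n$, $\phi_n \in C_c^\infty$, or directly by the smoothness assumption on $\tfrac{\delta \mathcal{F}}{\delta \rho}(\rho)$) then yields $-\langle \tfrac{\delta \mathcal{F}}{\delta \rho}(\rho), \nabla \cdot (\rho v) \rangle = \langle \tfrac{\delta \mathcal{F}}{\delta \rho}(\rho), \xi \rangle$, which matches $\mathrm{d}\mathcal{F}(\rho)[\xi]$.

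The hard part is the integration-by-parts step, because $\xi$ is only a distribution and $\tfrac{\delta \mathcal{F}}{\delta \rho}(\rho)$ is not required to be compactly supported. I would handle this by exploiting the closure defining the tangent space: approximate $v$ in $\mathcal{H}_k^d$ by a sequence $\T_{k,\rho}\nabla \phi_n$ with $\phi_n \in C_c^\infty(\mathbb{R}^d)$, for which the integration by parts is elementary, and then pass to the limit using continuity of the inclusion $\mathcal{H}_k \hookrightarrow L^2(\rho)$ (Assumption \ref{ass:bounded}) together with the Cauchy-Schwarz inequality. For the final sentence of the lemma, I would simply substitute $\mathcal{F} = \mathrm{KL}$, use \eqref{eq:KL func derivative} to get $\nabla \tfrac{\delta \mathrm{KL}}{\delta \rho}(\rho) = \nabla V + \nabla \log \rho$, unfold the definition of $\T_{k,\rho}$ in \eqref{eq:T_krho}, and integrate by parts in the term involving $\nabla \log \rho$ to obtain $\int [k(x,y)\nabla V(y) - \nabla_y k(x,y)]\,\rho(\mathrm{d}y)$, thereby recovering the right-hand side of the Stein PDE \eqref{eq:Stein pde}.
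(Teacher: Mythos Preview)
The paper does not give its own proof of this lemma; it simply cites \cite[Lemma 9 and Corollary 11]{duncan2019geometry}. Your argument is the standard one and is correct: you verify the defining relation $\langle \mathrm{grad}_k \mathcal{F}(\rho), \xi\rangle_{T_\rho M} = \langle \xi, \tfrac{\delta \mathcal{F}}{\delta \rho}(\rho)\rangle$ by unwinding the metric via \eqref{eq:Stein metric}, applying the adjoint identity \eqref{eq:kernel trick}, and integrating by parts. This is precisely the route taken in the cited reference, so there is nothing to contrast. Your handling of the integration-by-parts step via approximation in $\mathcal{H}_k^d$ and the continuous inclusion $\mathcal{H}_k \hookrightarrow L^2(\rho)$ is the right way to close the gap, and the final computation for $\mathcal{F}=\mathrm{KL}$ is straightforward as you describe.
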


\subsection{Cotangent spaces, Onsager operators, duality\\ and the energy-dissipation (in-)equality}
\label{subsec:cotangent}

In this section we expand the ideas of~\cite{duncan2019geometry} and define the cotangent spaces $T_\rho^*M$, their duality relationship with the tangent spaces through the Onsager operators $\mathbb{K}_\rho$, and establish their basic properties. In order to construct the cotangent spaces, we begin by defining the corresponding inner products for sufficiently regular test functions.

\begin{definition}[Dual inner product]
For $\rho \in M$, we define the \emph{dual inner product}
\begin{equation}
\label{eq:dual inner product}
\langle \phi, \psi \rangle_{T_\rho^* M} = \int_{\mathbb{R}^d} \int_{\mathbb{R}^d} \nabla \phi(x) \cdot k(x,y) \nabla \psi(y) \rho(\mathrm{d}x)  \rho(\mathrm{d}y), \qquad \phi,\psi \in C_c^{\infty}(\mathbb{R}^d),
\end{equation}
as well as the \emph{Onsager operator}
\begin{subequations}
\label{eq:Onsager}
\begin{align}
\mathbb{K}_\rho: \quad C_c^{\infty}(\mathbb{R}^d) & \rightarrow T_\rho M
\\
\phi & \mapsto - \nabla \cdot (\rho \T_{k,\rho} \nabla \phi).
\label{eq:Onsager operator}
\end{align}
\end{subequations}
\end{definition}

\begin{remark}
\label{rem:Onsager}
Combining the definition \eqref{eq:Onsager} with \eqref{eq:gradient}, we see that $\mathrm{grad}_k = \mathbb{K}_\rho \frac{\delta}{\delta \rho}$. In differential geometric terms, the functional derivative $\frac{\delta}{\delta \rho}$ takes the role of the exterior derivative \cite{lee2006riemannian}, while the Onsager operator corresponds to the musical isomorphisms (`raising' the index in the language of theoretical physics). The latter concept will be made more explicit in Proposition \ref{prop:duality} below. Note also that \eqref{eq:Onsager operator} is similar to the Wasserstein setting where $\KK_\rho\phi=-\nabla \cdot (\rho \nabla \phi)$.
\end{remark}

The next lemma is a prelude to Definition \ref{def:cotangent spaces}, in particular showing that the inner product $\langle \cdot,\cdot\rangle_{T_\rho^* M}$ is nondegenerate.
\begin{lemma}
Let $k$ satisfy Assumptions \ref{ass:k}, \ref{ass:bounded} and  \ref{ass:ISPD}. Then $(C_c^{\infty}(\mathbb{R}^d, \langle \cdot, \cdot \rangle_{T_\rho^* M})$ is a pre-Hilbert\footnote{A pre-Hilbert (or inner-product) space satisfies the usual axioms of a Hilbert space, except for completeness. That is, it does not necessarily contain the limit points of all Cauchy sequences, see \cite[Section 3.1]{kreyszig1978introductory}.} space over $\mathbb{R}$.
\end{lemma}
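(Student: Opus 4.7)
The plan is to verify the three properties that distinguish an inner product from a general bilinear form, namely symmetry, bilinearity, and positive definiteness (the latter splitting into non-negativity and non-degeneracy). Symmetry and bilinearity of $\langle\cdot,\cdot\rangle_{T_\rho^*M}$ are immediate from the definition \eqref{eq:dual inner product}, using the symmetry of $k$. The real content lies in the positive definiteness, which I would establish in two steps.

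First, I would rewrite the dual inner product componentwise as
\begin{equation*}
\langle \phi,\psi\rangle_{T_\rho^* M} = \sum_{i=1}^d \int_{\mathbb{R}^d}\int_{\mathbb{R}^d} \partial_i\phi(x)\, k(x,y)\,\partial_i\psi(y)\,\rho(\mathrm{d}x)\rho(\mathrm{d}y) = \sum_{i=1}^d \langle \partial_i\phi, \T_{k,\rho}\partial_i\psi\rangle_{L^2(\rho)}.
\end{equation*}
Non-negativity then follows directly from Proposition \ref{prop:T properties}\ref{propit:T compact}, which asserts that $\T_{k,\rho}$ is positive semi-definite on $L^2(\rho)$, so that each term $\langle \partial_i\phi,\T_{k,\rho}\partial_i\phi\rangle_{L^2(\rho)}$ is non-negative.

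Second, and this is the only step requiring care, I would prove non-degeneracy. Suppose $\langle \phi,\phi\rangle_{T_\rho^*M}=0$; then each of the $d$ non-negative summands above vanishes. For each component $i$, consider the signed Borel measure $\mu_i := (\partial_i\phi)\,\rho$, which is well-defined and finite because $\phi\in C_c^\infty(\mathbb{R}^d)$ and $\rho\in M$. The vanishing of the $i$-th term reads
\begin{equation*}
\int_{\mathbb{R}^d}\int_{\mathbb{R}^d} k(x,y)\,\mu_i(\mathrm{d}x)\mu_i(\mathrm{d}y) = 0,
\end{equation*}
so Assumption~\ref{ass:ISPD} forces $\mu_i = 0$. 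Since $\rho\in M$ has a strictly positive density, this implies $\partial_i\phi \equiv 0$ on $\mathbb{R}^d$ for every $i$. Hence $\phi$ is constant, and being compactly supported it must vanish identically. Combining these steps yields that $\langle\cdot,\cdot\rangle_{T_\rho^*M}$ is an inner product, so $(C_c^\infty(\mathbb{R}^d),\langle\cdot,\cdot\rangle_{T_\rho^* M})$ is a pre-Hilbert space.

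The main (mild) obstacle is the non-degeneracy argument, because the inner product controls only $\nabla\phi$ rather than $\phi$ itself; it is therefore essential to exploit both the ISPD property of $k$ (to conclude $\partial_i\phi\cdot\rho=0$), the strict positivity of the density of $\rho\in M$ (to pass from a measure identity to a pointwise one), and finally the compact support of $\phi$ (to promote constancy to vanishing). No completeness argument is needed, which is exactly why the statement is about a \emph{pre}-Hilbert rather than a Hilbert space; the completion will presumably be performed later to define the cotangent spaces $T_\rho^* M$ in Definition~\ref{def:cotangent spaces}.
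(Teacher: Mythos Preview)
Your proof is correct and follows essentially the same approach as the paper, which simply notes that bilinearity is immediate and that Assumption~\ref{ass:ISPD} yields non-degeneracy. Your version is more explicit, spelling out the componentwise decomposition, the use of the strictly positive density of $\rho\in M$, and the compact support of $\phi$ to pass from $\nabla\phi=0$ to $\phi=0$; these are exactly the details the paper leaves implicit.
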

\begin{proof}
The bilinearity of $\langle \cdot, \cdot \rangle_{T_\rho^* M})$ is immediate from the definition.  For $\phi \in C_c^\infty(\mathbb{R}^d)$, Assumption \ref{ass:ISPD} implies that $\langle \phi,\phi \rangle_{T_\rho^*M} = 0$ if and only if $\phi = 0$.
\end{proof}
The cotangent spaces can now be defined as follows:
\begin{definition}[Cotangent spaces]
\label{def:cotangent spaces}
For $\rho \in M$, we define the cotangent spaces $T_\rho^*M$ to be the completions\footnote{Any pre-Hilbert space can be upgraded to a Hilbert space, intuitively by considering all limit points. For a rigorous survey of the completion construction see \cite[Section 1.6, Theorem 3.2-3]{kreyszig1978introductory}.} of $(C_c^{\infty}(\mathbb{R}^d), \langle \cdot, \cdot \rangle_{T_\rho^* M})$.
\end{definition}

\begin{remark}
\label{rem:Fisher cotangent}
On a practical level, the completion construction extends  the definition  \eqref{eq:dual inner product} to functions $\phi$ such that $\langle \phi,\phi \rangle_{T_\rho^* M}$ can be defined\footnote{Note that this does not require $\phi$ to be differentiable; indeed \eqref{eq:dual inner product} can be extended to nondifferentiable $\phi$ and $\psi$ using integration by parts.} and $\langle \phi,\phi \rangle_{T_\rho^* M} < \infty$. In particular, if $\rho \in M$ and $I^k_{\mathrm{Stein}}(\rho) < \infty$ with $I^k_{\mathrm{Stein}}$ defined as in \eqref{eq:Fisher}, then $\tfrac{\rho}{\pi} \in T_\pi^*M$, and
\begin{equation}
    I^k_{\mathrm{Stein}}(\rho) = \left\Vert\mfrac{\rho}{\pi}\right\Vert^2_{T_\pi^*M} =  \left\Vert\mfrac{\delta \mathrm{KL}}{\delta \rho}\right\Vert^2_{T_\rho^*M}.
    \label{eq:Fisher equalities}
\end{equation}
We will revisit this identity in Section \ref{sec:Fisher}.
\end{remark}

The next result shows that the Onsager operators $\mathbb{K}_\rho$ naturally translate between the tangent and cotangent spaces, substantiating Remark \ref{rem:Onsager}:

\begin{proposition}[Duality]
\label{prop:duality}
For any $\rho \in M$,
the Onsager operator $\mathbb{K}_\rho$ extends to an isometric isomorphism between the Hilbert spaces $T_\rho^*M$ and $T_\rho M$. That is, the extension (denoted by the same symbol) satisfies
    \begin{equation}
    \label{eq:Onsager isometry}
    \langle \mathbb{K}_\rho \phi, \mathbb{K}_\rho \psi \rangle_{T_\rho M} = \langle \phi, \psi \rangle_{T_\rho^* M},
    \end{equation}
    for all $\phi,\psi \in T_\rho^*M$.
\end{proposition}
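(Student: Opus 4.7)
The strategy is to first verify the isometry identity \eqref{eq:Onsager isometry} on the dense subspace $C_c^\infty(\mathbb{R}^d)\times C_c^\infty(\mathbb{R}^d)$, then extend $\mathbb{K}_\rho$ to the completion $T_\rho^*M$ by continuity, and finally show surjectivity onto $T_\rho M$ by approximation.

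For $\phi,\psi \in C_c^\infty(\mathbb{R}^d)$, the vector field associated with $\mathbb{K}_\rho \phi$ in the sense of Definition~\ref{def:Riemann geometry} is $\T_{k,\rho}\nabla\phi \in \T_{k,\rho}\nabla C_c^\infty(\mathbb{R}^d)\subset \overline{\T_{k,\rho}\nabla C_c^\infty(\mathbb{R}^d)}^{\mathcal{H}_k^d}$ by construction, so $\mathbb{K}_\rho \phi \in T_\rho M$ and
\begin{equation*}
\langle \mathbb{K}_\rho \phi, \mathbb{K}_\rho \psi \rangle_{T_\rho M}
= \langle \T_{k,\rho}\nabla \phi, \T_{k,\rho}\nabla \psi \rangle_{\mathcal{H}_k^d}.
\end{equation*}
Invoking the kernel trick \eqref{eq:kernel trick} componentwise (justified since $\T_{k,\rho}\nabla\psi \in \mathcal{H}_k^d$ and $\nabla\phi\in (L^2(\rho))^d$) yields
\begin{equation*}
\langle \T_{k,\rho}\nabla \phi, \T_{k,\rho}\nabla \psi \rangle_{\mathcal{H}_k^d} = \langle \nabla\phi, \T_{k,\rho}\nabla\psi\rangle_{(L^2(\rho))^d} = \int_{\mathbb{R}^d}\!\int_{\mathbb{R}^d} \nabla\phi(x)\cdot k(x,y)\nabla\psi(y)\,\rho(\mathrm{d}x)\rho(\mathrm{d}y),
\end{equation*}
which is exactly $\langle \phi,\psi\rangle_{T_\rho^*M}$. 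Hence $\mathbb{K}_\rho\colon (C_c^\infty(\mathbb{R}^d), \langle\cdot,\cdot\rangle_{T_\rho^*M}) \to T_\rho M$ is isometric on test functions.

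Next I would extend $\mathbb{K}_\rho$ to the full cotangent space. Since $T_\rho^*M$ is by Definition~\ref{def:cotangent spaces} the abstract completion of $C_c^\infty(\mathbb{R}^d)$ with respect to $\langle \cdot,\cdot\rangle_{T_\rho^*M}$, every $\phi\in T_\rho^*M$ arises as the limit of a Cauchy sequence $(\phi_n)\subset C_c^\infty(\mathbb{R}^d)$; the isometry on test functions implies that $(\mathbb{K}_\rho\phi_n)$ is Cauchy in the Hilbert space $T_\rho M$ (well-posed by \cite[Lemma 7]{duncan2019geometry}), and the limit is independent of the approximating sequence. Defining $\mathbb{K}_\rho\phi$ as this limit yields a bounded linear operator $T_\rho^*M \to T_\rho M$ that inherits \eqref{eq:Onsager isometry}; injectivity is automatic from the isometry.

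For surjectivity, given $\xi\in T_\rho M$ there exists a unique $u\in \overline{\T_{k,\rho}\nabla C_c^\infty(\mathbb{R}^d)}^{\mathcal{H}_k^d}$ with $\xi + \nabla\cdot(\rho u)=0$. By density I can pick $\phi_n\in C_c^\infty(\mathbb{R}^d)$ with $\T_{k,\rho}\nabla\phi_n \to u$ in $\mathcal{H}_k^d$; then $\mathbb{K}_\rho\phi_n \to \xi$ in $T_\rho M$, and the isometry forces $(\phi_n)$ to be Cauchy in $T_\rho^*M$. Its limit $\phi$ therefore satisfies $\mathbb{K}_\rho\phi=\xi$, establishing that $\mathbb{K}_\rho$ is onto.

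The main obstacle I anticipate is handling the completion construction cleanly: one must check that $\mathbb{K}_\rho$ is well-defined on equivalence classes of Cauchy sequences and that surjectivity survives completion. Both reduce, however, to the standard fact that a densely defined isometry between pre-Hilbert spaces extends uniquely to an isometry between their completions, combined with the observation that the image is dense in $T_\rho M$ by construction of the tangent space and the Helmholtz-type decomposition in Proposition~\ref{prop:helmholtz}.
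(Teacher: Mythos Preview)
Your proposal is correct and follows essentially the same route as the paper: verify the isometry on $C_c^\infty(\mathbb{R}^d)$ via \eqref{eq:kernel trick}, extend by completion, and then establish surjectivity using the density of $\T_{k,\rho}\nabla C_c^\infty(\mathbb{R}^d)$ in the tangent-velocity space. The only cosmetic difference is in the surjectivity step: the paper argues by contradiction (showing the orthogonal complement of the range is trivial), whereas you construct the preimage directly by pulling back an approximating sequence through the isometry; both rely on the same ingredients and are equivalent in content.
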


\begin{proof}
For $\phi,\psi \in C_c^\infty(\mathbb{R}^d)$, we have that 
\begin{equation}
\label{eq:isometry}
    \langle \mathbb{K}_\rho \phi, \mathbb{K}_\rho \psi \rangle_{T_\rho M} = \langle \T_{k,\rho} \nabla \phi, \T_{k,\rho} \nabla \psi \rangle_{\mathcal{H}_k^d} = \langle \T_{k,\rho} \nabla \phi, \nabla \psi \rangle_{(L^2(\rho))^d} = \langle \phi,\psi \rangle_{T_\rho^* M}.
\end{equation}
Here, the first identity follows from the definition \eqref{eq:Stein metric} and Proposition~\ref{prop:T properties}\ref{propit:T range}, while the second identity is implied by the adjoint relation \eqref{eq:kernel trick}. The third identity is a direct consequence of the definition \eqref{eq:dual inner product}. From \eqref{eq:isometry}, we see that  $\mathbb{K}_\rho$ is a linear isometry from $C_c^{\infty}(\mathbb{R}^d)$ to $T_\rho M$, and hence can be uniquely extended to an isometry $\widehat{\mathbb{K}}_\rho$ on the completion $T_\rho^*M$ (see \cite[Theorem I.7]{reed2012methods}). Being an isometry, it is clear that $\widehat{\mathbb{K}}_\rho$ is injective. It remains to show that $\widehat{\mathbb{K}}_\rho$ is surjective. To this end, it is sufficient to prove that $\mathbb{K}_\rho(C_c^{\infty}(\mathbb{R}^d))$ is dense in $T_\rho M$. For this, let us assume to the contrary that $\mathbb{K}_\rho(C_c^{\infty}(\mathbb{R}^d))$ is not dense. Then there exists $\chi \in T_\rho M$ with $\chi \neq 0$ such that $\langle \chi, \mathbb{K}_\rho \phi \rangle_{T_\rho M} = 0$, for all $\phi \in C_c^\infty(\mathbb{R}^d)$. By Definition \ref{def:Riemann geometry}, there exists $v \in \mathcal{H}_k^d$ such that $\chi + \nabla \cdot (\rho v)  = 0$ in the sense of distributions, as well as a sequence $(\psi_n) \subset C_c^{\infty}(\mathbb{R}^d)$ such that $\T_{k,\rho} \nabla \psi_n \rightarrow v$ in $\mathcal{H}_k^d$. We then see that
\begin{equation}
0 = \langle \chi, \mathbb{K}_\rho \psi_n \rangle_{T_\rho M} = - \langle v, \T_{k,\rho} \nabla \psi_n \rangle_{\mathcal{H}_k^d} \rightarrow - \Vert v \Vert_{\mathcal{H}_k^d}^2,
\end{equation}
implying that $v = 0$. From the second statement in \cite[Lemma 7]{duncan2019geometry}, implied by Proposition \ref{prop:helmholtz}, it then follows that $\chi  = 0$, contradicting the assumption $\chi \neq 0$ from before, and hence concluding the proof.
\end{proof}

We can leverage the correspondence between $T_\rho M$ and $T_\rho^*M$ through $\mathbb{K}_\rho$ provided by Proposition \ref{prop:duality} to set up an associated duality relation. This duality is natural in that it coincides with the duality between $\mathcal{D}'(\mathbb{R}^d)$ and $C_c^{\infty}(\mathbb{R}^d)$ whenever both are defined:

\begin{corollary}
\label{cor:duality}
For any $\rho \in M$, we can define the duality relation 
\begin{equation}
\label{eq:Onsager duality}
    \tensor[_{T_\rho^*M}]{\langle \phi,\xi \rangle}{_{T_\rho M}} := \langle \mathbb{K}_\rho \phi, \xi \rangle_{T_\rho M}, \qquad \phi \in T_\rho^* M, \,\,\xi \in T_\rho M.
\end{equation}
In particular, $T_\rho^*M$ is a representation of the dual of $T_\rho M$.
If $\phi \in C_c^\infty(\mathbb{R}^d) \subset T_\rho^* M$, then we have
\begin{equation}
\label{eq:natural duality}
    \tensor[_{T_\rho^*M}]{\langle \phi,\xi \rangle}{_{T_\rho M}} =  
    \tensor[_{\mathcal{D}'(\mathbb{R}^d)}]{\langle \xi, \phi\rangle}{_{C_c^{\infty}(\mathbb{R}^d)}} = 
\int_{\mathbb{R}^d} v \cdot \nabla \phi \, \mathrm{d}\rho,
\end{equation}
where $\xi + \nabla \cdot(\rho v) = 0$ and $v \in \overline{\T_{k,\rho} \nabla C_c^{\infty}(\mathbb{R}^d)}^{H_k^d}$.
\end{corollary}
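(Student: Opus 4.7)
The plan is to deduce both assertions directly from Proposition~\ref{prop:duality} together with the adjoint relation~\eqref{eq:kernel trick} and the distributional constraint built into Definition~\ref{def:Riemann geometry}.

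First, well-definedness of the pairing~\eqref{eq:Onsager duality} is immediate: for $\phi\in T_\rho^*M$, Proposition~\ref{prop:duality} provides $\mathbb{K}_\rho\phi\in T_\rho M$, so the right-hand side is the usual Hilbert-space inner product on $T_\rho M$. To show that $T_\rho^*M$ represents the (topological) dual of $T_\rho M$, I would argue that the map
\begin{equation*}
  J:T_\rho^*M\to (T_\rho M)^*,\qquad J(\phi)(\xi)= \langle \mathbb{K}_\rho\phi,\xi\rangle_{T_\rho M},
\end{equation*}
is an isometric isomorphism. Linearity and boundedness of $J(\phi)$ are immediate; the isometry identity $\|J(\phi)\|_{(T_\rho M)^*}=\|\phi\|_{T_\rho^*M}$ follows from Proposition~\ref{prop:duality} via the Cauchy--Schwarz inequality with equality attained at $\xi=\mathbb{K}_\rho\phi$. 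Surjectivity is then a consequence of the Riesz representation theorem on the Hilbert space $T_\rho M$ combined with surjectivity of $\mathbb{K}_\rho:T_\rho^*M\to T_\rho M$, also from Proposition~\ref{prop:duality}: any bounded linear functional on $T_\rho M$ equals $\langle \eta,\cdot\rangle_{T_\rho M}$ for some $\eta\in T_\rho M$, and writing $\eta=\mathbb{K}_\rho\phi$ yields the required $\phi\in T_\rho^*M$.

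For the second claim, take $\phi\in C_c^\infty(\mathbb{R}^d)\subset T_\rho^*M$ and $\xi\in T_\rho M$ with associated vector field $v\in \overline{\T_{k,\rho}\nabla C_c^\infty(\mathbb{R}^d)}^{\mathcal{H}_k^d}\subset\mathcal{H}_k^d$ satisfying $\xi+\nabla\cdot(\rho v)=0$ in $\mathcal{D}'(\mathbb{R}^d)$. By the definition~\eqref{eq:Stein metric} of the Riemannian metric and \eqref{eq:Onsager operator},
\begin{equation*}
  \langle \mathbb{K}_\rho\phi,\xi\rangle_{T_\rho M}
  = \langle \T_{k,\rho}\nabla\phi,v\rangle_{\mathcal{H}_k^d}.
\end{equation*}
Applying the componentwise version of the adjoint identity~\eqref{eq:kernel trick} (valid since $\nabla\phi\in(L^2(\rho))^d$ and $v\in\mathcal{H}_k^d$) gives
\begin{equation*}
  \langle \T_{k,\rho}\nabla\phi,v\rangle_{\mathcal{H}_k^d} = \langle \nabla\phi,v\rangle_{(L^2(\rho))^d}=\int_{\mathbb{R}^d}\nabla\phi\cdot v\,\mathrm{d}\rho.
\end{equation*}
On the other hand, the distributional relation $\xi+\nabla\cdot(\rho v)=0$, spelled out as in the footnote to Definition~\ref{def:Riemann geometry}, asserts precisely that $\tensor[_{\mathcal{D}'(\mathbb{R}^d)}]{\langle \xi,\phi\rangle}{_{C_c^\infty(\mathbb{R}^d)}}=\int_{\mathbb{R}^d}\nabla\phi\cdot v\,\mathrm{d}\rho$ for every test function $\phi$. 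Chaining the two identities yields~\eqref{eq:natural duality}.

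There is no serious obstacle here: the argument is essentially an unpacking of definitions, provided one is careful that the expressions $\langle\T_{k,\rho}\nabla\phi,v\rangle_{\mathcal{H}_k^d}$ and $\int\nabla\phi\cdot v\,\mathrm{d}\rho$ are legal, which in turn hinges on the containment $\overline{\T_{k,\rho}\nabla C_c^\infty(\mathbb{R}^d)}^{\mathcal{H}_k^d}\subset\mathcal{H}_k^d\cap (L^2(\rho))^d$ guaranteed by Assumption~\ref{ass:bounded} together with the continuous inclusion $\mathcal{H}_k\hookrightarrow L^2(\rho)$ recalled in Section~\ref{sec:notation}. The only mildly delicate point is the surjectivity step for $J$, where one should verify that the dual pairing on the left of~\eqref{eq:Onsager duality} actually retrieves \emph{all} bounded functionals on $T_\rho M$; invoking Riesz together with the surjectivity of $\mathbb{K}_\rho$ established in Proposition~\ref{prop:duality} resolves this cleanly.
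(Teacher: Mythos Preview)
Your proof is correct. The first part (that $T_\rho^*M$ represents the dual via Riesz plus the isometric isomorphism of Proposition~\ref{prop:duality}) matches the paper's argument exactly.

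For the identity~\eqref{eq:natural duality} your route is slightly different from, and in fact more direct than, the paper's. The paper approximates $\xi$ by elements $\xi_n=\mathbb{K}_\rho\psi_n$ with $\psi_n\in C_c^\infty(\mathbb{R}^d)$, verifies the chain
\[
\tensor[_{\mathcal{D}'(\mathbb{R}^d)}]{\langle \xi_n,\phi\rangle}{_{C_c^\infty(\mathbb{R}^d)}}
=\langle\psi_n,\phi\rangle_{T_\rho^*M}
=\langle\mathbb{K}_\rho\psi_n,\mathbb{K}_\rho\phi\rangle_{T_\rho M}
=\tensor[_{T_\rho^*M}]{\langle\phi,\xi_n\rangle}{_{T_\rho M}},
\]
and then passes to the limit $n\to\infty$ using continuity of the pairings. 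You instead apply the adjoint relation~\eqref{eq:kernel trick} directly to the pair $(\nabla\phi,v)$ with the general $v\in\mathcal{H}_k^d$, avoiding the approximation altogether. This works because~\eqref{eq:kernel trick} only requires $\nabla\phi\in (L^2(\rho))^d$ and $v\in\mathcal{H}_k^d$, both of which you have. The paper's sequence argument buys nothing extra here; your version is cleaner.
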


\begin{proof}
By Proposition \ref{prop:duality} and the Riesz representation theorem, \eqref{eq:Onsager duality} establishes a one-to-one correspondence between the topological dual of $T_\rho M$ and $T_\rho^*M$. The second identity in \eqref{eq:natural duality} is satisfied by definition, see Remark \ref{rem:Onsager}. To obtain the first identity, consider a sequence $(\xi_n) \subset T_\rho^*M$ with $\xi_n \rightarrow \xi$, such that there exists a sequence $(\psi_n) \subset C_c^{\infty}(\mathbb{R}^d)$ satisfying $\xi_n + \nabla \cdot(\rho \T_{k,\rho}\nabla \psi_n) = 0$ in the sense of distributions. Then we have
\begin{subequations}
\begin{align}
    \tensor[_{\mathcal{D}'(\mathbb{R}^d)}]{\langle \xi_n, \phi \rangle}{_{C_c^{\infty}(\mathbb{R}^d)}} & = \int_{\mathbb{R}^d} (\T_{k,\rho} \nabla \psi_n) \cdot \nabla \phi \, \mathrm{d}\rho = \langle \psi_n, \phi \rangle_{T_\rho^* M} = \langle \mathbb{K}_\rho \psi_n, \mathbb{K}_\rho \phi \rangle_{T_\rho M}
    \\
    & = \tensor[_{T_\rho^*M}]{\langle \phi, \mathbb{K}_\rho \psi_n \rangle}{_{T_\rho M}} = \tensor[_{T_\rho^* M}]{\langle \phi, \xi_n \rangle}{_{T_\rho M}},
\end{align}
where the first inequality is a consequence of the definition of $\tensor[_{\mathcal{D}'(\mathbb{R}^d) }]{\langle \cdot, \cdot \rangle}{_{C_c^{\infty}(\mathbb{R}^d)}}$, the second equality follows from \eqref{eq:dual inner product}, the third equality follows from \eqref{eq:Onsager isometry}, and the fourth equality follows from \eqref{eq:Onsager duality}. Finally, we obtain \eqref{eq:natural duality} by passing to the limit, noting that all operations are continuous.
\end{subequations}
\end{proof}

As an consequence of this duality and the Banach-Alaoglu Theorem, we obtain compactness of the (sub-)level sets of the Stein-Fisher information, relevant later in the proof of Theorem \ref{th:large time Gamma}.
\begin{corollary} For any $C>0$, the sets $\{\rho\in M: I^k_\Stein(\rho)\leq C\}$ are pre-compact in the topology characterised by the convergence:
\begin{align}
    \rho^\epsilon \wsconv{\epsilon\to0} \rho 
    \quad:\Leftrightarrow\quad
    \tensor[_{T_\pi^*M}]{\big\langle \mfrac{\rho^\epsilon}{\pi},\xi \big\rangle}{_{T_\pi M}}
        \to
    \tensor[_{T_\pi^*M}]{\big\langle \mfrac{\rho}{\pi},\xi \big\rangle}{_{T_\pi M}} \qquad\text{for all } \xi\in T_\pi M.
\label{eq:ws topology}
\end{align}
\label{cor:compact level sets}
\end{corollary}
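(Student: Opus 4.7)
The plan is to identify the Stein--Fisher level sets with bounded sets in the Hilbert space $T_\pi^*M$, and then invoke the Banach--Alaoglu theorem. The topology in \eqref{eq:ws topology} is precisely the weak-$\ast$ topology on $T_\pi^*M$ transported to $M$ via the map $\rho \mapsto \tfrac{\rho}{\pi}$, so compactness of bounded sets in the former transfers directly.

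First, by Remark \ref{rem:Fisher cotangent}, whenever $I_\Stein^k(\rho) < \infty$ we have $\tfrac{\rho}{\pi}\in T_\pi^*M$ with
\begin{equation*}
\left\| \mfrac{\rho}{\pi}\right\|^2_{T_\pi^*M} = I_\Stein^k(\rho).
\end{equation*}
Hence the image of $\{\rho\in M: I_\Stein^k(\rho)\leq C\}$ under $\rho\mapsto \tfrac{\rho}{\pi}$ is contained in the closed ball of radius $\sqrt{C}$ in $T_\pi^*M$.

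Second, by Proposition \ref{prop:duality} and Corollary \ref{cor:duality}, $\mathbb{K}_\pi$ is an isometric isomorphism between the Hilbert spaces $T_\pi^*M$ and $T_\pi M$, and the pairing
\begin{equation*}
\tensor[_{T_\pi^*M}]{\langle \phi,\xi\rangle}{_{T_\pi M}} := \langle \mathbb{K}_\pi\phi,\xi\rangle_{T_\pi M}
\end{equation*}
realises $T_\pi^*M$ as the topological dual of $T_\pi M$. Consequently, the convergence mode \eqref{eq:ws topology} is exactly weak-$\ast$ convergence of $\tfrac{\rho^\epsilon}{\pi}$ to $\tfrac{\rho}{\pi}$ in the dual of $T_\pi M$, which (since $T_\pi^*M$ is a Hilbert space) agrees with weak convergence in $T_\pi^*M$.

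Finally, the Banach--Alaoglu theorem ensures that norm-bounded subsets of the dual of a Banach space are relatively weak-$\ast$ compact. Applied to the bounded ball of radius $\sqrt{C}$ in $T_\pi^*M$, this yields pre-compactness of $\{\rho\in M: I_\Stein^k(\rho)\leq C\}$ in the topology defined by \eqref{eq:ws topology}. The only subtle point (and the main thing to be careful about) is the identification of the convergence \eqref{eq:ws topology} with the weak-$\ast$ topology; this is immediate from Corollary \ref{cor:duality}, which is why we phrased \eqref{eq:ws topology} using the duality bracket $\tensor[_{T_\pi^*M}]{\langle\cdot,\cdot\rangle}{_{T_\pi M}}$ rather than a more primitive pairing.
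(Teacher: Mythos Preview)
Your proof is correct and follows exactly the approach the paper indicates: the paper does not give a detailed proof but simply states the corollary ``as a consequence of this duality and the Banach--Alaoglu Theorem,'' and your argument is a faithful elaboration of precisely that route (identify the level set with a norm ball in $T_\pi^*M$ via Remark~\ref{rem:Fisher cotangent}, recognise \eqref{eq:ws topology} as the weak-$\ast$ topology via Corollary~\ref{cor:duality}, and apply Banach--Alaoglu).
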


We next provide a reformulation of the Stein PDE \eqref{eq:Stein pde} in terms of an energy-dissipation (in-)equality (see \cite[Chapter 11]{ambrosio2008gradient}), using the framework developed in this section:

\begin{proposition}[Energy-dissipation equality]
\label{prop:EDE}
For $T>0$, let $\rho:[0,T] \rightarrow M$ be a curve such that $t \mapsto \mathrm{KL}(\rho_t)$ is differentiable and for all $t \in [0,T]$,
\begin{align}
    \partial_t \rho_t \in T_{\rho_t}M,
    &&
    \frac{\delta \mathrm{KL}}{\delta \rho} (\rho_t) \in T^*_{\rho_t}M,
    &&\text{and}&&
    \frac{\mathrm{d}}{\mathrm{d}t} \mathrm{KL}(\rho_t) = \tensor[_{T_{\rho_t}^* M}]{\left \langle  \frac{\delta \mathrm{KL}}{\delta \rho} (\rho_t),\partial_t \rho_t \right \rangle}{_{T_{\rho_t} M}}.
\label{eq:conditions EDI}
\end{align}
Then the following statements are equivalent:
\begin{align}
    &\text{the Stein PDE~\eqref{eq:Stein pde} holds for all } t\in\lbrack0,T\rbrack,\notag\\
    &\qquad\iff\notag\\
    &\partial_t\rho_t=-\KK_{\rho_t}\frac{\delta\KL}{\delta\rho}(\rho_t) \qquad\text{for all } t\in\lbrack0,T\rbrack,\notag\\
    &\qquad\iff\notag\\
    &\mathrm{KL}(\rho_T) - \mathrm{KL}(\rho_0) + \int_0^T \left( \mfrac12\left\Vert \partial_t \rho_t \right\Vert^2_{T_{\rho_t} M} + \mfrac12\left\Vert \mfrac{\delta \mathrm{KL}}{\delta \rho} (\rho_t) \right\Vert_{T_{\rho_t}^* M}^2 \right) \mathrm{d}t = 0.
\label{eq:EDI}
\end{align}
Moreover, for any curve satisfying~\eqref{eq:conditions EDI}, the left-hand side of \eqref{eq:EDI} is nonnegative.
\end{proposition}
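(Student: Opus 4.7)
My plan is to prove the three-way equivalence by first identifying the gradient flow formulation with the Stein PDE using the geometric machinery already built, and then deriving the energy-dissipation equality via the standard ``completing the square'' trick, leveraging the isometry $\KK_\rho:T_\rho^*M\to T_\rho M$ from Proposition~\ref{prop:duality}.

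\textbf{Step 1: Equivalence of the first two formulations.} I would first observe that by Lemma~\ref{lem:gradient} applied to $\mathcal{F}=\KL$ together with the functional derivative formula~\eqref{eq:KL func derivative}, the Stein PDE~\eqref{eq:Stein pde} can be rewritten as $\partial_t\rho_t=-\nabla\cdot(\rho_t\T_{k,\rho_t}\nabla\tfrac{\delta\KL}{\delta\rho}(\rho_t))$, which by the definition of the Onsager operator in~\eqref{eq:Onsager operator} is precisely $\partial_t\rho_t=-\KK_{\rho_t}\tfrac{\delta\KL}{\delta\rho}(\rho_t)$. This step is essentially a bookkeeping exercise invoking the constructions in Subsections~\ref{subsec:formal Riemann}--\ref{subsec:cotangent}.

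\textbf{Step 2: Rewriting the EDI via the chain rule and the duality.} Using the chain-rule hypothesis in~\eqref{eq:conditions EDI} and the fundamental theorem of calculus, I would write
\begin{equation*}
\KL(\rho_T)-\KL(\rho_0) = \int_0^T \tensor[_{T_{\rho_t}^*M}]{\Big\langle \mfrac{\delta\KL}{\delta\rho}(\rho_t),\partial_t\rho_t \Big\rangle}{_{T_{\rho_t}M}}\,\dd t.
\end{equation*}
Inserting this into the left-hand side of~\eqref{eq:EDI} and applying the duality identity~\eqref{eq:Onsager duality}, namely $\tensor[_{T_\rho^*M}]{\langle\phi,\xi\rangle}{_{T_\rho M}}=\langle\KK_\rho\phi,\xi\rangle_{T_\rho M}$, together with the isometry~\eqref{eq:Onsager isometry} $\|\KK_\rho\phi\|_{T_\rho M}=\|\phi\|_{T_\rho^*M}$, I can complete the square at each time $t$ to obtain the key identity
\begin{equation*}
\tfrac12\|\partial_t\rho_t\|_{T_{\rho_t}M}^2 + \tfrac12\Big\|\mfrac{\delta\KL}{\delta\rho}(\rho_t)\Big\|_{T_{\rho_t}^*M}^2 + \tensor[_{T_{\rho_t}^*M}]{\Big\langle \mfrac{\delta\KL}{\delta\rho}(\rho_t),\partial_t\rho_t \Big\rangle}{_{T_{\rho_t}M}}
= \tfrac12\Big\|\partial_t\rho_t + \KK_{\rho_t}\mfrac{\delta\KL}{\delta\rho}(\rho_t)\Big\|_{T_{\rho_t}M}^2.
\end{equation*}

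\textbf{Step 3: Concluding the equivalence and the nonnegativity.} Integrating in $t$, I obtain that the left-hand side of~\eqref{eq:EDI} equals $\tfrac12\int_0^T\|\partial_t\rho_t+\KK_{\rho_t}\tfrac{\delta\KL}{\delta\rho}(\rho_t)\|_{T_{\rho_t}M}^2\,\dd t$, which is manifestly nonnegative and vanishes if and only if $\partial_t\rho_t=-\KK_{\rho_t}\tfrac{\delta\KL}{\delta\rho}(\rho_t)$ for a.e. $t\in[0,T]$. Combined with Step~1, this closes the chain of equivalences and simultaneously proves the nonnegativity assertion.

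I do not expect any substantial obstacle: the only subtlety is making sure that the duality pairing and the isometry are applied to the correct elements (which by the hypotheses in~\eqref{eq:conditions EDI} live in $T_{\rho_t}M$ and $T_{\rho_t}^*M$ respectively, so everything is well posed). The completing-the-square calculation is purely algebraic once the right Hilbert-space identifications via Proposition~\ref{prop:duality} and Corollary~\ref{cor:duality} are in place.
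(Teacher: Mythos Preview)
Your proposal is correct and follows essentially the same route as the paper: apply the chain-rule hypothesis, use the isometry~\eqref{eq:Onsager isometry} and duality~\eqref{eq:Onsager duality} to complete the square, and read off that the left-hand side of~\eqref{eq:EDI} equals $\tfrac12\int_0^T\lVert\partial_t\rho_t+\KK_{\rho_t}\tfrac{\delta\KL}{\delta\rho}(\rho_t)\rVert_{T_{\rho_t}M}^2\,\dd t$. Your Step~1 is a useful explicit addition; the paper leaves the equivalence of the first two formulations implicit via Lemma~\ref{lem:gradient}.
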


\begin{remark}
The assumptions \eqref{eq:conditions EDI} are made for convenience, and we refer to \cite{ambrosio2008gradient} for generalisations. Note that the chain rule, i.e. the last condition of~\eqref{eq:conditions EDI}, is expected to hold at a formal level, combining \eqref{eq:functional derivative} and \eqref{eq:natural duality}. Since \eqref{eq:EDI} is always non-negative, the proposition continues to hold if `$=$' is replaced by `$\le$'; analogues of \eqref{eq:EDI} are therefore often called \emph{energy-dissipation inequalities} in the literature.
\end{remark}

\begin{proof}
Take any curve satisfying~\eqref{eq:conditions EDI}. The statement follows immediately by applying~\eqref{eq:conditions EDI} and completing the square:
\begin{multline*}
    \mathrm{KL}(\rho_T) - \mathrm{KL}(\rho_0) + \frac{1}{2} \int_0^T\!\left( \left\Vert \partial_t \rho_t \right\Vert^2_{T_{\rho_t} M} + \left\Vert \mfrac{\delta \mathrm{KL}}{\delta \rho} (\rho_t) \right\Vert_{T_{\rho_t}^* M}^2 \right) \mathrm{d}t\\
    =
    \int_0^T\!\left(
        \tensor[_{T_{\rho_t}^*M}]{\Big{\langle}\mfrac{\delta \mathrm{KL}}{\delta \rho} (\rho_t),\partial_t \rho_t \Big{\rangle}}{_{T_{\rho_t} M}}
        + 
        \frac{1}{2} \left\Vert \partial_t \rho_t \right\Vert^2_{T_{\rho_t} M} 
        + 
        \frac12 \left\Vert\mfrac{\delta \mathrm{KL}}{\delta \rho} (\rho_t) \right\Vert_{T^*_{\rho_t} M}^2 \right) \mathrm{d}t\\
    \stackrel{\eqref{eq:Onsager isometry},\eqref{eq:Onsager duality}}{=}
        \frac12\int_0^T\!\left\lVert\ \partial\rho_t + \KK_{\rho_t}\mfrac{\delta\KL}{\delta\rho}(\rho_t)\right\rVert_{T_{\rho_t}M}^2\,\dd t.
\end{multline*}
\end{proof}

We conclude this section with a remark on the relationship between the functional-analytic frameworks associated to the Stein and Wasserstein geometries. For this, we recall that in the Wasserstein geometry, the tangent and cotangent spaces are given by the Sobolev spaces $H^{-1}(\rho)$ and $H^1(\rho)$, respectively, see \cite{mielke2016generalization}.

\begin{lemma}[Comparison with the Wasserstein setting] We have
\label{rem:sandwich}
\begin{equation}
H^1(\rho) \hookrightarrow T_{\rho}^*M \qquad \text{and} \qquad T_\rho M \hookrightarrow H^{-1}(\rho),
\end{equation}
where $\hookrightarrow$ denotes containment with continuous inclusion.
\end{lemma}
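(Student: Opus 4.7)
The plan is to establish each of the two continuous embeddings by restricting to a suitable dense subspace — $C_c^\infty(\mathbb{R}^d)$ for the cotangent inclusion and its gradients (pushed through $\T_{k,\rho}$) for the tangent inclusion — and showing a pointwise norm comparison. Both embeddings then follow by abstract continuity, using that $C_c^\infty(\mathbb{R}^d)$ is dense in each of the four Hilbert spaces under consideration.

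For the first inclusion $H^1(\rho)\hookrightarrow T_\rho^*M$, I would take $\phi\in C_c^\infty(\mathbb{R}^d)$ and rewrite the defining dual inner product \eqref{eq:dual inner product} using the componentwise extension of \eqref{eq:T_krho} as $\|\phi\|_{T_\rho^*M}^2=\langle\nabla\phi,\T_{k,\rho}\nabla\phi\rangle_{(L^2(\rho))^d}$. A direct Cauchy–Schwarz argument, using Assumption~\ref{ass:bounded}, shows $\|\T_{k,\rho}\psi\|_{L^2(\rho)}\leq\|k\|_\infty\|\psi\|_{L^2(\rho)}$ so that, by Proposition~\ref{prop:T properties}\ref{propit:T compact}, $\T_{k,\rho}$ is a bounded positive self-adjoint operator on $L^2(\rho)$ of operator norm at most $\|k\|_\infty$. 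This yields
\begin{equation}
\|\phi\|_{T_\rho^*M}^2 \leq \|k\|_\infty \|\nabla\phi\|_{(L^2(\rho))^d}^2 \leq \|k\|_\infty \|\phi\|_{H^1(\rho)}^2,
\end{equation}
so the identity on $C_c^\infty(\mathbb{R}^d)$ extends to a continuous embedding $H^1(\rho)\hookrightarrow T_\rho^*M$.

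For the second inclusion $T_\rho M\hookrightarrow H^{-1}(\rho)$, any $\xi\in T_\rho M$ admits by Definition~\ref{def:Riemann geometry} a unique representative $u\in\overline{\T_{k,\rho}\nabla C_c^\infty(\mathbb{R}^d)}^{\mathcal{H}_k^d}\subset\mathcal{H}_k^d$ satisfying $\xi+\nabla\cdot(\rho u)=0$ in $\mathcal{D}'(\mathbb{R}^d)$, with $\|\xi\|_{T_\rho M}=\|u\|_{\mathcal{H}_k^d}$. The continuous inclusion $\mathcal{H}_k\hookrightarrow L^2(\rho)$ — recalled in Section~\ref{sec:notation} as a consequence of Assumption~\ref{ass:bounded} — then shows $u\in(L^2(\rho))^d$ with $\|u\|_{(L^2(\rho))^d}\leq C\|u\|_{\mathcal{H}_k^d}$ for a constant $C$ independent of $\xi$. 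Using the standard characterisation of $H^{-1}(\rho)$ as the space of distributions representable as $-\nabla\cdot(\rho v)$ with $v\in(L^2(\rho))^d$, normed by $\inf\|v\|_{(L^2(\rho))^d}$ over such representatives, we conclude
\begin{equation}
\|\xi\|_{H^{-1}(\rho)} \leq \|u\|_{(L^2(\rho))^d} \leq C\|u\|_{\mathcal{H}_k^d} = C\|\xi\|_{T_\rho M}.
\end{equation}

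The argument is essentially elementary once the boundedness of $\T_{k,\rho}$ on $L^2(\rho)$ and the embedding $\mathcal{H}_k\hookrightarrow L^2(\rho)$ are in hand; the main (minor) obstacle I foresee is pinning down the precise conventions for $H^{\pm 1}(\rho)$ used in \cite{mielke2016generalization}. Under any of the standard equivalent definitions — Dirichlet completion of $C_c^\infty(\mathbb{R}^d)$ for $H^1(\rho)$, and either its topological dual or the distributional divergence form above for $H^{-1}(\rho)$ — both estimates extend by density to the claimed continuous embeddings.
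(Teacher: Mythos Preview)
Your proof is correct. The first inclusion is handled exactly as in the paper: rewrite $\|\phi\|_{T_\rho^*M}^2=\langle\T_{k,\rho}\nabla\phi,\nabla\phi\rangle_{(L^2(\rho))^d}$ and use boundedness of $\T_{k,\rho}$ on $(L^2(\rho))^d$ (you even make the constant $\|k\|_\infty$ explicit, which the paper does not bother to do).

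For the second inclusion you take a slightly different route. The paper simply invokes abstract duality: once $H^1(\rho)\hookrightarrow T_\rho^*M$ is established, the adjoint of this continuous inclusion gives $T_\rho M=(T_\rho^*M)^*\hookrightarrow (H^1(\rho))^*=H^{-1}(\rho)$, citing Corollary~\ref{cor:duality} and \cite[Theorem~4.10]{rudin2006functional}. You instead argue directly at the level of representing vector fields, using $\mathcal{H}_k\hookrightarrow L^2(\rho)$ and the divergence-form characterisation of the $H^{-1}(\rho)$-norm. Both arguments are valid and essentially dual to one another; the paper's version is shorter and sidesteps the issue of conventions for $H^{-1}(\rho)$ that you flag, while yours is more concrete and makes the mechanism (same continuity equation, larger space of admissible velocities) visible.
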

\begin{remark}
The fact that the tangent spaces  $T_\rho M$ in the Stein geometry are contained in the tangent spaces for the Wasserstein geometry is ultimately due to the fact that the movement of the particles is restricted to vector fields belonging to reproducing kernel Hilbert spaces in SVGD.
\end{remark}
\begin{proof}
For the first statement, it it sufficient to show that there exists a constant $C>0$ such that $\Vert \phi \Vert_{T_\rho^* M} \le C \Vert \phi \Vert_{H^1(\rho)}$, for all $\phi \in C_c^{\infty}(\mathbb{R}^d)$. This follows immediately from 
\begin{equation}
\langle \phi,\phi \rangle_{T_\rho^*M} = \langle \T_{k,\rho} \nabla \phi,\nabla \phi \rangle_{(L^2(\rho))^d} \le \Vert \T_{k,\rho} \Vert_{(L^2(\rho))^d\rightarrow (L^2(\rho))^d} \Vert \nabla \phi \Vert_{(L^2(\rho))^d}, \quad \phi \in C_c^{\infty}(\mathbb{R}^d), 
\end{equation}
noting that $\T_{k,\rho}$ is bounded on $(L^2(\rho))^d$ by Proposition \ref{prop:H properties} and therefore $\Vert \T_{k,\rho} \Vert_{(L^2(\rho))^d} < \infty$. The second statement follows immediately by the duality established in Corollary \ref{cor:duality}, see \cite[Theorem 4.10]{rudin2006functional}.
\end{proof}

\section{Large deviations corresponding to the mean field limit}
\label{sec:LD}

In this section we introduce and derive the large-deviation principle for the empirical measure $\rho\super{N}_t$ associated to the SDE~\eqref{eq:SDE} as $N\to\infty$. The derivation will partly be  formal via a standard tilting technique; rigorous results for similar stochastic systems can be found in the classic works \cite{DawsonGaertner1987} and \cite[Ch.~13.3]{FengKurtz2006}.

As mentioned in Subsection~\ref{subsec:intro Stein}, the (random) path $\rho\super{N}:=(\rho\super{N}_t)_{t\in\lbrack0,T\rbrack}$ converges weakly as $N\to\infty$ to the solution $\rho\super{\infty}$ of the Stein PDE~\eqref{eq:Stein pde}. This means that for any continuity set $\A$ of paths~\cite[Th.~2.1]{Billingsley1999},
\begin{equation*}
  \PP(\rho\super{N}\in\A) \xrightarrow{N\to\infty} \delta_{\rho\super{\infty}}(\A),
\end{equation*}
that is, the probability vanishes for any atypical path $\rho\neq\rho\super{\infty}$. The large-deviation principle quantifies the exponential rate of this convergence:
\begin{equation}
  \PP(\rho\super{N}\in\A) \stackrel{N\to\infty}{\sim} \exp\big({\textstyle-N\inf_{\rho\in\A} \I_{\lbrack0,T\rbrack}(\rho)}\big),
\label{eq:general large-deviation principle}
\end{equation}
where the \emph{rate functional} $\I_{\lbrack0,T\rbrack}$ satisfies $\I_{\lbrack0,T\rbrack}(\rho\super{\infty})=0$ and $\I_{\lbrack0,T\rbrack}(\rho)>0$ for any path $\rho\neq\rho\super{\infty}$. In other words, the magnitude of $\I_{[0,T]}(\rho)$ quantifies the `unlikeliness' of the particular path $\rho$ as a deviation from $\rho^{(\infty)}$, in the exponential scaling indicated above. The infimum on the right-hand side appears because the process will follow the least unlikely path with overwhelming probability; for the precise definition of the large-deviation principle we refer to~\cite{DemboZeitouni1998}.

\begin{remark}
Throughout this paper we shall formally derive large-deviation principles~\eqref{eq:general large-deviation principle} for small balls $\Bcal_\epsilon(\rho)$ around an unlikely event $\rho$, i.e.
\begin{equation*}
  \PP(\rho\super{N}\in \Bcal_\epsilon(\rho)) \stackrel{N\to\infty}{\sim} \exp\big({\textstyle-N \I_{\lbrack0,T\rbrack}(\rho)}\big).
\end{equation*}
This is a common proof technique for both the large-deviation lower and upper bounds in the rigorous definition,  see ~\cite[Sec.~1.2]{DemboZeitouni1998}.
\end{remark}

\begin{remark}
For brevity we shall largely ignore the role of the initial condition $\rho_0$.  Implicitly we will always assume that the initial positions $X_0^i$ of all particles are chosen deterministically, in such a way that $\rho\super{N}_0$ converges weakly to some given $\rho_0$. Theorem \ref{th:large time Gamma} below shows that the leading order contribution as $T \rightarrow \infty$ is independent of $\rho_0$.
\end{remark}

Our main result provides an expression for the rate functional $\I_{[0,T]}$ in terms of the $T_\rho M$-norm introduced in Definition \ref{def:Riemann geometry}. We postpone a discussion of its interpretation until Sections \ref{sec:Fisher} and \ref{sec:examples}.

\begin{theorem}[Large-deviation principle, formal]
	\label{th:large deviations}
	The path $(\rho\super{N}_t)_{t\in\lbrack0,T\rbrack}$ of the empirical measure~\eqref{eq:empirical measure} associated to the SDE~\eqref{eq:SDE} satisfies a large-deviation principle~\eqref{eq:general large-deviation principle} with rate functional $\I_{\lbrack0,T\rbrack}$,  given by
	\begin{equation}
	\I_{\lbrack0,T\rbrack}(\rho) = \frac{1}{4} \int_0^T \left\Vert 
	  \partial_t \rho_t-\nabla_x \cdot \left( \rho_t \int_{\mathbb{R}^d} \left[ k(\cdot,y) \nabla V(y) - \nabla_y k(\cdot,y)  \right]\rho_t(\mathrm{d} y) \right)
	\right\Vert^2_{T_{\rho_t} M} \, \mathrm{d}t,
	\label{eq:LD I}
	\end{equation}
	for paths $\rho$ satisfying \eqref{eq:conditions EDI}.
\end{theorem}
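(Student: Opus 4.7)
My plan is to follow the classical Dawson--G\"artner / Freidlin--Wentzell derivation for interacting diffusions, using the cotangent-space machinery of Section~\ref{subsec:cotangent} to translate between the natural noise covariance and the tangent-space expression in~\eqref{eq:LD I}. First, applying It\^o's formula to $\langle\rho_t^{(N)},\phi\rangle$ for $\phi\in C_c^\infty(\mathbb{R}^d)$ yields a semimartingale decomposition of the form
\begin{equation*}
    \mathrm{d}\langle \rho_t^{(N)},\phi\rangle = \langle F[\rho_t^{(N)}],\phi\rangle\,\mathrm{d}t + \mathrm{d}M_t^{\phi,N} + \mathcal{O}(\tfrac{1}{N})\,\mathrm{d}t,
\end{equation*}
where $F[\rho]:=\nabla\cdot\bigl(\rho\int[k(\cdot,y)\nabla V(y)-\nabla_y k(\cdot,y)]\,\rho(\mathrm{d}y)\bigr)$ is precisely the right-hand side of the Stein PDE~\eqref{eq:Stein pde}, and $M^{\phi,N}$ is a continuous martingale coming from the $\sqrt{2\mathcal{K}(\bar X_t)}\,\mathrm{d}W_t^j$ noise in~\eqref{eq:SDE}. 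A direct computation using $\mathcal{K}_{ij}(\bar x)=\tfrac{1}{N}k(x_i,x_j)I_d$ gives the quadratic variation
\begin{equation*}
    \frac{\mathrm{d}}{\mathrm{d}t}\langle M^{\phi,N}\rangle_t = \frac{2}{N}\cdot\frac{1}{N^2}\sum_{i,i'}\nabla\phi(X_t^i)\cdot k(X_t^i,X_t^{i'})\nabla\phi(X_t^{i'})\;\xrightarrow{N\to\infty}\;\frac{2}{N}\,\|\phi\|_{T_{\rho_t}^*M}^{2},
\end{equation*}
so the effective diffusion on the level of the empirical measure is precisely $\tfrac{2}{N}\mathbb{K}_{\rho_t}$, with $\mathbb{K}_\rho$ the Onsager operator from~\eqref{eq:Onsager}.

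Second, I would perform an exponential change of measure via Girsanov, tilting each particle's noise by a drift of the form $\T_{k,\rho_t^{(N)}}\nabla h_t(X_t^i)$ for a smooth one-parameter family $(h_t)\subset C_c^\infty(\mathbb{R}^d)$. The log-Radon--Nikodym derivative, multiplied by $\tfrac{1}{N}$, converges formally to $\int_0^T\!\bigl(\tensor[_{T_{\rho_t}^*M}]{\langle h_t,\partial_t\rho_t-F[\rho_t]\rangle}{_{T_{\rho_t}M}}-\|h_t\|_{T_{\rho_t}^*M}^2\bigr)\mathrm{d}t$, using the identifications from Corollary~\ref{cor:duality} and the quadratic-variation computation above. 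Applying Varadhan's lemma (at the formal level) then yields the Legendre form
\begin{equation*}
    \I_{\lbrack0,T\rbrack}(\rho) = \sup_{h\in C_c^\infty([0,T]\times\mathbb{R}^d)} \int_0^T\!\Bigl(\tensor[_{T_{\rho_t}^*M}]{\langle h_t,\partial_t\rho_t-F[\rho_t]\rangle}{_{T_{\rho_t}M}}-\|h_t\|_{T_{\rho_t}^*M}^2\Bigr)\mathrm{d}t.
\end{equation*}

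Third, I would evaluate this supremum pointwise in $t$. Since $\partial_t\rho_t-F[\rho_t]\in T_{\rho_t}M$ by assumption~\eqref{eq:conditions EDI}, the isometry $\mathbb{K}_{\rho_t}:T_{\rho_t}^*M\to T_{\rho_t}M$ of Proposition~\ref{prop:duality} provides a unique $\phi_t^\star\in T_{\rho_t}^*M$ with $\mathbb{K}_{\rho_t}\phi_t^\star=\tfrac{1}{2}(\partial_t\rho_t-F[\rho_t])$, and density of $C_c^\infty$ in $T_{\rho_t}^*M$ allows the supremum to be realised (approximately) through it. Completing the square and using~\eqref{eq:Onsager isometry} gives
\begin{equation*}
    \sup_h \Bigl(\tensor[_{T_{\rho_t}^*M}]{\langle h_t,\partial_t\rho_t-F[\rho_t]\rangle}{_{T_{\rho_t}M}}-\|h_t\|_{T_{\rho_t}^*M}^2\Bigr) = \tfrac{1}{4}\|\partial_t\rho_t-F[\rho_t]\|_{T_{\rho_t}M}^2,
\end{equation*}
which is exactly~\eqref{eq:LD I}.

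The main obstacle is not the algebra, which is just completing the square in the Onsager duality, but rather the functional-analytic step of making the Girsanov tilting and the Varadhan identification well-posed when $\rho$ is a general measure-valued path satisfying only~\eqref{eq:conditions EDI}; this includes verifying that $\phi^\star$ lies in the completion $T_{\rho_t}^*M$ (guaranteed by our assumption $\partial_t\rho_t\in T_{\rho_t}M$ via Proposition~\ref{prop:duality}) and that the class of smooth tilts $h$ is rich enough to attain the supremum. Since the theorem is stated as a formal result, I view these as regularity issues deferred to the rigorous treatments of~\cite{DawsonGaertner1987,FengKurtz2006}, whose frameworks can be adapted here essentially verbatim once the Stein geometric objects $T_\rho M$, $T_\rho^*M$, and $\mathbb{K}_\rho$ are in place.
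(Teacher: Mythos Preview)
Your proposal is correct and follows essentially the same tilting/Girsanov strategy as the paper: both derive the Legendre form
\[
\I_{[0,T]}(\rho)=\sup_{\xi}\int_0^T\!\Bigl(\tensor[_{T_{\rho_t}^*M}]{\langle \xi_t,\partial_t\rho_t-F[\rho_t]\rangle}{_{T_{\rho_t}M}}-\|\xi_t\|_{T_{\rho_t}^*M}^2\Bigr)\mathrm{d}t
\]
and then complete the square via the Onsager isometry of Proposition~\ref{prop:duality}. The packaging differs slightly: the paper works at the level of the measure-valued process, computing the generator $\mathcal{Q}^{(N)}$ on functionals $F:\mathcal{P}(\mathbb{R}^d)\to\mathbb{R}$ (Lemma~\ref{lem:generator}), the Feng--Kurtz nonlinear operator $\mathcal{H}^{(N)}=\tfrac{1}{N}e^{-NG}\mathcal{Q}^{(N)}e^{NG}$ and its limit $\mathcal{H}$ (Lemma~\ref{lem:limit Hamiltonian}), and then applies Girsanov directly to the law of $\rho^{(N)}$ together with the tilted mean-field limit (Lemmas~\ref{lem:Girsanov}--\ref{lem:mean-field limit}); you instead stay at the particle level, reading off the noise covariance from the quadratic variation of $\langle\rho^{(N)},\phi\rangle$ and tilting the particles by $\mathcal{T}_{k,\rho^{(N)}}\nabla h_t$. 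These are equivalent---your particle tilt is exactly the paper's measure-level tilt with $G_t(\rho)=\langle\rho,h_t\rangle$---and both arguments defer the same rigour issues to \cite{DawsonGaertner1987,FengKurtz2006}.
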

We note that the expression~\eqref{eq:LD I} can be extended to arbitrary paths $\rho:[0,T]\to\Pcal(\R^d)$, possibly taking the value infinity; for brevity we will focus on sufficiently regular paths in the sense of \eqref{eq:conditions EDI}.

In the remainder of this section, we outline the proof of Theorem \ref{th:large deviations}. The key idea is to tilt the underlying probability measure using Girsanov transformations so that the atypical path $\rho$ becomes the typical one for the new, tilted measure. The very same technique is common in importance sampling for diffusions, see \cite{hartmann2012efficient,tzen2019theoretical} and \cite[Section 2.2]{nusken2020solving}, and sequential Monte Carlo methods \cite{del2004feynman,doucet2001introduction,reich2018data}, used to simulate the occurrence of rare (=atypical) events.
\\\\
The calculation of the large-deviation rate functional requires the construction of the `exponential martingale'\footnote{The exponential martingale is the right-hand side of~\eqref{eq:Girsanov} applied to the random process $\rho\super{N}_t$.}, for which it will be helpful to know the generator\footnote{Since the process $\rho_t^{(N)}$ takes (random) values in $\mathcal{P}(\mathbb{R}^d)$, its generator acts on functionals $F:\mathcal{P}(\mathbb{R}^d) \rightarrow \mathbb{R}$ of sufficient regularity. For background on measure-valued stochastic processes we refer the reader to \cite{Dawson1993}.} of the process $\rho_t^{(N)}$ explicitly.

\begin{lemma}
For each $N\in\Nb$, the generator of the Markov process $\rho^{(N)}_t$ defined by \eqref{eq:SDE} and \eqref{eq:empirical measure} is:
\begin{align}
	(\Qcal\super{N} F) (\rho) &= \iint_{\R^d\times\R^d}\!\big\lbrack - k(x,y) \nabla V(y) + \nabla_{y} k(x,y) \big\rbrack \cdot\nabla_x\!\left(\frac{\delta F}{\delta \rho}(\rho) \right)\! (x)\, \rho(\mathrm{d}x)\, \rho(\mathrm{d}y) \notag\\
	 &\hspace{2.3cm} + \mfrac{1}{N}\iint_{\R^d\times\R^d}\! k(x,y) \nabla_x\cdot\nabla_y\!\left( \frac{\delta^2 F }{\delta \rho^2}(\rho)\right)\!(x,y)\,\rho(\mathrm{d}x)\,\rho(\mathrm{d}y) \notag\\
	 &\hspace{4.8cm} + \mfrac{1}{N}\int_{\R^d}\!k(x,x) \Delta \!\left( \frac{\delta F}{\delta \rho} (\rho)\right)\!(x) \, \rho(\mathrm{d}x),
\label{eq:generator}
\end{align}
\label{lem:generator}
where $F:\mathcal{P}(\mathbb{R}^d) \rightarrow 
\mathbb{R}$ is a test function of sufficient regularity.
\end{lemma}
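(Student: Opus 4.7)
The plan is to compute the generator by lifting the measure-valued process back to the particle system on $(\R^d)^N$, where the classical Itô generator applies, and then translate particle-wise derivatives into functional derivatives of $F$ via the chain rule. Concretely, writing $\tilde F(\bar x):=F\bigl(\tfrac{1}{N}\sum_{i=1}^N \delta_{x_i}\bigr)$, the process $\tilde F(\bar X_t)$ is generated by the standard second-order differential operator
\begin{equation*}
  \tilde \Qcal \tilde F(\bar x) = \sum_{i=1}^N b_i(\bar x)\cdot\nabla_{x_i}\tilde F(\bar x) + \sum_{i,j=1}^N \mfrac{1}{N} k(x_i,x_j)\,\nabla_{x_i}\cdot\nabla_{x_j}\tilde F(\bar x),
\end{equation*}
where $b_i$ is the drift coefficient in~\eqref{eq:SDE} and I have used $(\sigma\sigma^\top)_{ij}=2\Kcal_{ij}=\tfrac{2}{N}k(x_i,x_j)I_{d\times d}$.

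Next, I would derive the two key transfer identities. Perturbing a single particle by $x_i\mapsto x_i+\epsilon v$ shifts $\rho$ by $\tfrac{\epsilon}{N}v\cdot(-\nabla\delta_{x_i})+O(\epsilon^2)$, so by definition~\eqref{eq:functional derivative},
\begin{equation*}
  \nabla_{x_i}\tilde F(\bar x) = \mfrac{1}{N}\nabla_x\!\left(\mfrac{\delta F}{\delta \rho}(\rho)\right)\!(x_i).
\end{equation*}
Differentiating once more, a further $x_j$-perturbation ($j\neq i$) only acts on $\rho$ and produces $\tfrac{1}{N^2}\nabla_x\cdot\nabla_y\bigl(\tfrac{\delta^2 F}{\delta\rho^2}\bigr)(x_i,x_j)$, whereas for $i=j$ there is \emph{both} this contribution at the diagonal \emph{and} an explicit derivative in the second slot $x_i$, giving the extra $\tfrac{1}{N}\Delta\bigl(\tfrac{\delta F}{\delta\rho}\bigr)(x_i)$. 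This splitting into ``off-diagonal'' and ``diagonal'' parts is the main technical subtlety and the source of the extra $\tfrac{1}{N}\int k(x,x)\Delta(\delta F/\delta\rho)\,\rho(\dd x)$ term in~\eqref{eq:generator}.

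Finally, I would substitute these identities into $\tilde\Qcal\tilde F(\bar x)$ and express the resulting double sums as double integrals against the empirical measure $\rho=\tfrac{1}{N}\sum_i\delta_{x_i}$. The drift contribution $\sum_i b_i\cdot\nabla_{x_i}\tilde F$ becomes, after substituting the form of $b_i$ and picking up a factor $\tfrac{1}{N^2}$, exactly the first line of~\eqref{eq:generator}. For the diffusion contribution, collecting the $i\neq j$ and $i=j$ terms and combining them into a single double sum (which generates one full double integral against $\rho\otimes\rho$ for the $\delta^2 F/\delta\rho^2$ term, plus the leftover diagonal Laplacian piece) produces the second and third lines, each carrying the expected $\tfrac{1}{N}$ prefactor inherited from $(\sigma\sigma^\top)_{ij}$. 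The only routine verifications are the bookkeeping of constants and the interchange of the product-rule for $\nabla_{x_i}^2\tilde F$ with the functional-derivative formulas, both of which are standard.
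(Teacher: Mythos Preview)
Your proposal is correct and follows essentially the same route as the paper: apply It\^o's formula to $\tilde F(\bar X_t)=F(\rho_t^{(N)})$ on $(\R^d)^N$, translate $\nabla_{x_i}$ and $\nabla_{x_i}\!\cdot\!\nabla_{x_j}$ into first and second functional derivatives (with the diagonal $i=j$ case producing the extra $\tfrac1N\int k(x,x)\Delta(\delta F/\delta\rho)\,\rho(\dd x)$ term), and rewrite the particle sums as integrals against $\rho\otimes\rho$. The only cosmetic difference is that the paper carries this out for cylinder test functions $F(\rho)=\phi(\langle p_1,\rho\rangle,\ldots,\langle p_L,\rho\rangle)$, which makes your abstract chain-rule identities $\nabla_{x_i}\tilde F=\tfrac1N\nabla(\delta F/\delta\rho)(x_i)$ etc.\ concrete and sidesteps having to interpret $\delta F/\delta\rho$ at empirical measures.
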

The proof follows a standard calculation involving It{\^o}'s formula that we postpone to the appendix.

The following result shows that the process can be perturbed or \emph{tilted} by adding an additional, time-dependent drift such that the Radon-Nikodym derivative is explicit.

\begin{lemma}[Girsanov transformation] Let $\PP_{[0,T]}\super{N}$ be the law of the empirical measure process $(\rho\super{N}_t)_{t\in\lbrack0,T\rbrack}$ associated to the   
SDE~\eqref{eq:SDE}, fix a test function
$G:[0,T]\times \mathcal{P}(\mathbb{R}^d)$ of sufficient regularity
\footnote{A convenient class of test functions is given by $G\in C_b^2\big(0,T;C_b^2(\Pcal(\R^d))\big)$, where the derivative in $\mathcal{P}(\mathbb{R}^d)$ is understood in the sense of \eqref{eq:functional derivative}. 
See \cite[Theorem 2.2.1]{ustunel2013transformation} for the general Novikov condition (in finite dimensions) and \cite{PalmowskiRolski2002} for an even more general condition.}
, and define the tilted measure $\PP_{[0,T]}\super{N,G}$ through the Radon-Nikodym derivative,
\begin{equation}
    \frac{d\PP_{[0,T]}\super{N,G}}{d\PP_{[0,T]}\super{N}}(\rho)
    =
    \exp\Big(
        NG_T(\rho_T)-NG_0(\rho_0) - N\int_0^T\!(\partial_t G_t)(\rho_t)\,\dd t - N\int_0^T\!(\Ham\super{N}G_t)(\rho_t)\,\dd t
    \Big),
\label{eq:Girsanov}
\end{equation}
where the operator $\mathcal{H}^{(N)}$ is defined as
\begin{equation}
    (\Ham\super{N} G)(\rho):=\frac1N e^{-NG(\rho)}\big(\Qcal\super{N} e^{NG(\rho)}\big)(\rho).
\label{eq:nonlinear semigroup}
\end{equation}
Then $\PP_{[0,T]}\super{N,G}$ is the law of the (time-inhomogeneous) Markov process with generator
\begin{equation}
    (\Qcal\super{N,G}_t F)(\rho):=(\Qcal\super{N}F)(\rho) 
    + 
    \iint_{\R^d\times\R^d}\! k(x,y) \nabla_x\cdot\nabla_y\,\!
    \Big(  
      \frac{\delta F}{\delta\rho}(\rho)(x)\frac{\delta G_t}{\delta\rho}(\rho)(y)
      +
      \frac{\delta G_t}{\delta\rho}(\rho)(x)\frac{\delta F}{\delta\rho}(\rho)(y)
    \Big)\!\,\rho(\mathrm{d}x)\,\rho(\mathrm{d}y).
\label{eq:tilted generator}
\end{equation}
\label{lem:Girsanov}
\end{lemma}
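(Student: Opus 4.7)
The plan is to first verify that the right-hand side of~\eqref{eq:Girsanov} is the terminal value of an exponential martingale, and then to identify the generator of $\rho\super{N}$ under the tilted measure by a Markovian Cameron--Martin--Girsanov argument, in which the carr\'e du champ of $\Qcal\super{N}$ produces the extra term appearing in~\eqref{eq:tilted generator}.

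\textbf{Exponential martingale.} Set $A_t := N G_t(\rho\super{N}_t) - N G_0(\rho\super{N}_0) - N\!\int_0^t\!(\partial_s G_s)(\rho\super{N}_s)\,\dd s - N\!\int_0^t\!(\Ham\super{N} G_s)(\rho\super{N}_s)\,\dd s$, so that the candidate density is $M_t = e^{A_t}$. By It\^o's formula, $\dd M_t/M_t = \dd A_t + \tfrac12\dd\langle A\rangle_t$. Dynkin's formula applied to $s\mapsto G_s(\rho\super{N}_s)$ yields that the drift of $\dd A_t$ equals $N\bigl(\Qcal\super{N} G_t - \Ham\super{N} G_t\bigr)(\rho\super{N}_t)\,\dd t$, while its quadratic variation is $N^2 \Gamma\super{N}(G_t,G_t)(\rho\super{N}_t)\,\dd t$, where $\Gamma\super{N}(F,G) := \Qcal\super{N}(FG) - F\Qcal\super{N}G - G\Qcal\super{N}F$ is the carr\'e du champ of the Markov process~\eqref{eq:SDE}. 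Hence $M_t$ is a local martingale iff $\Ham\super{N} G = \Qcal\super{N} G + \tfrac{N}{2}\Gamma\super{N}(G,G)$, which follows directly from the definition~\eqref{eq:nonlinear semigroup} upon applying the diffusion chain rule $\Qcal\super{N}(f\circ G) = f'(G)\Qcal\super{N} G + \tfrac12 f''(G)\Gamma\super{N}(G,G)$ with $f(x) = e^{Nx}$. Under the stated regularity on $G$, a Novikov-type argument then upgrades $M_t$ to a true martingale, so that~\eqref{eq:Girsanov} defines a probability measure on path space.

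\textbf{Tilted generator and carr\'e du champ.} By the Markovian Cameron--Martin--Girsanov theorem (see e.g.~\cite{PalmowskiRolski2002}), the process $\rho\super{N}$ under $\PP\super{N,G}_{[0,T]}$ is Markov with generator $\Qcal\super{N,G}_t F = \Qcal\super{N} F + N\Gamma\super{N}(F, G_t)$; equivalently, one checks by the It\^o product rule that $M_t\bigl[F(\rho\super{N}_t) - \int_0^t \Qcal\super{N,G}_s F(\rho\super{N}_s)\,\dd s\bigr]$ is a $\PP\super{N}$-martingale, using that the quadratic covariation of $M$ and $F(\rho\super{N})$ equals $M_t \cdot N \Gamma\super{N}(G_t, F)(\rho\super{N}_t)\,\dd t$. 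To obtain~\eqref{eq:tilted generator} explicitly I would compute $\Gamma\super{N}$ from the three summands of~\eqref{eq:generator}: the drift term and the diagonal Laplacian are first-order in $\tfrac{\delta F}{\delta\rho}$, and since $\tfrac{\delta(FG)}{\delta\rho}(x) = F\tfrac{\delta G}{\delta\rho}(x) + G\tfrac{\delta F}{\delta\rho}(x)$ satisfies the Leibniz rule, these two summands drop out of $\Gamma\super{N}$. Only the second-order summand contributes, and substituting the identity
\[
  \tfrac{\delta^2(FG)}{\delta\rho^2}(x,y) - F\tfrac{\delta^2 G}{\delta\rho^2}(x,y) - G\tfrac{\delta^2 F}{\delta\rho^2}(x,y) = \tfrac{\delta F}{\delta\rho}(x)\tfrac{\delta G}{\delta\rho}(y) + \tfrac{\delta F}{\delta\rho}(y)\tfrac{\delta G}{\delta\rho}(x)
\]
reproduces the claimed expression after the prefactor $1/N$ cancels the $N$ in $N\Gamma\super{N}$.

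The main technical obstacle is the verification of the Novikov-type integrability condition ensuring that $M_t$ is a genuine martingale on $[0,T]$, which is delicate for measure-valued processes and is precisely the purpose of the regularity assumptions on $G$ together with the cited results~\cite{ustunel2013transformation, PalmowskiRolski2002}; modulo this point, the argument is a routine extension of the finite-dimensional Girsanov theorem to the mean-field Markov setting.
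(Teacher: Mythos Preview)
Your argument is correct and follows essentially the same route as the paper: both invoke the general Markovian Girsanov transformation of Palmowski--Rolski to identify the tilted generator. The paper simply quotes the Doob $h$-transform formula
\[
  (\Qcal\super{N,G}_t F)(\rho)= e^{-NG_t(\rho)}(\Qcal\super{N}F e^{NG_t(\rho)})(\rho) - e^{-NG_t(\rho)}F(\rho)(\Qcal\super{N} e^{NG_t(\rho)})(\rho)
\]
and leaves the expansion to the reader, whereas you unpack this into $\Qcal\super{N}F + N\Gamma\super{N}(F,G_t)$ and then compute the carr\'e du champ explicitly from~\eqref{eq:generator}; the two formulations are of course equivalent since $e^{-NG}\Gamma\super{N}(F,e^{NG}) = N\Gamma\super{N}(F,G)$ for diffusions. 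Your additional verification that the exponential is a genuine local martingale and your term-by-term analysis of $\Gamma\super{N}$ (noting that the drift and the diagonal Laplacian contribute nothing because they depend only on $\tfrac{\delta F}{\delta\rho}$) are details the paper omits, so your write-up is in fact more complete.
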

\begin{remark}
The limit of the operator $\Ham^{(N)}$ coincides with the generator of the nonlinear Nisio semigroup in the framework of \cite{FengKurtz2006}.
\end{remark}
\begin{proof}
The statement follows from the general Girsanov transformation formula~\cite[Th.~4.2]{PalmowskiRolski2002}, adapted to allow for time-dependent test functions $G$, see also \cite[Th.~A1.7.3]{Kipnis1999} and \cite[Sec.~8.6.1.1]{FengKurtz2006}:
\begin{equation*}
    (\Qcal\super{N,G}_t F)(\rho)= e^{-NG_t(\rho)}(\Qcal\super{N}F e^{NG_t(\rho)})(\rho) - e^{-NG_t(\rho)}F(\rho)(\Qcal\super{N} e^{NG_t(\rho)})(\rho).
\end{equation*}
\end{proof}

In the following we explicitly calculate~\eqref{eq:nonlinear semigroup} and pass to the limit as $N\to\infty$.

\begin{lemma} For any test function
$G:[0,T]\times \mathcal{P}(\mathbb{R}^d)$ of sufficient regularity (as above), we have
\begin{align*}
    (\Ham\super{N}G)(\rho)\xrightarrow{N\to\infty}\Ham\Big(\rho,\mfrac{\delta G}{\delta\rho}(\rho)\Big)
    :=
    - \left\langle \mfrac{\delta G}{\delta\rho}(\rho),\mfrac{\delta \KL}{\delta \rho}(\rho)\right\rangle_{T_\rho^*M} + \left\Vert \mfrac{\delta G}{\delta\rho}(\rho)\right\Vert_{T_\rho^*M}^2,
\end{align*}
where $\langle \cdot, \cdot \rangle_{T_\rho^*M}$ is defined in \eqref{eq:dual inner product}. 
\label{lem:limit Hamiltonian}
\end{lemma}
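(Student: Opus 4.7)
The plan is a direct computation: expand $(\Ham\super{N}G)(\rho)$ by applying the explicit generator $\Qcal\super{N}$ from Lemma~\ref{lem:generator} to $F=e^{NG(\cdot)}$, collect the powers of $N$, and finally recognise the surviving terms as the right-hand side of the claimed formula after an integration by parts.

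First I would compute the functional derivatives of $F=e^{NG(\rho)}$. These are
\begin{align*}
    \frac{\delta F}{\delta\rho}(\rho)(x)
        &= N\,e^{NG(\rho)}\frac{\delta G}{\delta\rho}(\rho)(x),\\
    \frac{\delta^2 F}{\delta\rho^2}(\rho)(x,y)
        &= N^2\,e^{NG(\rho)}\frac{\delta G}{\delta\rho}(\rho)(x)\frac{\delta G}{\delta\rho}(\rho)(y)
        + N\,e^{NG(\rho)}\frac{\delta^2 G}{\delta\rho^2}(\rho)(x,y).
\end{align*}
Plugging these into \eqref{eq:generator} and dividing by $N e^{NG(\rho)}$ yields
\begin{align*}
    (\Ham\super{N}G)(\rho)
    &= \iint_{\R^d\times\R^d}\!\bigl[-k(x,y)\nabla V(y)+\nabla_y k(x,y)\bigr]\cdot\nabla_x\mfrac{\delta G}{\delta\rho}(\rho)(x)\,\rho(\dd x)\rho(\dd y)\\
    &\qquad+\iint_{\R^d\times\R^d}\!k(x,y)\nabla_x\mfrac{\delta G}{\delta\rho}(\rho)(x)\cdot\nabla_y\mfrac{\delta G}{\delta\rho}(\rho)(y)\,\rho(\dd x)\rho(\dd y) + \mathcal{O}(1/N),
\end{align*}
where the $\mathcal{O}(1/N)$ terms collect the contributions coming from the $\frac{\delta^2 G}{\delta \rho^2}$-piece and from $\int k(x,x)\Delta\tfrac{\delta G}{\delta\rho}\,\rho(\dd x)$; they vanish in the limit $N\to\infty$ by the boundedness and regularity hypotheses on $k$ and $G$.

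Next I would match the surviving two terms with the claimed expression. The second integral is, by \eqref{eq:dual inner product}, exactly $\bigl\lVert\tfrac{\delta G}{\delta\rho}(\rho)\bigr\rVert^2_{T_\rho^*M}$. For the first integral I would use the explicit form \eqref{eq:KL func derivative} of $\tfrac{\delta \KL}{\delta\rho}(\rho)(y)=\log\rho(y)+V(y)$ together with the identity $\rho(y)\nabla_y\log\rho(y)=\nabla_y\rho(y)$ and integrate by parts in $y$ on the $\nabla_y\log\rho$ contribution; this converts
$\iint \nabla_x\tfrac{\delta G}{\delta\rho}(x)\cdot k(x,y)\nabla_y\log\rho(y)\,\rho(\dd x)\rho(\dd y)$
into $-\iint \nabla_x\tfrac{\delta G}{\delta\rho}(x)\cdot \nabla_y k(x,y)\,\rho(\dd x)\rho(\dd y)$, which is precisely what is needed to identify the first integral as $-\bigl\langle\tfrac{\delta G}{\delta\rho}(\rho),\tfrac{\delta\KL}{\delta\rho}(\rho)\bigr\rangle_{T_\rho^*M}$.

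The main (and essentially only) obstacle is the control of the $\mathcal{O}(1/N)$ remainder terms; under Assumptions~\ref{ass:k}--\ref{ass:bounded} and the stipulated regularity of $G$, these are pointwise bounded in $\rho$ so that the limit can be taken termwise, and otherwise the argument is a bookkeeping exercise combining the chain rule for functional derivatives with a single integration by parts.
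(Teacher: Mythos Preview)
Your proof is correct and follows essentially the same approach as the paper: compute the functional derivatives of $e^{NG}$, substitute into the generator~\eqref{eq:generator}, divide by $Ne^{NG(\rho)}$ to isolate the $\mathcal{O}(1)$ terms, and then identify the limit via the integration-by-parts identity that converts $\nabla_y k(x,y)\,\rho(\dd y)$ into $-k(x,y)\nabla_y\log\rho(y)\,\rho(\dd y)$. The paper presents the latter step slightly differently (writing the identity for the $y$-integral first and then pairing with $\nabla_x\tfrac{\delta G}{\delta\rho}$), but the content is the same.
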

We postpone this calculation to the appendix.

\begin{lemma}[Mean-field limit, formal] Fix a test function $G:[0,T]\times \mathcal{P}(\mathbb{R}^d)$ of sufficient regularity (as above) and let $\rho\super{N,G}_t$ be the process with generator~\eqref{eq:tilted generator}. Then $\rho\super{N,G}\to\rho\super{\infty,G}$, which weakly solves the ``tilted Stein PDE'':
\begin{equation}
  \partial_t \rho_t(x) = \nabla_x \cdot \left( \rho_t(x) \int_{\mathbb{R}^d}\!\left[ k(x,y) \nabla V(y) - \nabla_y k(x,y) - 2k(x,y) \nabla_y \mfrac{\delta G_t}{\delta\rho}(\rho_t)(y) \right]\,\rho_t(\mathrm{d} y) \right).
\label{eq:tilted Stein pde}
\end{equation}

\label{lem:mean-field limit}
\end{lemma}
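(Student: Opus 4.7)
The plan is to test convergence against linear functionals $F(\rho) := \int_{\mathbb{R}^d} \phi \, \mathrm{d}\rho$ with $\phi \in C_c^\infty(\mathbb{R}^d)$, using Dynkin's formula applied to the tilted generator~\eqref{eq:tilted generator}. For such $F$ one has $\frac{\delta F}{\delta \rho}(\rho)(x) = \phi(x)$ and $\frac{\delta^2 F}{\delta \rho^2}(\rho)(x,y) = 0$, so substitution into~\eqref{eq:generator} gives
\begin{equation*}
(\Qcal\super{N} F)(\rho) = \iint_{\R^d\times\R^d}\!\big[-k(x,y)\nabla V(y)+\nabla_y k(x,y)\big]\cdot\nabla\phi(x)\, \rho(\dd x)\,\rho(\dd y) + \mfrac{1}{N}\int_{\R^d}\!k(x,x) \Delta\phi(x)\, \rho(\dd x),
\end{equation*}
while the two $G_t$-dependent contributions in~\eqref{eq:tilted generator} coincide by symmetry of $k$ and combine to
\begin{equation*}
2\iint_{\R^d\times\R^d}\!k(x,y)\,\nabla\phi(x)\cdot\nabla\mfrac{\delta G_t}{\delta\rho}(\rho)(y)\,\rho(\dd x)\,\rho(\dd y).
\end{equation*}

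Next, I would invoke Dynkin's formula to conclude that $M_t\super{N,G} := F(\rho_t\super{N,G}) - F(\rho_0\super{N,G}) - \int_0^t (\Qcal\super{N,G}_s F)(\rho_s\super{N,G})\,\dd s$ is a martingale under $\PP\super{N,G}_{[0,T]}$, and would then bound its fluctuations via the carré du champ $\Gamma(F,F)(\rho) := (\Qcal\super{N,G}_t F^2)(\rho) - 2F(\rho)(\Qcal\super{N,G}_t F)(\rho)$. Only the $\frac{\delta^2 (F^2)}{\delta \rho^2}$-term of~\eqref{eq:generator} contributes at leading order, yielding $\Gamma(F,F)(\rho) = \tfrac{2}{N}\iint k(x,y)\nabla\phi(x)\cdot\nabla\phi(y)\,\rho(\dd x)\rho(\dd y) + O(1/N)$, so that boundedness of $k$ (Assumption~\ref{ass:bounded}) gives $\E[(M_t\super{N,G})^2]=O(1/N)$. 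Combined with the standard tightness criterion of Aldous--Rebolledo, this delivers relative compactness of $\{\rho\super{N,G}\}$ in the space of measure-valued paths, and any limit point $\rho\super{\infty,G}$ satisfies, for every $\phi \in C_c^\infty(\R^d)$,
\begin{equation*}
\mfrac{\dd}{\dd t}\!\int_{\R^d}\!\phi\,\dd\rho_t = \iint_{\R^d\times\R^d}\!\big[-k(x,y)\nabla V(y)+\nabla_y k(x,y)+2k(x,y)\nabla_y\mfrac{\delta G_t}{\delta\rho}(\rho_t)(y)\big]\cdot\nabla\phi(x)\,\rho_t(\dd x)\rho_t(\dd y),
\end{equation*}
which is precisely the weak formulation of~\eqref{eq:tilted Stein pde} after integration by parts in $x$.

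To upgrade this to convergence of the entire sequence, rather than merely of subsequences, one would appeal to uniqueness of weak solutions of~\eqref{eq:tilted Stein pde}. Under sufficient regularity of $t\mapsto\frac{\delta G_t}{\delta\rho}(\rho)$, this follows by a Grönwall-type stability estimate in the spirit of \cite[Thm.~2.7]{lu2019scaling}, since the $G$-dependent correction merely modifies the interaction kernel in a Lipschitz manner. The main obstacle I anticipate lies in this uniqueness step: the additional term $2k(x,y)\nabla_y\frac{\delta G_t}{\delta\rho}(\rho_t)(y)$ couples the drift to $\rho_t$ through the functional derivative of $G_t$, and controlling it uniformly in $\rho$ requires assumptions on $G$ that are not spelled out in the lemma. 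Since the statement is declared formal, however, the above Dynkin--carré-du-champ computation suffices to identify~\eqref{eq:tilted Stein pde} as the limiting equation.
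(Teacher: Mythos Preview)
Your argument is correct (at the same formal level as the paper's) but follows a genuinely different route. The paper's proof sketch argues abstractly: it observes pointwise convergence of the generators $\Qcal\super{N,G}_t\to\Qcal\super{\infty,G}_t$, cites a Trotter--Kurtz--type result \cite[Th.~2.12]{Liggett1985} to infer convergence of the processes, then \emph{assumes} as an ansatz that the limit law is a Dirac mass $\delta_{\rho_t}$ and verifies consistency via the Chapman--Kolmogorov forward equation. Your approach is the classical martingale-problem route: test against linear functionals, use Dynkin's formula, control fluctuations through the carr\'e du champ, obtain tightness via Aldous--Rebolledo, and identify every limit point as a weak solution of~\eqref{eq:tilted Stein pde}. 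The payoff of your method is that the vanishing of $\Gamma(F,F)=O(1/N)$ \emph{proves} the limit is deterministic rather than postulating it, and the uniqueness step (which you rightly flag as the delicate point) makes explicit what the paper leaves implicit in its appeal to convergence of semigroups. The paper's version is terser and sidesteps the tightness machinery, but at the cost of the unverified ansatz; your version is closer to how one would actually make the lemma rigorous.
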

\begin{proof}[Proof sketch]
Clearly the generator~\eqref{eq:tilted generator} converges pointwise in $\rho$ to
\begin{align*}
    (\Qcal\super{\infty,G}_t F)(\rho):=&
    \iint_{\R^d\times\R^d}\!\big\lbrack - k(x,y) \nabla V(y) + \nabla_{y} k(x,y) \big\rbrack \cdot\nabla_x\!\left(\mfrac{\delta F}{\delta \rho}(\rho) \right)\! (x)\, \rho(\dd x)\, \rho(\mathrm{d}y)\\
    &+ 
    \iint_{\R^d\times\R^d}\! k(x,y) \nabla_x\cdot\nabla_y\!
    \Big(  
      \mfrac{\delta F}{\delta\rho}(\rho)(x)\mfrac{\delta G_t}{\delta\rho}(\rho)(y)
      +
      \mfrac{\delta G_t}{\delta\rho}(\rho)(x)\mfrac{\delta F}{\delta\rho}(\rho)(y)
    \Big)\!\,\rho(\mathrm{d}x)\,\rho(\dd y).
\end{align*}
Hence formally by \cite[Th.~2.12]{Liggett1985}, the process $\rho\super{N,G}_t$ converges to some (a priori stochastic) process $\rho\super{\infty,G}_t$. It remains to show that this process satisfies~\eqref{eq:tilted Stein pde} and is thus deterministic.

Let $P\super{\infty,G}_t$ be the time marginal of the path measure $\PP\super{\infty,G}_{[0,T]}$ and let us make the ansatz that it is indeed deterministic: $P\super{\infty,G}_t=\delta_{\rho_t}$ for some (by assumption) sufficiently regular path $\rho_t(\dd x)=\rho_t(x)\,\dd x$. Then using the Chapman-Kolmogorov forward equation,
\begin{align*}
    \int_{\R^d}\! \mfrac{\delta F}{\delta\rho_t}(\rho_t)(x)\,\partial_t\rho(\dd x)
    =
    \frac{\dd}{\dd t} F(\rho_t)
    =
    \frac{\dd}{\dd t}\int\!F(\rho)\,P_t(\dd \rho)
    &=
    \int\!(\Qcal\super{\infty,G}_t F)(\rho)\,P_t(\dd\rho)\\
    &= 
    (\Qcal\super{\infty,G}_t F)(\rho_t)
    =
    \int_{\R^d}\! \mfrac{\delta F}{\delta\rho}(\rho_t)(x)\,B\super{G_t}(\rho_t(x))\,\dd x,
\end{align*}
where $B\super{G_t}(\rho_t(x))$ is the right-hand side of \eqref{eq:tilted generator}. Since this equation holds for arbitrary (sufficiently regular) test functions $F$, it follows that $\partial_t\rho_t=B\super{G_t}(\rho_t)$ weakly, and so the ansatz is justified for $\rho_t=\rho\super{\infty,G}_t$.
\end{proof}

\begin{remark}
  In the notation introduced in Section~\ref{sec:GF}, the tilted Stein PDE~\eqref{eq:tilted Stein pde} becomes
  \begin{equation*}
      \partial_t\rho_t=-\KK_{\rho_t}\frac{\delta\KL}{\delta\rho}(\rho_t) + 2\KK_{\rho_t}\frac{\delta G_t}{\delta\rho}(\rho_t).
  \end{equation*}
\end{remark}

We finally have all the ingredients to prove the main result of this section.
\begin{proof}[Proof sketch of Theorem \ref{th:large deviations}]
To simplify, we only derive the rate functional for an arbitrary path $\rho=(\rho_t)_{t\in\lbrack0,T\rbrack}$ satisfying~\eqref{eq:conditions EDI}. Corresponding to this path, let $\xi_t\in T_{\rho_t}^*M$ be maximal in
\begin{equation*}
   \tensor[_{T_{\rho_t}^*M}]{\langle \xi_t,\partial_t\rho_t\rangle}{_{T_{\rho_t}M}} - \Ham(\rho_t,\xi_t)
    =
    \tensor[_{T_{\rho_t}^*M}]{\left\langle \xi_t, \partial_t\rho_t + \KK_{\rho_t}\mfrac{\delta \KL}{\delta\rho}(\rho_t)\right\rangle}{_{T_{\rho_t}M}} +  \tensor[_{T_{\rho_t}^*M}]{\langle\xi_t,\KK_{\rho_t}\xi_t\rangle}{_{T_{\rho_t}M}},
\end{equation*}
pointwise in $t\in[0,T]$, where the brackets are defined in Corollary \ref{cor:duality}, and $\mathcal{H}$ is the limit obtained in Lemma~\ref{lem:limit Hamiltonian}. Again for simplicity we shall assume that this maximiser exists, and in fact $\xi_t\in C_c^\infty(\R^d)\subset T_{\rho_t}^*M$.

Upon differentiation with respect to $\xi_t$ we recover the tilted Stein PDE~\eqref{eq:tilted Stein pde} with $G_t(\rho):=\tensor[_{\mathcal{D'}(\R^d)}]{\langle\rho_t,\xi_t\rangle}{_{C_c^\infty(\R^d)}}$ so that $(\delta G_t/\delta\rho)(\rho_t)=\xi_t$. 

Thus for this particular choice, Lemma~\ref{lem:mean-field limit} shows that the tilted process converges to the path we picked in the beginning of the proof, i.e.
\begin{equation}
    \PP_{[0,T]}\super{N,G}\to\delta_{\rho}.
\label{eq:inproof mean-field limit}
\end{equation}

Now pick an arbitrary small ball $\Bcal_\epsilon(\rho)$ (in Skorokhod space) around the path $\rho$. By Lemma~\ref{lem:Girsanov} we may change the measure and write, for small $\epsilon>0$:
\begin{align}
    \frac1N\log\PP_{[0,T]}\super{N}(\Bcal_\epsilon(\rho))
    =
    \frac1N\log\int_{\Bcal_\epsilon(\rho)} \!\frac{d\PP_{\lbrack0,T\rbrack}\super{N}}{d\PP_{\lbrack0,T\rbrack}\super{N,G}}(\hat \rho)\,d\PP_{\lbrack0,T\rbrack}\super{N,G}(\dd \hat \rho)
    \approx \frac1N\log\frac{d\PP_{\lbrack0,T\rbrack}\super{N}}{d\PP_{\lbrack0,T\rbrack}\super{N,G}}(\rho) + \frac1N\log\PP_{[0,T]}\super{N,G}(\Bcal_\epsilon(\rho)).
\label{eq:tilt application}
\end{align}
By~\eqref{eq:inproof mean-field limit} the last term vanishes, and so by Lemmas~\ref{lem:Girsanov} and~\ref{lem:limit Hamiltonian} we find as $N\to\infty$ and small $\epsilon>0$,
\begin{align*}
    \frac1N\log\PP_{[0,T]}\super{N}(\Bcal_\epsilon(\rho))
    &\to
    - \tensor[_{\mathcal{D'}(\R^d)}]{\langle \rho_T,\xi_T\rangle}{_{C_c^\infty(\R^d)}}
    + \tensor[_{\mathcal{D'}(\R^d)}]{\langle \rho_0,\xi_0\rangle}{_{C_c^\infty(\R^d)}} 
    - \int_0^T\!\tensor[_{\mathcal{D'}(\R^d)}]{\langle\rho_t,\partial_t\xi_t\rangle}{_{C_c^\infty(\R^d)}}\,\mathrm{d}t - \int_0^T\!\Ham(\rho_t,\xi_t)\,\dd t\\
    &= -\sup_{\hat\xi:(0,T)\to T_{\rho}^* M} \int_0^T\!\big\lbrack \tensor[_{T_{\rho_t}^*M}]{\langle \hat\xi_t,\partial_t\rho_t\rangle}{_{T_{\rho_t}M}} - \Ham(\rho_t,\hat\xi_t)\big\rbrack\,\dd t\\
    &= -\sup_{\hat\xi:(0,T)\to T_{\rho}^* M} \int_0^T\!\left\lbrack \tensor[_{T_{\rho_t}^*M}]{\left\langle \hat\xi_t,\partial_t\rho_t+\KK_{\rho_t}\mfrac{\delta \KL}{\delta \rho}(\rho_t)\right\rangle}{_{T_{\rho_t}M}} - \left\Vert \xi_t\right\Vert_{T_\rho^*M}^2\right\rbrack\,\dd t\\
    &=-\I_{\lbrack0,T\rbrack}(\rho),
\end{align*}
where the supremum over tiltings $\hat\xi$ appears due to the definition of $\xi$.

In general, the perturbation functions $\xi$ and $G$ do not have sufficient regularity to apply Lemma~\ref{lem:Girsanov}, if they exist at all, so one typically needs technically demanding approximation arguments to make this into a rigorous argument, see for example \cite[Ch. 13]{FengKurtz2006} and \cite{DawsonGaertner1987}. 
\end{proof}

From the large-deviation result in  Theorem~\ref{th:large deviations} and the contraction principle~\cite[Th.~4.2.1]{DemboZeitouni1998} we immediately obtain the large-deviation principle for the ergodic limit~\eqref{eq:ergodic limit}. The ensuing rate functional will be further analysed in Section \ref{sec:Fisher}.

\begin{corollary}
\label{cor:fdd} Fix $T>0$ and $\rho_0\in M$. Let $(\rho\super{N}_t)_{t\in\lbrack0,T\rbrack}$ be the path of the empirical measure associated to the SDE~\eqref{eq:SDE}, and let
\begin{equation*}
    \bar\rho\super{N}_T:=\mfrac1T\int_0^T\!\rho\super{N}_t\,\mathrm{d}t
\end{equation*}
be the ergodic average thereof. Then $\bar\rho\super{N}_T$ satisfies a large-deviation principle as $N\to\infty$, i.e.
\begin{equation*}
  \PP(\bar{\rho}\super{N}_T\in\A) \stackrel{N\to\infty}{\sim} \exp\big({\textstyle-N\inf_{\bar{\rho}\in\A} \bar\I_T(\bar{\rho})}\big),
\end{equation*}
with rate functional:
\begin{equation}
    \bar\I_T(\bar{\rho})
    :=
    \inf_{\substack{\hat \rho:\lbrack0,T\rbrack\to M:\\
            \hat\rho_0=\rho_0,\\
            T^{-1}\int_0^T\hat\rho_t\,\mathrm{d}t=\bar\rho}}
    \I_{\lbrack0,T\rbrack}(\hat\rho).
\label{eq:ergodic LDP fixed T}
\end{equation}
\label{cor:ergodic LDP fixed T}
\end{corollary}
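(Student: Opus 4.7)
The plan is to invoke the contraction principle to transfer the path-level large-deviation principle of Theorem~\ref{th:large deviations} to the ergodic-average level. Since the authors explicitly characterise the corollary as an immediate consequence of the contraction principle~\cite[Th.~4.2.1]{DemboZeitouni1998}, the task reduces to identifying the appropriate continuous functional and justifying the reformulation.

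First, I would introduce the time-averaging map $F_T\colon D([0,T],\mathcal{P}(\mathbb{R}^d))\to\mathcal{P}(\mathbb{R}^d)$ defined weakly by
\begin{equation*}
    \int_{\mathbb{R}^d}\!\phi\,\dd F_T(\rho) := \mfrac{1}{T}\int_0^T\!\!\int_{\mathbb{R}^d}\!\phi(x)\,\rho_t(\dd x)\,\dd t, \qquad \phi\in C_b(\mathbb{R}^d),
\end{equation*}
so that by construction $F_T(\rho\super{N})=\bar\rho\super{N}_T$. Second, I would verify that $F_T$ is continuous when the path space is equipped with the Skorokhod topology and $\mathcal{P}(\mathbb{R}^d)$ with the topology of weak convergence. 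This is a routine exercise: Skorokhod convergence $\rho^n\to\rho$ implies $\langle\phi,\rho^n_t\rangle\to\langle\phi,\rho_t\rangle$ at all continuity points $t\in[0,T]$ (a co-countable subset), and together with the uniform bound $|\langle\phi,\rho_t^n\rangle|\le\|\phi\|_\infty$ the dominated convergence theorem gives convergence of the time integral.

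Third, applying the contraction principle to $F_T$ and the LDP of Theorem~\ref{th:large deviations} yields that $\bar\rho\super{N}_T$ satisfies an LDP with rate functional
\begin{equation*}
    \bar{\mathcal{I}}_T(\bar\rho) = \inf\{\mathcal{I}_{\lbrack0,T\rbrack}(\hat\rho) : F_T(\hat\rho)=\bar\rho\}.
\end{equation*}
The additional constraint $\hat\rho_0=\rho_0$ appearing in~\eqref{eq:ergodic LDP fixed T} reflects the convention (stated in the remark preceding Theorem~\ref{th:large deviations}) that the initial positions of the particles are chosen deterministically with $\rho\super{N}_0\to\rho_0$. Since atypical initial conditions are excluded by this convention, effectively $\mathcal{I}_{[0,T]}$ should be viewed as $+\infty$ outside of paths satisfying $\hat\rho_0=\rho_0$, which justifies the explicit constraint in the infimum.

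The main (and only real) obstacle is that Theorem~\ref{th:large deviations} is itself formal, so the corollary inherits the same status; a fully rigorous treatment would require specifying a good path space (most naturally $C([0,T],\mathcal{P}(\mathbb{R}^d))$ since the mean field limit is continuous), verifying that $\mathcal{I}_{[0,T]}$ has compact sublevel sets therein, and making the initial-condition restriction precise along the lines of \cite[Ch.~13]{FengKurtz2006} or \cite{DawsonGaertner1987}. These points, however, do not affect the form of the resulting rate functional, so the contraction principle directly produces the claimed expression~\eqref{eq:ergodic LDP fixed T}.
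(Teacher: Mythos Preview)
Your proposal is correct and matches the paper's approach exactly: the paper states the corollary as an immediate consequence of Theorem~\ref{th:large deviations} together with the contraction principle \cite[Th.~4.2.1]{DemboZeitouni1998}, without giving any further details. Your elaboration on the continuity of the time-averaging map and the handling of the initial-condition constraint simply spells out what the paper leaves implicit.
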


\section{Connecting gradient flows to large deviations}
\label{sec:connection}

As stated in the Introduction, any evolution equation of gradient flow type in fact admits many other non-equivalent gradient flow structures \cite{dietert2015characterisation}. In the case of the Stein PDE~\eqref{eq:Stein pde} this phenomenon is exemplified by the structures proposed in \cite{liu2017stein,duncan2019geometry} and \cite{chewi2020svgd}. However, each gradient flow structure is related to a particular form of the noise in the corresponding interacting particle system. In this section we leverage our results from Sections \ref{sec:GF} and \ref{sec:LD} to make our Informal Result~\ref{res:ldp and GF} precise: the gradient flow structure from Section~\ref{sec:GF} corresponds to the noise described by the SDE~\eqref{eq:SDE}.
Our rigorous statement draws a connection between the reformulation of the gradient flow dynamics in terms of the energy-dissipation (in-)equality \eqref{eq:EDI} and the large-deviation functional \eqref{eq:LD I}:

\begin{theorem}[Connection between energy-dissipation and large deviations] For any curve $\rho:[0,T] \rightarrow M$ such that $t \mapsto \mathrm{KL}(\rho_t)$ is differentiable and \eqref{eq:conditions EDI} holds for all $t \in [0,T]$, the left-hand side of \eqref{eq:EDI} coincides with $\I_{[0,T]}(\rho)$ up to a factor of $\tfrac{1}{2}$, that is
\begin{equation}
    \I_{\lbrack0,T\rbrack}(\rho)=\mfrac12\mathrm{KL}(\rho_T) - \mfrac12\mathrm{KL}(\rho_0) + \frac{1}{4} \int_0^T\!\left\Vert \partial_t \rho_t \right\Vert^2_{T_{\rho_t} M}\,\dd t + \int_0^T\!\left\Vert \mfrac{\delta \tfrac12\mathrm{KL}}{\delta \rho} (\rho_t) \right\Vert_{T_{\rho_t}^* M}^2\,\dd t.
\label{eq:mpr decomposition}
\end{equation}
This implies that the large deviations from Theorem~\ref{th:large deviations} uniquely induce the gradient flow system with driving energy $\tfrac12\KL(\rho)$ and cotangent norm $\lVert\cdot\rVert_{T^*_\rho M}$ in the sense of~\cite{mielke2011gradient}.
\label{th:mpr}
\end{theorem}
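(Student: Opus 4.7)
The plan is to obtain~\eqref{eq:mpr decomposition} by a completing-the-square argument directly on the rate functional~\eqref{eq:LD I}. First I would rewrite the drift inside the norm in~\eqref{eq:LD I} using the gradient identification from Section~\ref{sec:GF}: by~\eqref{eq:KL func derivative} and Lemma~\ref{lem:gradient} (applied with $\mathcal{F}=\mathrm{KL}$), the operator $\nabla_x\cdot(\rho_t\int[k(\cdot,y)\nabla V(y)-\nabla_y k(\cdot,y)]\rho_t(\mathrm{d}y))$ is precisely $-(\mathrm{grad}_k\mathrm{KL})(\rho_t)=-\mathbb{K}_{\rho_t}\tfrac{\delta\mathrm{KL}}{\delta\rho}(\rho_t)$. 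Hence
\begin{equation*}
    \I_{[0,T]}(\rho)=\tfrac14\int_0^T\bigl\lVert\partial_t\rho_t+\mathbb{K}_{\rho_t}\tfrac{\delta\mathrm{KL}}{\delta\rho}(\rho_t)\bigr\rVert_{T_{\rho_t}M}^2\,\mathrm{d}t,
\end{equation*}
which is the natural starting point for the completion of the square.

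Next I would expand this squared norm into three contributions: the kinetic term $\tfrac14\lVert\partial_t\rho_t\rVert_{T_{\rho_t}M}^2$, the Onsager term $\tfrac14\lVert\mathbb{K}_{\rho_t}\tfrac{\delta\mathrm{KL}}{\delta\rho}(\rho_t)\rVert_{T_{\rho_t}M}^2$, and the cross term $\tfrac12\langle\partial_t\rho_t,\mathbb{K}_{\rho_t}\tfrac{\delta\mathrm{KL}}{\delta\rho}(\rho_t)\rangle_{T_{\rho_t}M}$. For the Onsager term, the isometry statement in Proposition~\ref{prop:duality} immediately converts it into $\tfrac14\lVert\tfrac{\delta\mathrm{KL}}{\delta\rho}(\rho_t)\rVert_{T_{\rho_t}^*M}^2=\lVert\tfrac{\delta\tfrac12\mathrm{KL}}{\delta\rho}(\rho_t)\rVert_{T_{\rho_t}^*M}^2$, which matches the cotangent term in~\eqref{eq:mpr decomposition}. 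For the cross term, I would use the duality~\eqref{eq:Onsager duality} from Corollary~\ref{cor:duality} to rewrite the inner product as $\tensor[_{T_{\rho_t}^*M}]{\langle\tfrac{\delta\mathrm{KL}}{\delta\rho}(\rho_t),\partial_t\rho_t\rangle}{_{T_{\rho_t}M}}$, and then invoke the chain-rule hypothesis in~\eqref{eq:conditions EDI} to identify this integrand with $\tfrac{\mathrm{d}}{\mathrm{d}t}\mathrm{KL}(\rho_t)$. Integrating in $t$ telescopes the cross term to $\tfrac12(\mathrm{KL}(\rho_T)-\mathrm{KL}(\rho_0))$, yielding exactly~\eqref{eq:mpr decomposition}.

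For the second assertion, I would observe that~\eqref{eq:mpr decomposition} is precisely the Mielke--Peletier--Renger decomposition of a large-deviation rate functional into a boundary entropy term plus an $L^2$-in-time dissipation of the form $\tfrac14\lVert\dot\rho\rVert^2+\lVert\tfrac{\delta\mathcal{E}}{\delta\rho}\rVert_*^2$, with driving functional $\mathcal{E}=\tfrac12\mathrm{KL}$ and cotangent norm $\lVert\cdot\rVert_{T_\rho^*M}$. As in~\cite{mielke2011gradient}, one then notes that $\I_{[0,T]}(\rho)\geq0$ (it is a square) and vanishes precisely along solutions of $\partial_t\rho_t=-\mathbb{K}_{\rho_t}\tfrac{\delta\mathrm{KL}}{\delta\rho}(\rho_t)$, which by Proposition~\ref{prop:EDE} is equivalent to the Stein PDE~\eqref{eq:Stein pde} and to the energy-dissipation equality~\eqref{eq:EDI}. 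This uniquely identifies the induced gradient-flow triple $(M,\tfrac12\mathrm{KL},\lVert\cdot\rVert_{T_\rho^*M})$.

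I do not expect a serious technical obstacle here: the regularity hypotheses~\eqref{eq:conditions EDI} have been tailored precisely so that the chain rule and the duality pairings make sense, and the isometry~\eqref{eq:Onsager isometry} is already established in Proposition~\ref{prop:duality}. The only point requiring a little care is ensuring that both representations of the cross term (through Corollary~\ref{cor:duality} and through the chain rule) refer to the same object; this is exactly the content of~\eqref{eq:natural duality}, which guarantees that the extended duality on $T_\rho^*M\times T_\rho M$ is compatible with the distributional pairing used in defining the functional derivative.
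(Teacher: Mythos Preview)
Your proposal is correct and follows essentially the same completing-the-square argument as the paper; indeed, the paper's proof simply refers back to the computation in Proposition~\ref{prop:EDE} (which is exactly your expansion via the isometry~\eqref{eq:Onsager isometry}, the duality~\eqref{eq:Onsager duality}, and the chain rule in~\eqref{eq:conditions EDI}). The only nuance is in the second assertion: the paper makes explicit that the two squared norms $\tfrac14\lVert\cdot\rVert_{T_\rho M}^2$ and $\lVert\cdot\rVert_{T_\rho^*M}^2$ are convex duals of each other and then invokes \cite[Th.~2.1(ii)]{mielke2011gradient} directly for uniqueness, whereas your argument (``vanishes precisely along solutions'') identifies the gradient-flow \emph{equation} but does not by itself single out the driving energy and dissipation mechanism---for that you still need to appeal to the structural uniqueness result in \cite{mielke2011gradient}, as the paper does.
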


Connections between energy-dissipation and large deviations have a long history in physics, starting from the idea that for non-evolving random systems, the Boltzmann-Gibbs-Helmholtz free energy $\F(\rho)$ of a macroscopic state $\rho$ is related to the probability of corresponding microstates through $\PP\super{N}\big(B_\epsilon(\rho)\big)\sim\exp\big(-N \mathcal{F}(\rho)/(\kappa_B T)\big)$. For the sake of brevity we ignore the Boltzmann constant $\kappa_B$ and the constant temperature $T$.
A dynamical version of this principle was proposed by Onsager and Machlup~\cite{Onsager1931I, Onsager1953MachlupI}, showing that for a number of physical examples with reversible randomness on the microscopic level, the path measures behave like
\begin{equation}
    \PP\super{N}\big(\Bcal_\epsilon(\rho)\big)\sim \exp\Big(-N\Big\lbrack
        \F(\rho_T) - \F(\rho_0)  + \tfrac12\int_0^T\lvert\partial_t\rho_t\rvert^2_{\rho_t}\,dt +\tfrac12 \int_0^T\!\lvert\tfrac{\delta\F}{\delta\rho}(\rho_t)\rvert^2_{\rho_t^*}\,\dd t
    \Big\rbrack\Big),
\label{eq:Onsager-Machlup}
\end{equation}
at least close to equilibrium. In the above display, 
$\F$ stands for an appropriate free energy functional, $\lvert\cdot\rvert_{\rho_t}$ and $\lvert\cdot\rvert_{\rho_t^*}$ for suitable dual norms, and $\varepsilon>0$ is assumed to be small. Moreover, Onsager and Machlup demonstrated that these constituents define a corresponding gradient flow structure ~\footnote{The reversibility of a Markov process is often called detailed balance in the physics literature to distinguish it from thermodynamical reversibility and was referred to as reciprocity relations by Onsager and Machlup \cite{Onsager1931I}. Moreover, they called the energy-dissipation inequality~\eqref{eq:EDI} the principle of least dissipation.}. Note that the exponent in~\eqref{eq:Onsager-Machlup} has the dimensions of a free energy (ignoring the Boltzmann constant and the constant temperature), which is consistent with the Boltzmann-Gibbs-Helmholtz free energy as described above.

More recently, this principle was extended to include more general dynamics that are also allowed to evolve far away from their equilibrium state~\cite{mielke2016generalization}. It turns out that for any microscopic \emph{reversible} Markov process, the corresponding large-deviations rate can be decomposed in such a way that it uniquely defines the free energy functional~$\F$ and the dissipation mechanism (in~\eqref{eq:Onsager-Machlup} encoded in the two norms $\lvert\cdot\rvert_{\rho_t}$ and $\lvert\cdot\rvert_{\rho_t^*}$) of a gradient flow.
For quadratic rate functionals, as in our case~\eqref{eq:LD I}, this decomposition corresponds to an expansion of squares, which basically amounts to connecting the energy-dissipation (in-)equality \eqref{prop:EDE} to the large-deviation functional~\eqref{eq:LD I}. This connection is the rigorous statement of the Onsager-Machlup principle described above, as well as of our Informal Result~\ref{res:ldp and GF}. 

\begin{proof}[Proof of Theorem \ref{th:mpr}]
The decomposition follows the same argument as the proof of Proposition~\ref{prop:EDE}. Note that by~\eqref{eq:Onsager isometry} and \eqref{eq:Onsager duality}, the two squared norms are convex duals to each other, i.e. for all $\rho\in M$, $\xi\in T_\rho M$ and $\phi\in T^*_\rho M$
\begin{align*}
    \mfrac14\left\Vert\xi \right\Vert_{T_{\rho}M}^2 = \sup_{\phi\in T_\rho^*M} \tensor[_{T_\rho^*M}]{\langle \phi,\xi\rangle}{_{T_\rho M}} - \left\Vert\phi \right\Vert_{T_{\rho}^* M}^2,
    &&\text{and}&&
    \left\Vert\phi \right\Vert_{T_{\rho}^* M}^2= \sup_{\xi\in T_\rho M} \tensor[_{T_\rho^*M}]{\langle \phi,\xi\rangle}{_{T_\rho M}} - \mfrac14\left\Vert\xi \right\Vert_{T_{\rho}M}^2.
\end{align*}
This indeed implies that~\eqref{eq:mpr decomposition} is a decomposition in the sense of \cite[eq.~(1.10)]{mielke2011gradient}. The uniquenes of the driving energy $\tfrac12\KL$ and cotangent norm $\lVert\cdot\rVert_{T_\rho^* M}$ follows from \cite[Th.~2.1(ii)]{mielke2011gradient}.
\end{proof}

\begin{remark}
Strictly speaking, this result yields a different gradient flow structure:
\begin{equation*}
    \partial_t\rho_t=-(2\KK_{\rho_t})\big(\tfrac12\KL(\rho_t)\big).
\end{equation*}
The constant $1/2$ in front of the Kullback-Leibner divergence is a known issue; it arises because the Kullback-Leibner divergence is related to the difference of large-deviation costs of moving forward and backward in time (note the time-reversal symmetry \eqref{eq:time-reversal}), hence when only moving forward in time, the constant $1/2$ appears. Similarly, the constant $2$ in front of the Onsager operator $\KK_\rho$ appears as the derivative of the norm $\lVert\cdot\rVert^2_{T_\rho^* M}$. We again refer to \cite{mielke2011gradient} for the details. Of course, one can also absorb the constant $1/2$ in the Onsager operator as we do. 
\label{rem:half}
\end{remark}

From Theorem~\ref{th:mpr} we immediately obtain the following relation between the Stein-Fisher information and free energy dissipation:
\begin{corollary}[{\cite{duncan2019geometry,lu2019scaling}}] For the solutions $\rho$ of the Stein PDE~\eqref{eq:Stein pde}:
\begin{equation*}
  \frac{\mathrm{d}}{\mathrm{d}t} \KL(\rho_t)=-I^k_\Stein(\rho_t).
\end{equation*}
\end{corollary}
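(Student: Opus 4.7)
The plan is to take the gradient flow formulation of the Stein PDE at face value and combine it with the cotangent-space calculus developed in Section~\ref{subsec:cotangent}. First I would invoke the chain rule assumed in the third condition of~\eqref{eq:conditions EDI}, writing
\begin{equation*}
  \tfrac{\dd}{\dd t}\KL(\rho_t) = \tensor[_{T_{\rho_t}^*M}]{\Big\langle \mfrac{\delta\KL}{\delta\rho}(\rho_t),\, \partial_t\rho_t\Big\rangle}{_{T_{\rho_t}M}}.
\end{equation*}
Next I would substitute the gradient-flow form $\partial_t\rho_t=-\KK_{\rho_t}\tfrac{\delta\KL}{\delta\rho}(\rho_t)$, which is equivalent to~\eqref{eq:Stein pde} by Lemma~\ref{lem:gradient}, and unfold the duality pairing via~\eqref{eq:Onsager duality} together with the isometry~\eqref{eq:Onsager isometry} of Proposition~\ref{prop:duality}. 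This collapses the right-hand side to $-\bigl\Vert\tfrac{\delta\KL}{\delta\rho}(\rho_t)\bigr\Vert^2_{T_{\rho_t}^*M}$, which the identification~\eqref{eq:Fisher equalities} of Remark~\ref{rem:Fisher cotangent} recognises as $-I^k_\Stein(\rho_t)$.

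A convenient alternative bypasses the chain rule altogether and simply specialises the energy-dissipation equality~\eqref{eq:EDI} of Proposition~\ref{prop:EDE}: along any Stein-PDE solution, the right-hand side of~\eqref{eq:EDI} vanishes, and applying the isometry of Proposition~\ref{prop:duality} to $\partial_t\rho_t=-\KK_{\rho_t}\tfrac{\delta\KL}{\delta\rho}(\rho_t)$ equates $\Vert\partial_t\rho_t\Vert^2_{T_{\rho_t}M}$ with $\bigl\Vert\tfrac{\delta\KL}{\delta\rho}(\rho_t)\bigr\Vert^2_{T_{\rho_t}^*M}$. The two dissipation terms in~\eqref{eq:EDI} then combine into a single integral of the Stein-Fisher information, and differentiating in the upper endpoint $T$ produces the pointwise identity.

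The only genuine obstacle is regularity rather than algebra: both routes presuppose that $\tfrac{\delta\KL}{\delta\rho}(\rho_t)=\log\rho_t+V$ lies in $T_{\rho_t}^*M$, that $\partial_t\rho_t\in T_{\rho_t}M$, and that $t\mapsto \KL(\rho_t)$ is differentiable, i.e.\ precisely the hypotheses \eqref{eq:conditions EDI}. For smooth strictly positive solutions with $I^k_\Stein(\rho_t)<\infty$, this is built into Remark~\ref{rem:Fisher cotangent} and the Helmholtz decomposition of Proposition~\ref{prop:helmholtz}; for the weaker solution class of~\cite{lu2019scaling} one would have to inherit the required smoothness from the approximation arguments therein, but nothing in the short computation above needs to be altered.
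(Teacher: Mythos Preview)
Your proposal is correct and aligns with the paper: the corollary is stated there without an explicit proof, merely as an immediate consequence of the energy-dissipation decomposition in Theorem~\ref{th:mpr}, which is precisely your second route via Proposition~\ref{prop:EDE}. Your first route (chain rule plus the isometry~\eqref{eq:Onsager isometry} and the identification~\eqref{eq:Fisher equalities}) is the one-line computation underlying that decomposition, so both arguments are essentially the same as what the paper intends.
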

Hence the Fisher information controls the convergence for the Stein PDE as $t\to\infty$ (see also the discussion of the Stein log-Sobolev inequality in \cite[Remark 35]{duncan2019geometry} and \cite{korba2020non}). In the next section we show that the Fisher information also controls the convergence for the stochastic SVGD scheme as both $N\to\infty$ and $T\to\infty$.

\section{Long-time behaviour and the Stein-Fisher information}
\label{sec:Fisher}

Generally speaking, large values of rate functionals promise fast convergence, as the corresponding fluctuations are suppressed.
To obtain interpretable information from \eqref{eq:LD I}, we study the rate functional $\bar\I_T$ governing the ergodic average (see Corollary \ref{cor:ergodic LDP fixed T}) in the regime where the final time $T$ is large. As mentioned in the introduction, the leading order term will be given by the Stein-Fisher information~\eqref{eq:Fisher} (or the kernelised Stein discrepancy). We first show this relation between the Stein-Fisher information and the large-deviation functional, and then investigate the Stein-Fisher information for different kernels. 

\subsection{From large deviations to the Stein-Fisher information}

Recall the large-deviation principle for the ergodic average from Corollary~\ref{cor:ergodic LDP fixed T}, for a fixed final time $T$. By the energy-dissipation decomposition~\eqref{eq:mpr decomposition} we may write, using a change of variables,
\begin{equation*}
    \bar\I_T(\bar{\rho})
        =
    \inf_{\substack{\hat \rho:\lbrack0,1\rbrack\to M:\\
            \hat\rho_0=\rho_0,\\
            \int_0^1\hat\rho_t\,\mathrm{d}t=\bar\rho}}
        \Bigg\{
        \frac12\mathrm{KL}(\hat\rho_1) - \frac12\mathrm{KL}(\rho_0) + \frac{1}{4T} \int_0^1\!\left\Vert \partial_t \hat\rho_t \right\Vert^2_{T_{\hat\rho_t} M}\,\dd t + \frac{T}{4}\int_0^1\!\left\Vert \mfrac{\delta \mathrm{KL}}{\delta \rho} (\hat\rho_t) \right\Vert_{T_{\hat\rho_t}^* M}^2\,\dd t
        \Bigg\}.
\end{equation*}
Therefore, at least formally, we see that the last term, representing the Stein-Fisher information (see Remark \ref{rem:Fisher cotangent}), becomes  dominant and of order $\mathcal{O}(T)$. To make this into a rigorous statement, one might naively take the pointwise limit of $T^{-1}\bar\I_T(\bar{\rho})$; however generally this limit does not exist, nor is it the right limit concept to use. To be consistent with the notion of large deviations we will need to use the concept of $\Gamma$-convergence~\cite{Braides2002}.
Together, the large-deviation principle and the $\Gamma$-convergence will then imply a joint large-deviation principle in $N$ and $T$, see, for example \cite[Sec.~4]{Bertinietal2007}. This will be the content of Corollary~\ref{cor:NT Stein-Fisher LDP}.

Let us stress here that the notion of $\Gamma$-convergence requires a topology on the underlying space, and that the most natural topology is the one for which the limit $4^{-1}I_k^\Stein$ has compact (sub-)level sets, see for example~\cite[Sec.~1.2]{DemboZeitouni1998} and \cite[Lem.~6.2]{Braides2002}. In our case, this means that we will choose the topology defined by \eqref{eq:ws topology}.

\begin{theorem} Fix the initial condition $\rho_0$ such that $\KL(\rho_0)<\infty$. Then in the topology of \eqref{eq:ws topology},
\begin{equation*}
    \mathop{\Gamma\!-\!\lim}_{T\to\infty} \, \frac1T\bar{\I}_T = \frac14 I^k_\Stein,
\end{equation*}
meaning that
\begin{enumerate}
    \item for all converging sequences of probability measures $\bar{\rho}_T\wsconv{T\to\infty}\bar{\rho}$,
    \begin{equation}
        \liminf_{T\to\infty} \frac1T\bar{\I}_T(\bar{\rho}_T)\geq\frac14 I^k_\Stein(\bar{\rho}), \qquad\text{and}
    \label{eq:Gamma lower bound}
    \end{equation}
    \item for all $\bar\rho\in M$, there exists a converging sequence of probability measures $\bar{\rho}_T\wsconv{T\to\infty}\bar{\rho}$ such that
    \begin{equation}
        \limsup_{T\to\infty} \frac1T\bar{\I}_T(\bar{\rho}_T)\leq\frac14 I^k_\Stein(\bar{\rho}).
    \label{eq:Gamma upper bound}
    \end{equation}
\end{enumerate} 
\label{th:large time Gamma}
\end{theorem}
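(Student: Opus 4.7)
The plan is to prove the liminf inequality~\eqref{eq:Gamma lower bound} and the limsup inequality~\eqref{eq:Gamma upper bound} separately, leveraging the decomposition~\eqref{eq:mpr decomposition} together with the identification $I^k_\Stein(\rho)=\lVert\rho/\pi\rVert^2_{T_\pi^* M}$ from Remark~\ref{rem:Fisher cotangent}, which makes $I^k_\Stein$ a squared Hilbert norm of a linear function of~$\rho$. After a change of time variable $t\mapsto t/T$ in the minimisation defining $\bar\I_T$, the rescaled cost $\tfrac1T\bar\I_T$ splits into a boundary contribution of size~$1/T$, a kinetic contribution scaled by~$1/T^2$, and a Fisher contribution of order one, so the latter is the only leading term.

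For the liminf bound, fix $\bar\rho_T\wsconv{T\to\infty}\bar\rho$ with $\liminf_T\tfrac1T\bar\I_T(\bar\rho_T)<\infty$ (else there is nothing to prove) and pick $\hat\rho^T$ near-optimal for the infimum in~\eqref{eq:ergodic LDP fixed T}. Discarding the nonnegative kinetic term and using $\KL(\hat\rho^T_1)\ge 0$ gives
\begin{equation*}
\tfrac1T\bar\I_T(\bar\rho_T)\ \geq\ \tfrac14\int_0^1 I^k_\Stein(\hat\rho^T_t)\,\mathrm{d}t\,-\,\tfrac{1}{2T}\KL(\rho_0).
\end{equation*}
Since $I^k_\Stein$ is convex in~$\rho$, Jensen's inequality together with the constraint $\int_0^1\hat\rho^T_t\,\mathrm{d}t=\bar\rho_T$ yields $\int_0^1 I^k_\Stein(\hat\rho^T_t)\,\mathrm{d}t\geq I^k_\Stein(\bar\rho_T)$. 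The topology~\eqref{eq:ws topology} is the weak topology on $T_\pi^* M$ transported along $\rho\mapsto\rho/\pi$, so $I^k_\Stein$ is lower semi-continuous along the given sequence (cf.\ Corollary~\ref{cor:compact level sets}), and passing $T\to\infty$ concludes.

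For the limsup bound, assume $I^k_\Stein(\bar\rho)<\infty$ and, initially, $\KL(\bar\rho)<\infty$. I would construct the recovery sequence by concatenating a short transition with a long dwell at $\bar\rho$: pick an auxiliary curve $(\tilde\rho_s)_{s\in[0,1]}\subset M$ joining $\rho_0$ to $\bar\rho$ with finite action $A=\int_0^1\lVert\partial_s\tilde\rho_s\rVert^2_{T_{\tilde\rho_s}M}\,\mathrm{d}s$ and $\int_0^1 I^k_\Stein(\tilde\rho_s)\,\mathrm{d}s<\infty$; select $\delta_T\to 0$ with $T^2\delta_T\to\infty$ (say $\delta_T=1/T$); set $\hat\rho^T_t:=\tilde\rho_{t/\delta_T}$ on $[0,\delta_T]$ and $\hat\rho^T_t:=\bar\rho$ on $[\delta_T,1]$; and define $\bar\rho_T:=\int_0^1\hat\rho^T_t\,\mathrm{d}t$, which converges to $\bar\rho$ in~\eqref{eq:ws topology} by linearity of the pairings. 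A direct calculation then shows that the boundary contribution equals $(2T)^{-1}(\KL(\bar\rho)-\KL(\rho_0))\to 0$, the kinetic contribution equals $A/(4T^2\delta_T)\to 0$, and the Fisher contribution converges to $\tfrac14 I^k_\Stein(\bar\rho)$.

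The main obstacle is the existence of the transition curve $(\tilde\rho_s)$ with finite $T_\rho M$-action, since the Stein geodesic distance can be infinite between arbitrary pairs in~$M$. I would handle this through a density and diagonal-extraction argument: establish the limsup first for $\bar\rho$ in a dense, regular class demonstrably reachable from $\rho_0$ by a finite-action curve (for instance perturbations $\pi(1+\KK_\pi\phi)$ with $\phi\in C_c^\infty(\mathbb{R}^d)$, obtained by first running the Stein PDE from $\rho_0$ towards $\pi$ and then travelling in the direction prescribed by $\phi$), then upgrade to arbitrary $\bar\rho$ with $I^k_\Stein(\bar\rho)<\infty$ using the compactness of sublevel sets (Corollary~\ref{cor:compact level sets}) together with the lower semi-continuity just established. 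The remaining case $\KL(\bar\rho)=+\infty$ is treated analogously after first mollifying $\bar\rho$.
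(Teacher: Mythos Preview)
Your liminf argument is essentially identical to the paper's: drop the nonnegative kinetic term and the terminal $\KL$, apply Jensen via convexity of $I^k_\Stein$, and pass to the limit using lower semicontinuity in the topology~\eqref{eq:ws topology}.

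For the limsup, your overall architecture (short transition plus long dwell at $\bar\rho$) matches the paper, but the construction of the transition is different, and this is where the substance lies. You look for a single curve $(\tilde\rho_s)_{s\in[0,1]}$ from $\rho_0$ to $\bar\rho$ with finite kinetic energy $A=\int_0^1\lVert\partial_s\tilde\rho_s\rVert^2_{T_{\tilde\rho_s}M}\,\mathrm{d}s$ and then squeeze it into a window of width $\delta_T$. You correctly flag that the existence of such a curve is the obstacle. Your proposed fix --- run the Stein PDE from $\rho_0$ towards $\pi$, then perturb --- is in spirit correct but, as stated, does not close the gap: the gradient flow does not reach $\pi$ in finite time, so compressing it to $[0,1]$ does not obviously yield finite $A$, and the subsequent density/diagonal argument would still need a concrete finite-action class to start from.

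The paper sidesteps this entirely by working with the full rate functional $\I_{[0,T]}$ rather than splitting kinetic and Fisher parts. It invokes two structural facts: the quasipotential identity $\lim_{T\to\infty}\inf_{\rho:\rho_0=\pi,\rho_T=\hat\rho}\I_{[0,T]}(\rho)=\KL(\hat\rho)$, and the time-reversal symmetry $\I_{[0,T]}(\rho)-\I_{[0,T]}(\overleftarrow{\rho})=\KL(\rho_T)-\KL(\rho_0)$. From these one gets, for each $\epsilon>0$, a finite-time path $\rho_0\to\pi$ of $\I$-cost at most $1$ (by reversing a near-optimal $\pi\to\rho_0$ path) and a finite-time path $\pi\to\bar\rho$ of $\I$-cost at most $\KL(\bar\rho)+1$. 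Concatenating these with the constant path $\bar\rho$ on $[T_1+T_2,T]$ gives $\tfrac1T\bar\I_T(\bar\rho_T)\le \tfrac{2+\KL(\bar\rho)}{T}+\tfrac{T-T_1-T_2}{4T}I^k_\Stein(\bar\rho)$, and one lets $T\to\infty$. The case $\KL(\bar\rho)=\infty$ is then handled by the same diagonal argument you sketch, approximating $\bar\rho/\pi$ in $T_\pi^*M$ by $\phi^\epsilon\in C_c^\infty$. The upshot is that routing through $\pi$ together with time-reversal gives explicit, finite $\I$-bounds without ever needing a finite-kinetic-energy curve between $\rho_0$ and $\bar\rho$; this is the missing ingredient in your construction.
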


\begin{proof}
    For the upper bound we take an arbitrary $\bar{\rho}$, for now assuming that $\KL(\bar{\rho})<\infty$. The statement~\eqref{eq:Gamma upper bound} would be trivial if we could replace the infimum in \eqref{eq:ergodic LDP fixed T} by the constant path $\bar{\rho}_t\equiv\bar{\rho}$. However this is likely to violate the initial condition, and so we first need to construct a finite-time and finite-cost connecting path between $\rho_0$ and $\bar{\rho}$. For this construction we shall need two ingredients. The first ingredient is the fact that $\KL$ is the `quasipotential', i.e. for all $\hat\rho\in M$,
    \begin{equation}
        \lim_{T\to\infty} \inf_{\substack{\rho:[0,T]\to M:\\ \rho_0=\pi,\rho_T=\hat\rho}} \I_{\lbrack0,T\rbrack}(\rho)=\KL(\hat\rho).
    \label{eq:quasipotential}
    \end{equation}
    This statement is standard and can be proven by solving the corresponding Hamilton-Jacobi-Bellman equation, see for example \cite{FreidlinWentzell2012}. The second ingredient is the so-called `time-reversal symmetry', meaning that for arbitrary $T>0$, path $\rho:\lbrack0,T\rbrack\to M$ and reversed path $\overleftarrow{\rho\!}_t:=\rho_{T-t}$,
    \begin{equation}
        \I_{\lbrack0,T\rbrack}(\rho)-\I_{\lbrack0,T\rbrack}(\overleftarrow{\rho})=\KL(\rho_T)-\KL(\rho_0).
    \label{eq:time-reversal}
    \end{equation}
    This symmetry is implied by the reversibility of the process $\rho\super{N}_t$ ~\cite[Th.~ 3.3]{mielke2011gradient}, but it can also be seen directly from the decomposition~\eqref{eq:mpr decomposition}.
    
    We now use these two ingredients to construct a connecting path between $\rho_0$ and $\bar{\rho}$. By \eqref{eq:quasipotential}, there exists a $T_1<\infty$ and a path $\rho^1:\lbrack 0,T_1\rbrack\to M$ connecting $\pi$ to $\rho_0$ so that $\I_{\lbrack0,T_1\rbrack}(\rho^1)\leq \KL(\rho_0)+1$. By the time-reversal symmetry~\eqref{eq:time-reversal}, the reversal $\overleftarrow{\rho^1}$ of this path connects $\rho_0$ to $\pi$, and satisfies $\I_{\lbrack0,T_1\rbrack}(\overleftarrow{\rho^1})\leq1$. Similarly, there exists a $T_2<\infty$ and a path $\rho^2:\lbrack0,T_2\rbrack\to M$ connecting $\pi$ to $\bar{\rho}$ such that $\I_{\lbrack0,T_2\rbrack}(\rho^2)\leq \KL(\bar\rho)+1$.
    
    From these two paths we construct a new, continuous path $\rho:\lbrack0,T\rbrack\to M$ for arbitrary large $T>0$:
    \begin{equation*}
        \rho_t:=
        \begin{cases}
            \overleftarrow{\rho^1_t}, &t\in\lbrack0,T_1),\\
            \rho^2_{t-T_1},             &t\in\lbrack T_1,T_1+T_2),\\
            \bar{\rho},                 &t\in\lbrack T_1+T_2,T\rbrack.
        \end{cases}
    \end{equation*}
    This path has the average value
    \begin{equation*}
        \bar{\rho}_T:=\frac1T\int_0^T\!\rho_t\,dt = \frac{T_1}{T}\Big({\textstyle\frac1T\int_0^T\!\rho^1_t\,dt}\Big) + \frac{T_2}{T}\Big({\textstyle\frac1T\int_0^T\!\rho^2_t\,dt}\Big) + \frac{T-T_1-T_2}{T}\bar{\rho},
    \end{equation*}
    which clearly converges as claimed, $\bar{\rho}_T\wsconv{}\bar{\rho}$.
    
    Plugging this path and average value into definitions~\eqref{eq:LD I}, \eqref{eq:ergodic LDP fixed T} and using \eqref{eq:Onsager isometry} yields 
    \begin{align*}
        \frac1T\bar{\I}_T(\bar{\rho}_T)
        &\leq
        \frac1T\I_{\lbrack0,T\rbrack}(\rho)
        =
        \frac1T\I_{\lbrack0,T_1\rbrack}(\overleftarrow{\rho^1}) 
        + \frac1T\I_{\lbrack T_1,T_1+T_2\rbrack}(\rho^2)
        + \frac1T\I_{\lbrack T_1+T_2,T\rbrack}(\bar{\rho})\\
        &\leq
        \frac{1}{T}
        + \frac{\KL(\bar{\rho})+1}{T} 
        + \frac{T-T_1-T_2}{T} \Big\lVert \frac{\delta\tfrac12\KL}{\delta\rho}(\bar{\rho})\Big\rVert^2_{T^*_{\bar{\rho}}M},
    \end{align*}
    and the upper bound~\eqref{eq:Gamma upper bound} follows by letting $T\to\infty$ together with the assumption $\KL(\bar{\rho})<\infty$.
    
    We now handle the case when $\KL(\bar{\rho})=\infty$ using an additional approximation and a diagonal argument. Without loss of generality we may assume that $I^k_\Stein(\bar{\rho})<\infty$, else the statement~\eqref{eq:Gamma upper bound} would be trivial. It follows from Definition~\ref{def:cotangent spaces} and Remark~\ref{rem:Fisher cotangent} that there exists a sequence $\phi^\epsilon\in C_c^{\infty}(\mathbb{R}^d)$ so that
    \begin{equation*}
        I^k_\Stein(\bar{\rho})
            =
        \big\lVert\mfrac{\bar{\rho}}{\pi}\big\rVert^2_{T^*_\pi M}
            \leftarrow
        \big\lVert\phi^\epsilon\big\rVert^2_{T^*_\pi M}
            =
        I^k_\Stein(\pi \phi^\epsilon).
    \end{equation*}
    Without loss of generality we may assume that $\pi\phi^\epsilon$ is a probability measure. Of course this sequence has uniformly bounded Fisher information, so that it has a convergent subsequence by Lemma~\ref{cor:compact level sets}. Let us relabel this sequence so that $\pi\phi^\epsilon\wsconv{}\bar\rho$. Clearly $\KL(\pi \phi^\epsilon)<\infty$ and so by the construction above there exists an approximating sequence $\bar{\rho}_T^\epsilon\rightharpoonup \pi\phi^\epsilon$ for which $\limsup_{T\to\infty} \frac1T\bar\I_T(\bar{\rho}_T^\epsilon)\leq \frac14 I^k_\Stein(\pi\phi^\epsilon)$. We can then define $\bar\rho_T:=\bar\rho_T^{\epsilon_T}$ where we pick $\epsilon_T\to0$ sufficiently slowly so that $\bar\rho_T\wsconv{}\bar{\rho}$ and
    \begin{equation*}
        \limsup_{T\to\infty} \frac1T\bar\I_T(\bar{\rho}_T)
            =
        \limsup_{\epsilon\to0} \limsup_{T\to\infty} \frac1T\bar\I_T(\bar{\rho}_T^\epsilon)    
            \leq
        \limsup_{\epsilon\to0} \frac14 I^k_\Stein(\pi\phi^\epsilon)
            =
        \frac14 I^k_\Stein(\bar\rho).
    \end{equation*}
    
    For the lower bound~\eqref{eq:Gamma lower bound}, pick an arbitrary convergent sequence $\bar{\rho}_T\rightharpoonup\bar{\rho}$, and for each $T>0$ an arbitrary path $\rho:\lbrack0,T\rbrack\to M$ starting from $\rho_0$ and with average value $T^{-1}\int_0^T\!\rho_t\,\dd t=\bar{\rho}_T$. We again use the decomposition~\eqref{eq:mpr decomposition} as well as \eqref{eq:Fisher equalities}, and neglecting some non-negative terms to derive
    \begin{equation*}
        \frac1T\I_{\lbrack0,T\rbrack}(\rho)
        \geq - \frac1{2T}\KL(\rho_0) + \frac1{4T} \int_0^T\!I^k_\Stein(\rho_t)\,\dd t
        \geq - \frac1{2T}\KL(\rho_0) + \frac1{4} \!I^k_\Stein(\bar{\rho}_T),
    \end{equation*}
    using Jensen's inequality and the fact that the Stein-Fisher information $I^k_\Stein$ is convex. By taking the infimum over all such paths we find $\frac1T\bar{\I}_T(\bar\rho_T)\geq - \frac1{2T}\KL(\rho_0) + \frac1{4} \!I^k_\Stein(\bar{\rho}_T)$. Then the lower bound~\eqref{eq:Gamma lower bound} follows from the lower semicontinuity of $\KL$ as a consequence of Corollary~\ref{cor:compact level sets}.
\end{proof}

The following result is the mathematically precise statement of our Informal Result~\ref{res:informal}.
\begin{corollary} 
\label{cor:Fisher LDP}
The ergodic average empirical measure $\bar{\rho}_T\super{N}$ associated to the SDE~\eqref{eq:SDE} satisfies the large-deviation principle as first $N\to\infty$ and then $T\to\infty$ with rate functional $\tfrac14 I^k_\Stein$, i.e.
\begin{equation*} 
    \PP\super{N}\big(\bar{\rho}\super{N}_T \in \Bcal_\epsilon(\bar{\rho})\big)\sim \exp\big(-\mfrac14 NT I^k_\Stein(\bar\rho)\big).
\end{equation*}
\label{cor:NT Stein-Fisher LDP}
\end{corollary}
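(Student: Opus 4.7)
The plan is to combine the fixed-time large-deviation principle from Corollary~\ref{cor:ergodic LDP fixed T} with the $\Gamma$-convergence $\tfrac{1}{T}\bar\I_T \to \tfrac{1}{4} I^k_\Stein$ established in Theorem~\ref{th:large time Gamma}. This is the standard composition scheme for iterated large-deviation limits; the additional analytic ingredient required beyond the two results just cited is equi-coercivity of the rescaled rate functionals $\tfrac1T\bar\I_T$ in the topology~\eqref{eq:ws topology}, which is encoded in the compactness of sublevel sets of $I^k_\Stein$ provided by Corollary~\ref{cor:compact level sets}.

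For the upper bound on a closed ball $\overline{\Bcal_\epsilon(\bar\rho)}$ (with respect to the topology~\eqref{eq:ws topology}), Corollary~\ref{cor:ergodic LDP fixed T} yields
\[
\limsup_{N\to\infty}\tfrac{1}{NT}\log\PP\big(\bar\rho_T\super{N}\in\overline{\Bcal_\epsilon(\bar\rho)}\big)\leq-\inf_{\hat\rho\in\overline{\Bcal_\epsilon(\bar\rho)}}\tfrac{1}{T}\bar\I_T(\hat\rho).
\]
Choosing a near-minimising sequence $\hat\rho_T\in\overline{\Bcal_\epsilon(\bar\rho)}$, either $\tfrac{1}{T}\bar\I_T(\hat\rho_T)\to\infty$ and there is nothing to prove, or Corollary~\ref{cor:compact level sets} provides a convergent subsequence $\hat\rho_T\to\hat\rho_\infty\in\overline{\Bcal_\epsilon(\bar\rho)}$. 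The $\Gamma$-liminf inequality~\eqref{eq:Gamma lower bound} then yields $\liminf_T\tfrac{1}{T}\bar\I_T(\hat\rho_T)\geq\tfrac{1}{4}I^k_\Stein(\hat\rho_\infty)\geq\inf_{\overline{\Bcal_\epsilon(\bar\rho)}}\tfrac{1}{4}I^k_\Stein$. Sending $\epsilon\to 0$ and using the lower semicontinuity of $I^k_\Stein$, which is inherited from its representation~\eqref{eq:Fisher equalities} as a squared cotangent norm, gives the desired upper bound $-\tfrac{1}{4}I^k_\Stein(\bar\rho)$.

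For the lower bound on the open ball $\Bcal_\epsilon(\bar\rho)$, Corollary~\ref{cor:ergodic LDP fixed T} provides
\[
\liminf_{N\to\infty}\tfrac{1}{NT}\log\PP\big(\bar\rho_T\super{N}\in\Bcal_\epsilon(\bar\rho)\big)\geq-\inf_{\hat\rho\in\Bcal_\epsilon(\bar\rho)}\tfrac{1}{T}\bar\I_T(\hat\rho),
\]
while the $\Gamma$-limsup inequality~\eqref{eq:Gamma upper bound} supplies a recovery sequence $\bar\rho_T\to\bar\rho$ with $\limsup_T\tfrac{1}{T}\bar\I_T(\bar\rho_T)\leq\tfrac{1}{4}I^k_\Stein(\bar\rho)$. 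Since $\bar\rho_T$ eventually enters $\Bcal_\epsilon(\bar\rho)$, this yields $\liminf_T\inf_{\Bcal_\epsilon(\bar\rho)}\tfrac{1}{T}\bar\I_T\leq\tfrac{1}{4}I^k_\Stein(\bar\rho)$, completing the matching lower bound. Noting that the initial condition contributes only an $O(1/T)$ correction (as already observed in the proof of Theorem~\ref{th:large time Gamma}) closes the argument.

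The principal technical obstacle is reconciling the weak topology~\eqref{eq:ws topology} in which $\Gamma$-convergence is formulated with the Skorokhod-type path-space topology underlying Corollary~\ref{cor:ergodic LDP fixed T}: one must check that balls $\Bcal_\epsilon(\bar\rho)$ are sufficiently regular (measurable, with non-trivial interior and appropriate boundary behaviour) for the LDP to apply, and that the averaging map $\rho\mapsto\tfrac1T\int_0^T\rho_t\,\dd t$ is continuous between the relevant topologies. Once these compatibilities are in place, the combination of the LDP, $\Gamma$-convergence, and equi-coercivity is the classical machinery of~\cite[Sec.~4]{Bertinietal2007}.
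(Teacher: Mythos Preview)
Your proposal is correct and follows exactly the route the paper takes: the paper does not give an explicit proof of this corollary but simply notes (just before Theorem~\ref{th:large time Gamma}) that the large-deviation principle of Corollary~\ref{cor:ergodic LDP fixed T} together with the $\Gamma$-convergence of Theorem~\ref{th:large time Gamma} yields the joint large-deviation principle via the general machinery of~\cite[Sec.~4]{Bertinietal2007}. Your writeup spells out the standard upper/lower bound argument behind that citation, and your caveats about topology compatibility are appropriate given the paper's overall level of formality.
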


\subsection{Comparing the Stein-Fisher information for different kernels}

\label{sec:comparing fisher}

Corollary \ref{cor:Fisher LDP} motivates using the Stein-Fisher information $I^k_{\mathrm{Stein}}$ for a principled choice of the kernel $k$ (greater values of $I^k_{\mathrm{Stein}}$ promise faster convergence). As stated in Proposition \ref{prop:Fisher comparison}, the comparison between $I^{k_1}_{\mathrm{Stein}}$ and $I^{k_2}_{\mathrm{Stein}}$ can be made on the basis of the RKHSs $\mathcal{H}_{k_1}$ and $\mathcal{H}_{k_2}$. Here we provide the proof based on the duality relations established in Section \ref{subsec:cotangent}.

\begin{proof}[Proof of Proposition \ref{prop:Fisher comparison}]
In this proof, we use the notation $T_\rho M_1$ and $T_\rho M_2$ to distinguish the tangent spaces induced by $k_1$ and $k_2$, respectively, and employ a similar convention for the cotangent spaces. 
We first show that \ref{it:RKHS balls}.) implies \ref{it:Fish comparison}.):
By Remark \ref{rem:Fisher cotangent}, it is sufficient to show that $T^*_\rho M_2 \subset T^*_\rho M_1$, with 
$$
\left\Vert \phi \right\Vert^2_{T_\rho^*M_1} \le \left\Vert \phi \right\Vert^2_{T_\rho^*M_2},
$$
for all $\phi \in T_\rho^*M_2$. Now, for $\phi \in C_c^\infty(\mathbb{R}^d)$, $\rho \in M$, and $i \in \{1,2\}$, we see that
\begin{equation}
    \Vert \phi \Vert_{T_\rho^*M_i} = \sup_{0 \neq \psi \in T_{\rho} M_i} \frac{\tensor[_{T_\rho^*M_i}]{\langle \phi,\psi\rangle}{_{T_\rho M_i}}}{\Vert \psi \Vert_{T_\rho M_i} }
    = \sup_{v \in \overline{\T_{k_i,\rho} \nabla C_c^{\infty}(\mathbb{R}^d)}^{\mathcal{H}^d_{k_i}}} \frac{\int_{\mathbb{R}^d} v \cdot \nabla \phi \, \mathrm{d}\rho}{\Vert v \Vert_{\mathcal{H}^d_{k_i}}} = \sup_{v \in \mathcal{H}^d_{k_i}} \frac{\int_{\mathbb{R}^d} v \cdot \nabla \phi \, \mathrm{d}\rho}{\Vert v \Vert_{\mathcal{H}^d_{k_i}}}, 
\end{equation}
where the first equality follows from the duality between $T_\rho M_i$ and $T^*_\rho M_i$, the second equality follows directly from Definition \ref{def:Riemann geometry}, and the third equality is a consequence of the Helmholtz decomposition in Proposition \ref{prop:helmholtz}. The claim now follows from the fact that by construction, $C_c^{\infty}(\mathbb{R}^d)$ is dense in $T_\rho^*M_1$ and $T_\rho^*M_2$.

To show that \ref{it:Fish comparison}.) implies \ref{it:RKHS balls}.), assume that $v = \T_{k,\rho}\nabla \phi$ and $\xi + \nabla \cdot (\rho v) = 0$. We then have
\begin{equation}
    \Vert v \Vert_{\mathcal{H}_k^d} = \Vert \xi \Vert_{T_\rho M} = \sup_{\psi \in T^*_\rho M}
    \frac{\tensor[_{T_\rho^*M_i}]{\langle \xi,\psi\rangle}{_{T_\rho M_i}}}{\Vert \psi \Vert_{T^*_\rho M_i} }
    = \sup_{\psi \in T_\rho^* M} \frac{\int_{\mathbb{R}^d} v \cdot \nabla \psi \, \mathrm{d}\rho}{\Vert \psi \Vert_{T_\rho^*M}},
\end{equation}
and the statement follows by similar arguments as above.
\end{proof}

To conclude this section, we cite Lemma 42 from \cite{duncan2019geometry}, illustrating some consequences of Corollary \ref{cor:NT Stein-Fisher LDP} and Proposition \ref{prop:Fisher comparison}: 
\begin{example}
	\label{lem:exp p kernel}
	Consider the positive definite kernels $k_{p,\sigma}:\mathbb{R}^d \times \mathbb{R}^d \rightarrow \mathbb{R}$, defined via
\begin{equation}
	\label{eq:p kernel}
	k_{p,\sigma}(x,y) = \exp\left(-\frac{\vert x - y \vert^p}{\sigma^p}\right),
\end{equation}
where $p \in (0,2]$ is a smoothness parameter, and $\sigma > 0$ controls the kernel width. Then, following \cite[Lemma 42]{duncan2019geometry}, $k_{p,\sigma}$ is integrally strictly positive definite (see Assumption \ref{ass:ISPD}). Furthermore, the associated RKHSs  are nested according to the regularity of the corresponding kernels: If $p > q$, then $\mathcal{H}_{k_{p,\sigma_p} }\subset \mathcal{H}_{k_{q,\sigma_q}}$ (with strict inclusion), for all $\sigma_p,\sigma_q > 0$. The inclusion of unit balls, that is, 
\begin{equation}
\Vert \phi \Vert_{\mathcal{H}_{k_{q,\sigma_q}}} \le \Vert \phi \Vert_{\mathcal{H}_{k_{p,\sigma_p}}}, \qquad \qquad \phi \in \mathcal{H}_{k_{p,\sigma_p}},
\end{equation}
relevant for Proposition \ref{prop:Fisher comparison} can moreover be obtained by a suitable choice of the kernel widths $\sigma_q$ and $\sigma_p$.
Consequently, combining Proposition \ref{prop:Fisher comparison} and Corollary \ref{cor:Fisher LDP}, kernels with lower regularity are expected to incur faster convergence of the ergodic limit \eqref{eq:ergodic limit} for the SDE system \eqref{eq:SDE}, asymptotically in the regime when $N$ and $T$ are large. The performance of numerical algorithms based on different choices of $k$ is not straightforward, as the stiffness of the SDE (and corresponding time discretisations) have to be taken into account. To illustrate our findings, we instead consider fixed points of the ODE system \eqref{eq:ODE} obtained for $ t \rightarrow \infty$, see Figure \ref{fig:fix point}. The approximation of the target $\pi$ obtained using the low-regularity Laplace kernel ($p=1$) appears to be more regular and more evenly spaced in comparison with the approximation obtained using the high-regularity squared exponential kernel ($p=2$). 

On a heuristic level, we can connect these observations to our results as follows: The large-deviation functionals $\I_{\lbrack0,T\rbrack}$ and  $\bar{\I}_T$ quantify the speed of convergence as solutions of the SDE system \eqref{eq:SDE} approach solutions of the Stein PDE \eqref{eq:Stein pde} as $N \rightarrow \infty$.
Recall from Section \ref{subsec:statphys} that the SDE \eqref{eq:SDE} preserves the extended target $\bar{\pi}$ for \emph{any $N \in \mathbb{N}$}, and that solutions of the ODE system \eqref{eq:ODE} solve the Stein PDE \eqref{eq:Stein pde} in a weak sense. Therefore, our results suggests that the ODE \eqref{eq:ODE} provides approximations of the SDE \eqref{eq:SDE} (and hence, the target $\pi$) that are expected to be more satisfactory if  $\I_{\lbrack0,T\rbrack}$ and  $\bar{\I}_T$ are large. We stress that this line of argument is heuristic and should be treated as a conjecture, since our rigorous results concern the SDE \eqref{eq:SDE} and not the ODE \eqref{eq:ODE}. Understanding the finite-particle regime of the ODE \eqref{eq:ODE}, and possible connections to large-deviation principle remains an interesting subject for future research. 
\end{example}

\section{Conclusion and outlook}
\label{sec:examples}

In this paper, we have drawn connections between the \emph{variational inference}-type ODE \eqref{eq:ODE} 
and the
\emph{Markov Chain Monte Carlo}-type SDE \eqref{eq:SDE} based on  gradient flow structures and large-deviation functionals. Extending previous works, our results take a step towards a quantitative understanding of the mean-field limit of SVGD. In particular, in the regime when $N$ and $t$ are large, the convergence towards the target $\pi$ is governed by the Stein-Fisher information (or kernelised Stein discrepancy). The relationship between variational inference, Markov Chain Monte Carlo and ideas from statistical physics promises to be a fruitful direction for future research beyond SVGD. As our results are asymptotic, quantifying the accuracy of SVGD for the practically relevant scenario of small $N$ and $t$ remains a challenging and open problem.

\begin{figure}
	\centering
	\begin{subfigure}{.5\textwidth}
		\centering
		\includegraphics[width=1.0\linewidth]{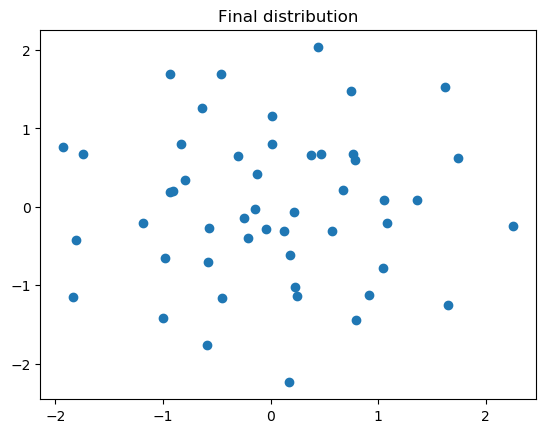}
		\caption{squared exponential kernel, $p=2$, $\sigma = 1$.}
		\label{fig:sub1}
	\end{subfigure}%
	\begin{subfigure}{.5\textwidth}
		\centering
		\includegraphics[width=1.0\linewidth]{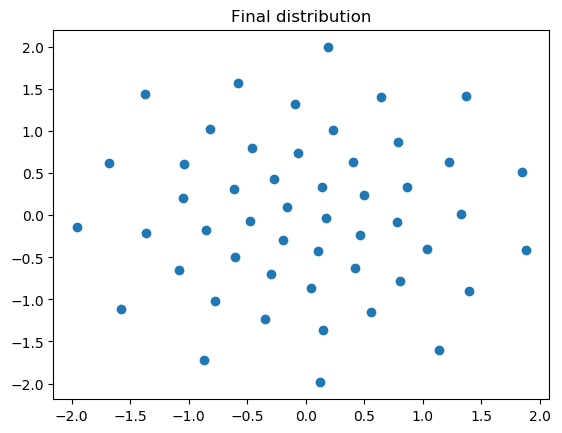}
		\caption{Laplace kernel, $p=1$, $\sigma = 1$.}
		\label{fig:sub2}
	\end{subfigure}
	\caption{Approximations of a two-dimensional standard normal distribution using deterministic SVGD based on the  ODE \eqref{eq:ODE} and two different positive definite kernels $k_{p,\sigma}$.}
	\label{fig:fix point}
\end{figure}

\section*{Acknowledgements}
This research has been funded by 
Deutsche Forschungsgemeinschaft (DFG) through the grant 
CRC 1114 \lq Scaling Cascades in Complex Systems\rq \,(projects A02 and C08, project number 235221301).

\appendix

\section{Proofs for Section \ref{sec:LD}}
\label{app:proofs}

\begin{proof}[Proof of Lemma~\ref{lem:generator}]
We shall only prove the claim for a large class of test functions of the form:
\begin{equation}
  F(\rho) = \phi \big( \langle p_1,\rho \rangle , \ldots, \langle p_L, \rho \rangle \big),
\label{eq:test functions}
\end{equation}
for arbitrary $L\in\Nb$, $p_1,\hdots,p_L\in C_b^2(\R^d)$ and $\phi\in C_b^2(\R^L)$, where $\langle p_i, \rho \rangle = \int_{\mathbb{R}^d} p_i \, \mathrm{d}\rho$. Applied to the empirical measure~\eqref{eq:empirical measure}, these test functions become:
\begin{equation*}
    F(\rho_t^N) = \phi\left( \frac{1}{N} \sum_{i=1}^N p_1 (X_t^i), \ldots, \frac{1}{N}\sum_{i=1}^N p_L(X_t^i) \right) =: G(X_t^1, \ldots, X_t^N)=:G(\bar{X}_t).
\end{equation*}
A straightforward application of It{\^o}'s Lemma to the process~\eqref{eq:SDE} gives, abbreviating the martingale $\mathrm{d}M_t:=\nabla G(\bar{X}_t)^T\sqrt{2K(\bar{X}_t)}\,\mathrm{d}W_t$,
\begin{align*}
	\mathrm{d}G(\bar{X}_t) & = \sum_{i=1}^N \nabla_{X^i_t}G(\bar{X}_t) \cdot A_{i}(\bar{X}_t)\,\mathrm{d}t + \mfrac1N\sum_{i,j=1}^N k(X^i_t,X^j_t)\nabla^2_{ij}G(\bar{X}_t)\,\mathrm{d}t + \mathrm{d}M_t\\
	& = \mfrac{1}{N} \sum_{i=1}^N \sum_{l=1}^L \partial_l \phi \, \nabla p_l(X_t^i)\cdot A_{i}(\bar{X}_t) \,\mathrm{d}t
 + \mfrac{1}{N^3}\sum_{i,j=1}^N \sum_{l,m = 1}^L \partial^2_{lm} \phi  \, \nabla p_l(X_t^i)\cdot\nabla p_m(X_t^j) k(X_t^i,X_t^j) \,\mathrm{d}t \\
	& \qquad + \mfrac{1}{N^2} \sum_{l=1}^L \sum_{i=1}^N\partial_l \phi \, \Delta p_l(X_t^i) k(X_t^i,X_t^i) \, \mathrm{d}t  + \mathrm{d} M_t,
\end{align*}
denoting 
\begin{equation*}
    A_i(\bar{x}) = \mfrac1N \sum_{j=1}^N \left( -k(x_i,x_j) \nabla V(x_j) + \nabla_{x_j} k(x_i,x_j)\right), \qquad \bar{x} = (x_1,\ldots,x_N). 
\end{equation*}
Notice that
\begin{equation*}
    \frac{\delta F}{\delta \rho}(\rho)(x) = \sum_{l=1}^L \partial_l \phi \, p_l(x), \qquad \frac{\delta^2 F}{\delta \rho^2}(\rho)(x,y) = \sum_{l,m=1}^L \partial^2_{lm} \phi \, p_l(x) p_m(y).
\end{equation*}
By taking the expectation, the martingale term drops out, so that
\begin{align*}
	\frac{\mathrm{d}}{\mathrm{d}t}\mathbb{E} F(\rho_t^N) & = \mathbb{E} \left[ \iint\! \big\lbrack - k(x,y) \nabla V(y) + \nabla_{y} k(x,y) \big\rbrack\cdot \nabla_x\left( \frac{\delta F}{\delta \rho}(\rho_t^N)\right) (x) \, \rho_t^N (\mathrm{d}x)\,\rho_t^N(\mathrm{d}y)\right]
	\\
	&\quad + \mfrac{1}{N}\mathbb{E} \Big[ \iint\! k(x,y) \nabla_x\cdot \nabla_y\left(  \frac{\delta^2 F }{\delta \rho^2}(\rho_t^N)\right)(x,y) \rho_t^N(\mathrm{d}x) \rho_t^N(\mathrm{d}y) \\
	&\hspace{6cm} + \int\! k(x,x) \Delta \! \left( \frac{\delta F}{\delta \rho}(\rho_t^N)\right)\!(x)\,\rho_t^N(\mathrm{d}x) \Big],
\end{align*}
which proves the claim (for test functions of the form~\eqref{eq:test functions}).
\end{proof}

\begin{proof}[Proof of Lemma~\ref{lem:limit Hamiltonian}]
First notice that
\begin{align*}
    \frac{\delta}{\delta \rho} \left(e^{N G} \right)(\rho)(x) 
    &=
    N e^{NG(\rho)} \frac{\delta G}{\delta \rho} (\rho) (x),
    \qquad\text{and}\\
    \frac{\delta^2}{\delta \rho^2} \left( e^{N G}\right)(\rho) (x,y)
    &=
    N^2 e^{N G(\rho)} \frac{\delta G}{\delta \rho} (\rho) (x) \frac{\delta G}{\delta \rho} (\rho) (y) + N e^{NG(\rho)} \frac{\delta^2 G}{\delta \rho^2}(\rho) (x,y).
\end{align*}
Therefore
\begin{subequations}
\begin{align*}
	(\Ham\super{N} G)(\rho)  &\stackrel{\eqref{eq:nonlinear semigroup}}{=} \iint_{\R^d\times\R^d}\!\big\lbrack - k(x,y) \nabla V(y) + \nabla_{y} k(x,y) \big\rbrack \cdot\left(\nabla_x\frac{\delta G}{\delta \rho}(\rho)(x)\right) \rho(\mathrm{d}x)\, \rho(\mathrm{d}y) \\
	& + \iint_{\R^d\times\R^d} \! k(x,y) \left(\nabla_x \frac{\delta G}{\delta \rho}(\rho) (x)\right) \cdot \left(\nabla_y \frac{\delta G}{\delta \rho}(\rho) (y)\right) \rho(\dd x)\, \rho(\dd y) + \mathcal{O}(N^{-1}).
\end{align*}
\end{subequations}
Assuming that $\rho$ is regular enough, we can write
\begin{align*}
    \int_{\R^d}\!\big\lbrack - k(x,y) \nabla V(y) + \nabla_{y} k(x,y) \big\rbrack \, \rho(\mathrm{d}y)
    =
    - \int_{\R^d}\! k(x,y) \big\lbrack \nabla V(y) + \nabla \log \rho (y)\rbrack \, \rho(\mathrm{d}y)
    =
    -\int_{\R^d}\! k(x,y)\frac{\delta\KL}{\delta\rho}(\rho)(y)\,\rho(\dd y).
\end{align*}
Then we see that
\begin{align*}
	\Ham  \left(\rho, \frac{\delta G}{\delta \rho} \right) & := \lim_{N \rightarrow \infty} (\Ham\super{N} G)(\rho)  \\
	& = - \iint_{\R^d\times\R^d}\!k(x,y) \Big\lbrack \big(\nabla \frac{\delta \KL}{ \delta \rho} (\rho)(x)\big)  \cdot \big(\nabla\!\frac{\delta G}{\delta \rho}(\rho)(y)\big) + \big(\nabla \frac{\delta G}{\delta \rho}(\rho) (x)\big) \cdot \big(\nabla \frac{\delta G}{\delta \rho}(\rho) (y)\big) \,\Big\rbrack\,\rho(\dd x)\, \rho(\dd y)
	\\
	& = - \left\langle \frac{\delta \KL}{\delta \rho}, \frac{\delta G}{\delta \rho} \right\rangle_{T_\rho^*M} + \left\Vert \frac{\delta F}{\delta \rho}\right\Vert_{T_\rho^*M}^2.
\end{align*}

\end{proof}

\bibliographystyle{abbrv}
\bibliography{refs_final}

\end{document}